\documentclass{article}

% ready for submission
\usepackage{iclr2025_conference,times}
\iclrfinalcopy
%Packages
\usepackage{enumitem}
\usepackage{tabularx}
\usepackage{arydshln}
\usepackage[utf8]{inputenc} % allow utf-8 input
\usepackage[T1]{fontenc}    % use 8-bit T1 fonts
\usepackage[hidelinks]{hyperref}       % hyperlinks
\usepackage{url}            % simple URL typesetting
\usepackage{booktabs}       % professional-quality tables
\usepackage{amsfonts}       % blackboard math symbols
\usepackage{nicefrac}       % compact symbols for 1/2, etc.
\usepackage{microtype}      % microtypography
\usepackage{hyperref}
\usepackage{lipsum}
\usepackage{multirow}
\usepackage{wrapfig}
\usepackage{amsmath,amssymb,amsfonts}
\usepackage{graphicx}
\usepackage{textcomp} 
\usepackage{comment}
\usepackage{xcolor}
\usepackage{amsthm}
\usepackage{tocbibind}  % For adding ToC, LoF, LoT to the ToC
\usepackage{tocloft}    
\newcolumntype{H}{>{\setbox0=\hbox\bgroup}c<{\egroup}@{}}

\usepackage{color,colortbl}  % For coloring table cells

\usepackage{appendix}
\usepackage{makecell}
\usepackage[capitalize]{cleveref} % simplifies referencing
\usepackage{subcaption}
\usepackage{wrapfig} % For wrapping text around the figure
\usepackage{titlesec}

%Definitions
\newtheoremstyle{claudio}% name of the style to be used
  {-0.1em}% measure of space to leave above the theorem. E.g.: 3pt
  {-0.1em}% measure of space to leave below the theorem. E.g.: 3pt
  {}% name of font to use in the body of the theorem
  {}% measure of space to indent
  {\itshape\bfseries}% name of head font
  {.}% punctuation between head and body
  { }% space after theorem head; " " = normal interword space
  {\thmname{#1}\thmnumber{ #2 }\thmnote{(#3)}}% Manually specify head

\theoremstyle{claudio}
\newtheorem{theorem}{Theorem}
\newtheorem{proposition}{Proposition}
\newtheorem{corollary}{Corollary}
\newtheorem{remark}{Remark}

\newcommand{\rk}{\mathrm{rk}}
\newcommand{\Inv}{\mathrm{Inv}}

\newcommand{\ETNN}{\mathrm{ETNN}}
\newcommand{\cX}{\mathcal{X}}
\newcommand{\cS}{\mathcal{S}}
\newcommand{\cH}{\mathcal{H}}
\newcommand{\cT}{\mathcal{T}}
\newcommand{\cI}{\mathcal{I}}
\newcommand{\cY}{\mathcal{Y}}

\newcommand{\cN}{\mathcal{N}}
\newcommand{\cE}{\mathcal{E}}
\newcommand{\cG}{\mathcal{G}}
\newcommand{\cCN}{\mathcal{CN}}
\newcommand{\bx}{\mathbf{x}}
\newcommand{\bv}{\mathbf{v}}
\newcommand{\bm}{\mathbf{m}}
\newcommand{\bb}{\mathbf{b}}
\newcommand{\bh}{\mathbf{h}}
\newcommand{\bO}{\mathbf{O}}
\newcommand{\bbR}{\mathbb{R}}
\newcommand{\notocsection}[1]{%
\refstepcounter{section}%
\section*{\thesection \quad #1}}%
\newcommand{\new}[1]{{\color{black}#1}}

\definecolor{rank0_red}{RGB}{174,65,50}
\definecolor{rank1_blue}{RGB}{108,142,191}
\definecolor{rank2_green}{RGB}{130,179,102}
\definecolor{OliveGreen}{rgb}{0.0, 0.2, 0.13}

% if you need to pass options to natbib, use, e.g.:
%     \PassOptionsToPackage{numbers, compress}{natbib}
% before loading neurips_2023

% to compile a preprint version, e.g., for submission to arXiv, add add the
% [preprint] option:
%     \usepackage[preprint]{neurips_2023}

% to compile a camera-ready version, add the [final] option, e.g.:
%     \usepackage[final]{neurips_2023}

% to avoid loading the natbib package, add option nonatbib:
%    \usepackage[nonatbib]{neurips_2023}

\title{\Large{E($n$) Equivariant Topological Neural  Networks}}

% The \author macro works with any number of authors. There are two commands
% used to separate the names and addresses of multiple authors: \And and \AND.
%
% Using \And between authors leaves it to LaTeX to determine where to break the
% lines. Using \AND forces a line break at that point. So, if LaTeX puts 3 of 4
% authors names on the first line, and the last on the second line, try using
% \AND instead of \And before the third author name.

\author{
    \hspace{-.1cm}Claudio Battiloro\thanks{Equal contribution. Author ordering determined by a random number generator. ${^\dagger}$Corresponding author.\\ \texttt{\{cbattiloro,ekaraismailoglu,mauriciogtec,maudirac\}@hsph.harvard.edu}, \texttt{georgios\_dasoulas@hms.harvard.edu}}$^{\,\;,1,\dagger}$,
    Ege Karaismailoğlu$^{*,1,2}$,
    Mauricio Tec$^{*,1}$,
    George Dasoulas$^{*,1}$,\\
    \textbf{Michelle Audirac}$^{1}$,
    \textbf{Francesca Dominici}$^1$
    \\ 
    $^1$Harvard University, 
    $^2$ETH Zurich 
}

% Macro for rebised text
\newcommand{\rev}[1]{{\color{black}#1}}

\begin{document}

\maketitle

\begin{abstract}
Graph neural networks excel at modeling pairwise interactions, but they cannot flexibly accommodate higher-order interactions and features. Topological deep learning (TDL) has emerged recently as a promising tool for addressing this issue. TDL enables the principled modeling of arbitrary multi-way, hierarchical higher-order interactions by operating on combinatorial topological spaces, such as simplicial or cell complexes, instead of graphs. However, little is known about how to leverage geometric features such as positions and velocities for TDL. This paper introduces E(n)-Equivariant Topological Neural Networks (ETNNs), which are E(n)-equivariant message-passing networks operating on combinatorial complexes, formal objects unifying graphs, hypergraphs, simplicial, path, and cell complexes. ETNNs incorporate geometric node features while respecting rotation, reflection, and translation equivariance. Moreover, being TDL models, ETNNs are natively ready for settings with heterogeneous interactions.  We provide a theoretical analysis to show the improved expressiveness of ETNNs over architectures for geometric graphs. We also show how E(n)-equivariant variants of TDL models can be directly derived from our framework. The broad applicability of ETNNs is demonstrated through two tasks of vastly different scales: i) molecular property prediction on the QM9 benchmark and ii) land-use regression for hyper-local estimation of air pollution with multi-resolution irregular geospatial data.  The results indicate that ETNNs are an effective tool for learning from diverse types of richly structured data, as they match or surpass SotA equivariant TDL models with a significantly smaller computational burden, thus highlighting the benefits of a principled geometric inductive bias. Our implementation of ETNNs can be found 
\href{https://github.com/NSAPH-Projects/topological-equivariant-networks}{here}.%
%\href{https://anonymous.4open.science/r/topological-equivariant-networks-5062/}{here}.
\end{abstract}

\notocsection{Introduction}
Graph Neural Networks (GNNs) are employed across various fields, including computational chemistry \citep{gilmer2017neural}, physics simulations \citep{shlomi2021gnnphysics}, and social networks \citep{xia2021socialgnn}, to highlight a few. Their effectiveness stems from merging neural network adaptability with insights about data relationships through the graph topology. Research on GNNs covers a broad range of categories, mainly divided into spectral \citep{Bruna19} and non-spectral \citep{gori2005new}. In both cases, GNNs aim to learn representations for node (and/or) edge attributes through local aggregation driven by the graph topology, essentially defining message passing networks (MPN) \citep{gilmer2017mp}. Utilizing this capability, GNNs have delivered significant achievements in tasks like node and graph classification \citep{kipf2016semi}, link prediction \citep{zhang2018link}, and specific challenges such as protein folding \citep{jumper2021highly}.

\begin{figure}[tbp]
    \centering
    \includegraphics[width=1\textwidth]{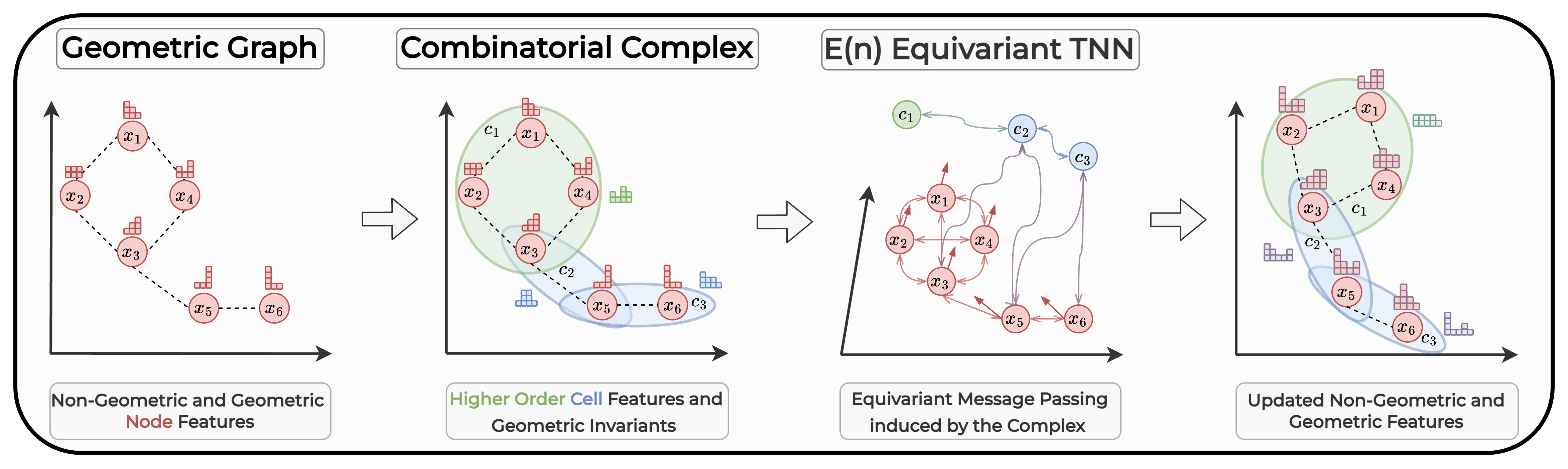}
    \caption{\textbf{Overview of the $E(n)$-Equivariant Topological Neural Networks  framework.} An input geometric graph/point cloud with (possibly) non-geometric features is provided. A combinatorial complex, whose elements are called cells, is constructed from the input geometric graph/point cloud to encode higher-order hierarchical interactions, based on topological or domain-specific considerations. If available, higher-order cell features can be injected. Geometric invariants for the cells (e.g. pairwise distances, Hausdorff distances, volumes, ...) are computed. Finally, $E(n)$ Equivariant Topological Neural Networks update both geometric and non-geometric features to improve a downstream task while respecting rotation, reflection, and translation equivariance.}
\end{figure}

However, GNNs usually struggle to model higher-order and multi-way interactions that the intrinsically pairwise structure of graphs cannot capture. 
Consequently, the field of \textit{Topological Deep Learning} (TDL) \citep{barbarossa2020topological,hajij2023topological, papamarkou2024position} started to gain significant interest as it seeks to address these limitations. Although the general scope of TDL is designing models for data defined over a variety of topological spaces, the main focus is usually on combinatorial topological spaces (CTS), i.e., topological spaces that can be described combinatorially, such as cell complexes \citep{grady2010discrete,bodnar2023thesis,battilorothesis}. Leveraging this combinatorial structure, it is possible to design more sophisticated adjacency schemes among \textit{cells}, i.e., single nodes or groups of nodes, than the usual adjacency in graphs. Message passing networks over CTS have been proven to be more expressive \citep{bodnar2021weisfeiler,battiloro2023generalized,bodnarcwnet, truong2024wlpath, jogl2023expressivitypreserving} (using the  Weisfeiler-Lehman test criterion \citep{xu2018powerful}), more suitable to handle long-range interactions \citep{giusti2023cin,giusti2023cell}, and more effective to handle heterophilic data settings \citep{sheaf2022, hansen2020sheaf} than classical GNNs. Cell complexes are powerful objects to model certain classes of higher-order interactions along with a hierarchical structure among them, while general combinatorial objects like hypergraphs are useful to model arbitrary higher-order interactions, but without any hierarchy. \citet{hajij2023topological} introduced the notion of \textit{Combinatorial Complex} (CC) to provide a formal, flexible, yet simple structure that pseudo-generalizes simplicial/path/cell complexes and hypergraphs, thus enabling the design of arbitrary higher-order interactions along with a hierarchical structure. They also introduced message passing networks over combinatorial complexes.

Little is known about how to integrate and leverage geometric information of the data in such TDL models. In this context, nodes are embedded in a manifold or, in general, in a metric space, and have geometric features, such as positions or velocities. In the case of geometric graphs, a common practice is designing message passing networks able to respect some suitable symmetry, i.e., being equivariant w.r.t. the action of some symmetry group. Here we focus on $E(n)$-equivariant neural networks, i.e., networks being equivariant w.r.t. rotations, reflections, and translations.  The $E(n)$-Equivariant Graph Neural Network (EGNN) from \citet{satorras2021egnn} is one of the simplest yet powerful equivariant models for graph-structured data. \citet{eijkelboom2023empsn} generalized the approach from \citet{satorras2021egnn} to simplicial complexes (a particular instance of cell complexes), introducing $E(n)$-Equivariant Message Passing Simplicial Networks (EMPSNs), able to handle graph or simplicial-structured data. However, as we will show in the paper, both graphs and simplicial complexes are combinatorial objects often not flexible enough to model arbitrary higher-order interactions, and a general framework for designing
equivariant neural networks over CTS is missing. Appendix \ref{sec:related_works} presents a review of related works.

\noindent \textbf{Contribution.} We generalize the approaches from \citet{satorras2021egnn} and \citet{eijkelboom2023empsn} to design \textit{$E(n)$-Equivariant Topological Neural Networks} (ETNNs), i.e., $E(n)$-Equivariant Message Passing Networks over Combinatorial Complexes. CCs represent a pseudo-generalization of graphs, simplicial complexes, path complexes, cell complexes, and hypergraphs, therefore ETNNs represent a legitimate framework for designing arbitrary $E(n)$-Equivariant Message Passing Networks over combinatorial topological spaces or, more in general, over combinatorial objects used to model interactions among entities, such as hierarchical graphs.  An ETNN layer is made of a message passing round over the CC comprising scalar invariants of the geometric features and a learnable update of the geometric features added up in the current directions. As such, ETNNs are a scalarization method \citep{han2022geometrically, han2024survey}. We study the improved expressiveness of ETNNs by evaluating their ability to distinguish $k$-distinct geometric graphs \citep{joshi2023gwl}, showing that they are at least as expressive as other well-known scalarization-based $E(n)$ equivariant graph neural networks. \new{Furthermore, we empirically show that ETNNs are also able to distinguish most of the counterexample structures from \citep{pozdnyakov2020incompleteness, joshi2023gwl} that are indistinguishable using $k$-body scalarization.}  Generalizing \citep{hajij2023topological} to the geometric setting, we
show how several $E(n)$-equivariant variants of TDL models can be directly derived
from our framework. \new{As an additional contribution, we test ETNNs in applications where domain knowledge induces combinatorial complex structures that are more natural and parsimonious than the graphs or higher-order structures that are usually employed to model the same phenomena.}  Specifically, we show the efficacy of ETNNs on i) molecular properties prediction on the QM9 benchmark, and on ii) land-use regression for hyper-local estimation of air pollution with multi-resolution irregular geospatial data, representing also a completely new challenging benchmark for TDL models. \rev{Crucially, thanks to the flexible and principled inductive bias induced by the underlying combinatorial complexes, ETNNs match or surpass the results of EMPSNs with less than half of the memory usage and almost half of the runtime per epoch.}  The obtained results jointly validate ETNNs, our CC-based approach, and, in general, the injection of geometric information in TDL models. 

\notocsection{Combinatorial Complexes and Equivariance}\label{cell_section}
We review the fundamentals of {\em combinatorial complexes} (CCs), fairly general objects providing an effective and flexible way to represent higher-order multi-way interactions.  Such structures generalize graphs, simplicial complexes, path complexes, cell complexes, and hypergraphs. We first define our domain of interest -\textit{CCs}-, the relations among its elements -\textit{neighborhood functions}-, the data defined on it -\textit{topological signals}-, and a class of deep networks operating on it -\textit{CC message passing networks} (CCMPNs)-\citep{hajij2023topological}. We finally introduce the formal notion of equivariance, showing some properties of CCMPNs.

\textbf{Combinatorial Complex.} A \emph{combinatorial complex} (CC) is a triple $(\cS, \cX, \rk)$ consisting of a set $\cS$, a subset $\cX$ of $\mathcal{P}(S) \backslash\{\emptyset\}$, and a function $\rk: \cX \rightarrow \mathbb{Z}_{\geq 0}$ with the following properties:
\begin{enumerate}[leftmargin=*]
    \item for all $s \in \cS,\{s\} \in \cX$;
    \item the function $\rk$ is order-preserving, i.e., if $x, y \in \cX$ satisfy $x \subseteq y$, then $\rk(x) \leq$ $\rk(y)$.
\end{enumerate}
The elements of $\cS$ and $\cX$ are nodes and cells, respectively, and $\rk(\cdot)$ is the rank function. $\cX$ simplifies notation for $(\cS, \cX, \rk)$. The rank function flexibly organizes the cells hierarchically. Therefore, CCs can model both hierarchical (as in simplicial and cell complexes) and set-type (as in hypergraphs) relations. For each singleton cell $\{s\}$ in a CC, we set $\rk(\{s\})=0$, aligning CCs with simplicial and cell complexes. The rank of a cell $x \in \cX$ is $k:=\rk(x)$, and we call it a $k$-cell. The dimension $\mathrm{dim}(\cX)$ of a CC is the maximal rank among its cells. %A cell of rank $k$ is called a $k$-cell and is denoted by $x^k$. 
%Finally, we denote the set of $k$-cells in a CC $\cX$ as $\cD_k = \{x \in \cX: \rk(x) = k\}$, and $|\cD_k| = N_k$.

\textbf{Neighbhorhood functions.} Combinatorial Complexes can be equipped with a notion of neighborhood among cells.  In particular, we can define a ``neighborhood" function $\cN: \cX \rightarrow \mathcal{P}(\cX)$ on $\cX$ as a function that assigns to each cell $x$ in $\cX$ a  collection of ``neighbor cells" $\cN(x) \subset \cX$. Usually, two main types of neighborhood functions  are considered \citep{hajij2023topological}: \emph{adjacencies}, in which all of the elements of $\cN(x)$ have the same rank as $x$, and \emph{incidences}, in which its elements are from a higher or lower rank. A generally applicable choice for these functions is as follows. First,  up/down incidences $\cN_{I,\uparrow}$ and $\cN_{I,\downarrow}$ are defined by the \emph{containment} criterion
\begin{equation}\label{eq:incidences}
     \cN_{I,\uparrow}(x) = \{y \in \cX| \rk(y) = \rk(x)+1, x \subset y\}, \quad \cN_{I,\downarrow}(x) =\{y \in \cX| \rk(y) = \rk(x)-1, y \subset x\} .
\end{equation}
Therefore,  a $k+1$-cell $y$ is a neighbor of a $k$-cell $x$ w.r.t. to $\cN_{I,\uparrow}$ if $x$ is contained in $y$; analogously, a $k-1$-cell $y$ is a neighbor of a $k$-cell $x$ w.r.t. to $\cN_{I,\downarrow}$ if $y$ is contained in $x$.
The incidences induce up/down adjacencies $\cN_{A,\uparrow}$ and $\cN_{A,\downarrow}$ by the \emph{common neighbor} criterion
\begin{align}\label{eq:adjacencies}
     &\cN_{A,\uparrow}(x) = \{y \in \cX | \rk(y)=\rk(x), \exists z \in \cX: \rk(z)=\rk(x)-1, z \subset y, \text { and } z \subset x\}, \nonumber \\ 
     &\cN_{A,\downarrow}(x) = \{y \in \cX | \rk(y)=\rk(x), \exists z \in \cX: \rk(z)=\rk(x)+1, y \subset z, \text { and } x \subset z\}.
\end{align}
Therefore,  a $k$-cell $y$ is a neighbor of a $k$-cell $x$ w.r.t. to $\cN_{A,\uparrow}$ if they are both contained in a $k+1$-cell $z$; analogously, a $k$-cell $y$ is a neighbor of a $k$-cell $x$ w.r.t. to $\cN_{A,\downarrow}$ if they both contain a $k-1$-cell $z$.

While this choice of neighborhood functions applies to any CC, other neighborhood functions are more natural in specific applications. This scenario will be illustrated in \cref{sec:data_modeling}.

\textbf{Topological signals.} A topological signal (or feature) over $\cX$  is  a mapping $f: \cX \rightarrow \bbR$ from the set of cells $\cX$  to real numbers. Therefore, the feature vectors $\bh_x \in \bbR^F$  and $\bh_y \in \bbR^F$ of cells $x$ and $y$ are a collection of $F$ CC signals, i.e., $\bh_x = [f_1(x),\dots,f_F(x)]$ and $\bh_y = [f_1(y),\dots,f_F(y)]$.

%A $d$-dimensional signal or feature over a CC $\cX$ is a vector-valued observation for each cell. It is formally defined as a function $\bh: \cX \to \bbR^d$. We will use the shorthand notation $\bh_x=h(x)$. We also refer to signals as feature vectors in the context of neural networks.

% is defined as a mapping $f: \cX \rightarrow \bbR$ from the set of cells $\cX$  to real numbers. Therefore, the feature vectors $\bh_x \in \bbR^F$  and $\bh_y \in \bbR^F$ of cells $x$ and $y$ are a collection of $F$ CC signals, i.e., $\bh_x = [f_1(x),\dots,f_F(x)]$ and $\bh_y = [f_1(y),\dots,f_F(y)]$. 

% A CC signal over $\cX$ generalizes the notion of a
% is defined as a mapping $f: \cX \rightarrow \bbR$ from the set of cells $\cX$  to real numbers. Therefore, the feature vectors $\bh_x \in \bbR^F$  and $\bh_y \in \bbR^F$ of cells $x$ and $y$ are a collection of $F$ CC signals, i.e., $\bh_x = [f_1(x),\dots,f_F(x)]$ and $\bh_y = [f_1(y),\dots,f_F(y)]$. 

\textbf{Message passing networks over CCs.}  Let $\cX$ and $\cCN$ be a CC and collection \rev{, i.e., a set,} of neighborhood functions on it. The $l$-th layer of a CC Message Passing Network (CCMPN) updates the embedding $\bh^l_x$ of cell $x$ as
\begin{equation}\label{eq:message_passing}
    \bh_x^{l+1} = \beta \left(\bh^l_x, \bigotimes_{\cN \in \cCN}\bigoplus_{y \in \cN(x)} \psi_{\cN,\rk(x)}\left(\bh^l_x,\bh^l_y\right)\right),
\end{equation}
where $\bh_x^{0}:=\bh_x$ are the initial features, $\bigoplus$ is an intra-neighborhood permutation invariant aggregator, $\bigotimes$ is an inter-neighborhood (possibly) permutation invariant aggregator, and the rank- and neighborhood-dependent message functions $\psi_{\cN,\rk(x)}$ and the update function $\beta$ are learnable functions. In other words, the embedding of a cell is updated in a learnable fashion through aggregated messages with its neighboring cells over a set of neighborhoods, generalizing what has been done with Graph MPN \citep{gilmer2017mp}. Interestingly. the message functions can be customized to incorporate additional information, such as the orientation of the cells \citep{barbarossa2020topological,sardellitti2022cell} (if available), or the embeddings of "common" cells (if present) like $x \cup y$ or $x \cap y$. Finally, please notice that CCMPN as in \eqref{eq:message_passing} are inherently able to model \textit{heterogeneous} settings, as two cells can be neighbors in multiple neighborhoods, but messages among them will be dependent on the neighborhood itself. Thus, this property allows us to model different relation types through different (overlapping) neighborhoods. In Appendix \ref{sec:cc_to_cts}, we show how to model graphs, simplicial complexes, cell complexes, and hypergraphs through the CC framework, deriving the corresponding message passing architectures from the CCMPN in \eqref{eq:message_passing}.

\textbf{Equivariances of CCMPNs.} Symmetries are framed as groups \citep{bronstein2021geometric}. Let $G$ be a group with an action $a:G \times \cY \rightarrow \cY$ on a set $\cY$. Given a function $e: \cY \rightarrow \cY$, it is said to be equivariant w.r.t. the action of $G$ if, for all $x \in \cX$ and $g \in G$, it holds
\begin{equation}\label{eq:equivariance}
    e(a(g,x)) = a(g, e(x)).
\end{equation}
If $e(a(g, x)) = e(x)$, $e$ is said to be invariant w.r.t. the action of $G$.  As Graph MPNs \citep{gilmer2017mp}, CCMPNs are permutation equivariant too \citep{hajij2023topological}, i.e., they are equivariant w.r.t. the action of the symmetric group $\mathrm{Sym}(\cX)$ on  $\cX$. In other words, CCMPNs are equivariant to the relabeling of cells in the complex. 

\notocsection{E(n) Equivariant Topological Neural Networks}
Let us now introduce the setting in which nodes (0-cells) are embedded in some Euclidean space, i.e., they come with both non-geometric and geometric features, such as positions or velocities. In the following, we assume to have only positions for the sake of exposition, but velocities can be directly injected into our model \new{as explained in Appendix \ref{sec:velocity_type}}. We denote the non-geometric features with $\bh_x \in \bbR^F$ for cell $x \in \cX$ and the position with $\bx_z \in \bbR^n$ for cell $z \in \cS$.  Since many problems exhibit $n$-dimensional rotation, reflection, and translation symmetries, it is desirable to design models that are equivariant w.r.t. the action of  $E(n)$ \citep{satorras2021egnn,eijkelboom2023empsn,han2022geometrically}. An element of the $E(n)$ group is a tuple $(\bO, \bb)$ consisting of an orthogonal matrix $\bO \in \bbR^{n \times n}$ (the rotation/reflection) and a vector $\bb \in \bbR^n$ (the translation). The action $a$ of $E(n)$ on a vector $\bx \in \bbR^n$ is given by
\begin{equation}\label{eq:en_action}
    a((\bO, \bb), \bx) = \bO\bx + \bb.
\end{equation}
As has been done for graphs \citep{satorras2021egnn} and simplicial complexes \citep{eijkelboom2023empsn}, we represent equivariance by scalarization \citep{han2024survey}. Therefore, geometric features undergo an initial transformation into invariant scalars. Subsequently, they are processed in a learnable way (e.g., MLPs) before being combined along the original directions to achieve equivariance.

\textbf{E(n) Equivariant Topological Neural Networks.} Let $\cX$ and $\cCN$ be a CC and collection of neighborhood functions on it. The $l$-th layer of an $E(n)$ Equivariant Topological Neural Network (ETNN) updates the embeddings $\bh^l_x$ of every cell $x \in \cX$ and the position $\bx_z^l$ of every node $z \in \cS$   as
\begin{align}
    &\bh_x^{l+1} = \beta \left(\bh^l_x, \bigotimes_{\cN \in \cCN}\bigoplus_{y \in \cN(x)} \underbrace{\psi_{\cN,\rk(x)}\left(\bh^l_x,\bh^l_y, \Inv\left(\{\bx_z^l\}_{z \in x},\{\bx_z^l\}_{z \in y}\right)\right)}_{\bm_{x,y}^{\cN}}\right), \text{ for all } x \in \cX \label{eq:eq_message_passing}\\ 
    &\bx_z^{l+1}= \bx_z^{l} +C\sum_{\cN \in \cCN}\sum_{t \in \cS: \{t\} \in \cN(z)}\left(\bx_z^l-\bx_t^l\right)\xi\left(\bm_{z,t}^{\cN}\right), \text{ for all } z \in \cS,  \label{eq:eq_update_pos}
\end{align}
where  $\bh_x^{0}=\bh_x$, $\bx_z^{0}=\bx_z$, and $\xi$ is a scalar learnable function. $\Inv$ takes as input the positions of the nodes forming the involved cells, and it is an invariant function w.r.t. the action of $E(n)$, i.e., $\Inv(\{\bO\bx_z^l+\bb\}_{z \in x},\{\bO\bx_z^l+\bb\}_{z \in y}) = \Inv(\{\bx_z^l\}_{z \in x},\{\bx_z^l\}_{z \in y})$. We refer to a specific $\Inv$ as a \textit{geometric invariant}. The update in \eqref{eq:eq_update_pos} again naturally allows for neighborhood heterogeneities: two nodes can be neighbors via more than a single neighborhood function, carrying different messages. This is common in many applications, as illustrated in \cref{sec:data_modeling}.
% This further confirms that ETNNs can be readily used in heterogeneous settings.

The following theorem demonstrates the equivariance property of ETNNs.
\begin{theorem}\label{th:equivariance}
    An ETNN layer as in \eqref{eq:eq_message_passing}-\eqref{eq:eq_update_pos}, synthetically denoted as $\{\bh_x^{l+1}\}_{x \in \cX},\{\bx_z^{l+1}\}_{z \in \cS}=\ETNN\left(\{\bh_x^l\}_{x \in \cX},\{\bx_z^l\}_{z \in x}\right)$, is $E(n)$ equivariant, that is
    \begin{equation}
        \{\bh_x^{l+1}\}_{x \in \cX},\{\bO\bx_z^{l+1}+\bb\}_{z \in x} = \ETNN\left(\{\bh_x^l\}_{x \in \cX},\{\bO\bx_z^l+\bb\}_{z \in x}\right),
    \end{equation}
for all $(\bO,\bb) \in E(n)$.
\end{theorem}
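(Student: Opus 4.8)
The plan is to verify that the two defining equations of an ETNN layer transform correctly under an arbitrary $(\bO,\bb) \in E(n)$, treating the non-geometric features $\{\bh_x^l\}_{x \in \cX}$ as carrying the trivial action (so that ``equivariant'' means ``unchanged'') and the node positions $\{\bx_z^l\}_{z \in \cS}$ as carrying the action in \eqref{eq:en_action}. First I would observe that all combinatorial ingredients --- the cells $\cX$, the rank function, and every neighborhood function $\cN \in \cCN$ --- are independent of the geometry, so the index sets appearing in \eqref{eq:eq_message_passing}--\eqref{eq:eq_update_pos}, namely $\cN(x)$ and $\{t \in \cS : \{t\} \in \cN(z)\}$, are unaffected by applying $(\bO,\bb)$.

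Next comes the core step: by hypothesis $\Inv$ is $E(n)$-invariant, i.e. $\Inv(\{\bO\bx_z^l+\bb\}_{z\in x},\{\bO\bx_z^l+\bb\}_{z\in y}) = \Inv(\{\bx_z^l\}_{z\in x},\{\bx_z^l\}_{z\in y})$, so the third argument of every $\psi_{\cN,\rk(x)}$ is unchanged under the action; since $\bh_x^l$ and $\bh_y^l$ are likewise unchanged, each message $\bm_{x,y}^{\cN}$ is invariant. Because $\bigoplus$ and $\bigotimes$ act only on these invariant messages, and $\beta$ receives $\bh_x^l$ (invariant) together with the aggregated messages (invariant), the updated feature $\bh_x^{l+1}$ is invariant. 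This settles the feature half of the claim.

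For the positions I would use two elementary facts: (i) differences of positions are translation-free and transform covariantly under the orthogonal part, $a((\bO,\bb),\bx_z^l) - a((\bO,\bb),\bx_t^l) = \bO(\bx_z^l - \bx_t^l)$; and (ii) the scalar coefficient $\xi(\bm_{z,t}^{\cN})$ is invariant by the previous paragraph. Substituting $\bx_z^l \mapsto \bO\bx_z^l + \bb$ throughout \eqref{eq:eq_update_pos}, the leading term $\bx_z^l$ becomes $\bO\bx_z^l + \bb$ while each summand becomes $\bO(\bx_z^l - \bx_t^l)\,\xi(\bm_{z,t}^{\cN})$; pulling the linear map $\bO$ out of the finite double sum and collecting $\bb$ yields exactly $\bO\bx_z^{l+1} + \bb$, so the position update is $E(n)$-equivariant. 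Combining the two halves proves the theorem, and I would close by remarking that, since one layer preserves the trivial action on features and the action \eqref{eq:en_action} on positions, so does any composition of ETNN layers.

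No substantial obstacle arises here --- the argument is a direct unwinding of definitions --- but the one place to be careful is the bookkeeping of which objects carry which action: one should state explicitly that features are treated as geometric scalars (trivial action) while only $0$-cell positions transform, and that $\bm_{x,y}^{\cN}$ and hence $\xi(\bm_{z,t}^{\cN})$ inherit \emph{invariance} rather than equivariance. A secondary point worth making cleanly is that the concrete choices of $\Inv$ used in practice (pairwise or Hausdorff distances, cell volumes, and so on) do satisfy the assumed invariance, even though the theorem as stated only needs that hypothesis.
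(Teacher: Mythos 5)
Your proposal is correct and follows essentially the same route as the paper's proof in Appendix \ref{sec:proof_th_1}: invariance of the feature update because positions enter only through $\Inv$, followed by the direct computation pulling $\bO$ out of the double sum and collecting $\bb$ in the position update. The additional bookkeeping remarks you include (trivial action on features, invariance of the index sets and of $\xi(\bm_{z,t}^{\cN})$) are consistent with, and slightly more explicit than, the paper's argument.
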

\begin{proof}
    See Appendix \ref{sec:proof_th_1}.
\end{proof}
When only \eqref{eq:eq_message_passing} is applied but the update step in \eqref{eq:eq_update_pos} is not performed, then ETNNs are $E(n)$-invariant. At this point, it is important to discuss geometric invariants. As has been shown in \citep{satorras2021egnn,eijkelboom2023empsn},  geometric invariants should make use of the underlying topological structure. For instance, the pairwise distance of connected nodes has been used in EGNN \citep{satorras2021egnn}, while the simplex volume has been used in EMPSN \citep{eijkelboom2023empsn}. However, both EGNN and EMPSN operate on rigid domains, such as simplicial complexes, while CCs can have a much more arbitrary set of relations. In the following, we present a series of geometric invariants that can be used in ETNNs, without assuming any specific structure for the CC.  We consider two cells $x$ and $y$, along their corresponding node positions $\{\bx_z\}_{z \in x}$ and $\{\bx_z\}_{z \in y}$.

\textbf{Permutation invariant functions of pairwise distances.} Any function of the form
\begin{equation}\label{eq:inv_per_distance_fun}
    \bigoplus_{z \in x, t \in y} \tau \left(\|\bx_z - \bx_t\|\right)
\end{equation}
is a geometric invariant, with $\bigoplus$ being a permutation invariant aggregator, and $\tau$ a (possibly) learnable function. A basic instance is $\bigoplus = \sum$ and $\tau=\mathrm{Id}$, i.e., the sum of pairwise distances. 

\textbf{Distances of permutation invariant functions.}  Any function of the form
\begin{equation}\label{eq:distance_per_inv_fun}
    \tau \left(\Bigg\|\bigoplus_{z \in x}\bx_z - \bigoplus_{t \in y}\bx_t\Bigg\|\right)
\end{equation}
is a geometric invariant, with $\bigoplus$ being a linear permutation invariant aggregator, and $\tau$ a (possibly) learnable function. A basic instance is $\bigoplus = \frac{1}{|\cdot|}\sum$ and $\tau=\mathrm{Id}$, i.e., the distance between  centroids. 

\textbf{Hausdorff distance.} The Hausdorff distance, defined as
\begin{equation}
    \max\left\{\max_{z \in x}\min_{t \in y}\|\bx_z-\bx_t\|,\max_{t \in y}\min_{z \in x}\|\bx_t-\bx_z\|\right\}
\end{equation}
is a geometric invariant. The Hausdorff distance measures the mutual proximity of two sets whose elements live in a metric space. It does so by computing the maximal distance between any point of one set to the other set. Moreover, the two $\max\min$ terms inside the outer $\max$, usually referred to as directed Hausdorff distances, can be used as geometric invariants themselves. This geometric invariant has been used in various machine learning applications, especially in the area of computer vision \citep{Aydin2021, VANKREVELD2022101817}.

\textbf{Volume of the convex hull.} Any function of the volumes of the convex hulls of $\{\bx_z\}_{z \in x}$ and $\{\bx_z\}_{z \in y}$ is a geometric invariant. Computing the volume of a convex hull is quite an expensive procedure, thus it is not always suitable to be performed at each training step. However, it can still be used if the cells have small cardinality, or if  ETNNs are used in an invariant fashion, i.e., without the update in \eqref{eq:eq_update_pos}, because the volume needs to be computed only once before training. Finally, the volume of convex hulls reduces to the volume of simplices in case the considered CC is a (geometric) simplicial complex, generalizing what has been done in \citep{eijkelboom2023empsn}.

\begin{remark}
It is noteworthy that if higher rank cells,  not only nodes, come with attached geometric features, they can be seamlessly integrated into ETNNs. In that case, there would be geometric invariants of node geometric features, geometric invariants of higher-rank geometric features, and, possibly, geometric invariants of both nodes and higher rank cells. The equivariant update in \eqref{eq:eq_update_pos} would be straightforwardly modified to include all the cells with geometric features in the sum.
\end{remark}

\noindent\textbf{Expressiveness of ETNN.} The expressive power of 
 neural networks over CTS is proportional to their ability to distinguish non-isomorphic CTSs. As the renowned Weisfeiler-Leman (WL) test for GNNs \citep{xu2018powerful}, several generalized WL  tests have been developed for CTSs (e.g. simplicial \citep{bodnar2021weisfeiler}, cell \citep{bodnarcwnet}, or path \citep{truong2024wlpath} complexes) and have been shown to be upper bounds on the expressiveness of the corresponding TDL models. These tools also demonstrated that TDL models are usually more expressive than vanilla GNNs when it comes to distinguishing isomorphic graphs. For geometric graphs, the Geometric-WL (GWL) test \citep{joshi2023gwl} has been recently introduced as an upper bound on the expressiveness of invariant/equivariant geometric GNNs. In Appendix \ref{sec:expressivity}, we introduce the notion of \textit{Geometric Augmented Hasse Graph} and we combine it with the GWL  to show that ETNNs are, in general, more expressive than scalarization-based graph methods.  We also perform experiments to validate our claim, \new{including testing ETNN on the counterexample structures from \citep{pozdnyakov2020incompleteness, joshi2023gwl}}. Next, we present the statements of our results. Proofs are in Appendix \ref{sec:expressivity}.

\noindent\textbf{\textit{Proposition 2 . (Informal)}} An ETNN is at least as powerful as an EGNN \citep{satorras2021egnn} in distinguishing $k$-hop distinct graphs. In most of the cases, an ETNN is strictly more powerful than an EGNN. The quantitative gap between ETNNs and EGNNs depends on how the graphs are lifted into a CC, i.e., how the set of cells $\cX$ and the collection of neighborhood functions $\cCN$ are defined.

\rev{\noindent\textbf{\textit{Proposition 3 .}} There exist a pair of CCs whose nodes come with geometric features, and a collection of neighborhoods such that the CCs are undistinguishable by ETNN.}

\noindent\textbf{\textit{Proofs .}} See Appendix \ref{sec:expressivity}.

\noindent\textbf{Computational complexity of ETNN.} In Appendix \ref{sec:complexity_analysis}, we present an analysis of the computational complexity and runtime of ETNNs.  \new{In particular, we show that, if the connectivity of the CC is sufficiently sparse, which is usually the case, then the overall complexity is linear in the number of cells. In the worst case, all the cells all connected among them, resulting in a quadratic cost in the number of cells. In the same Appendix \ref{sec:complexity_analysis}, we also empirically validate these results by comparing the runtimes of ETNN with EGNN \citep{satorras2021egnn} and EMPSN \citep{eijkelboom2023empsn}.}

\begin{figure}[t]
    \centering
    \includegraphics[width=\textwidth]{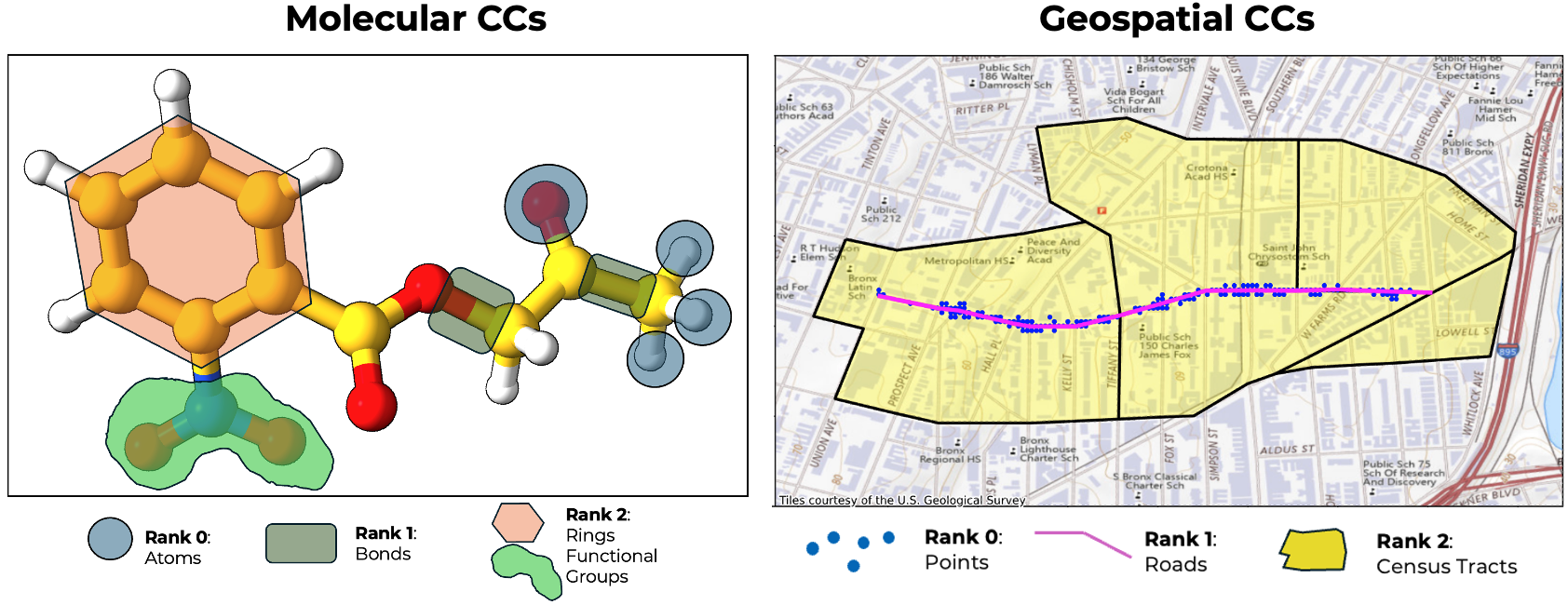}
    \caption{\textbf{Real-World Combinatorial Complexes.} \emph{(Left)}: A molecular CC where 0-cells are atoms, 1-cells are bonds, and 2-cell represent rings and functional groups. \emph{(Right)}:  A geospatial CC where 0-cells are grid points, 1-cells are road polylines, and 2-cells are census tract polygons. For visibility, the geospatial CC only shows one road (rank 1) and its incident lower and higher rank cells.
    } \label{fig:mol_spat_cc}
\end{figure}
\vspace{-.4cm}
\notocsection{Real-World Combinatorial Complexes}\label{sec:data_modeling}
\vspace{-.2cm}
Designing equivariant models for CCs tailored for specific applications is an exciting direction since CCs can accommodate hierarchical higher-order features that the literature's most used CTSs cannot readily incorporate, including graphs, simplicial complexes, path complexes, cell complexes, and hypergraphs\footnote{Appendix \ref{sec:cc_to_cts} provides an overview of the aforementioned CTSs.}. This section presents two vastly different applications.
The first application considers molecular data. While this data has been investigated under the lens of CTS, our approach allows us to naturally integrate their higher-order structures such as rings and functional groups. Next, we present a novel TDL framework for geospatial data. We show that the interactions between geographical objects in geospatial data such as polylines and polygons can be modeled via CCs. In the next section, we will use instances of these CCs in experiments evaluating the ETNN architecture.

\textbf{Molecular combinatorial complexes.} We introduce the notion of \textit{Molecular Combinatorial Complexes}, showing that they can jointly retain all the properties of graphs, cell complexes, and hypergraphs, summarized in Appendix \ref{sec:rw_cts}, thus overcoming their mutual limitations. In a molecular CC, the set of nodes $\cS$ is the set of atoms, while the set of cells $\cX$ and the rank function $\rk$ are such that: atoms are $0$-cells, bonds and functional groups made of two atoms are $1$-cells, and rings and functional groups made of more than two atoms are $2$-cells (see Figure~\ref{fig:mol_spat_cc}). 

\textbf{Geospatial combinatorial complexes.} 
% We now introduce the notion of \textit{Geospatial Combinatorial Complexes}.
\rev{\emph{Geospatial CCs} (GCCs) are CCs in which all cells are subsets of a geospatial domain. For example, cells of a GCC can consist of polygons, polylines (sequences of connected lines), and points\footnote{\rev{Represented as singleton sets.}}, all which are elementary geometric objects used in geospatial information systems~\citep{ayers1992data}}. \cref{fig:mol_spat_cc}. illustrates this scenario, where 0-cells are points where air quality monitoring units are located (e.g., in a city),  1-cells are traffic roads as polylines, and 2-cells are census tracts represented as polygons, geographic areas with homogeneous demographic characteristics.  \rev{Formally, GCCs are characterized by a spatial representation and a spatially-determined neighborhood system. Let $\cT\subset \bbR^n$ represent a geospatial domain. A spatial representation is an embedding $s: \cX \to \mathcal{P}(\cT)$ mapping each cell $x$ to a subset $s(x) \subset \cT$. Although any neighborhoood system can be used with GCCs (such as up/down incidences), it is more natural to construct the neighborhood system of a GCC from the spatial representation $s(x)$, namely}:
\begin{align}\label{eq:adjacencies-spatial}
&\cN_A(x) = \{y\in \cX \mid \rk(y)=\rk(x), s(x) \cap s(y) \neq \emptyset  \} \nonumber \\
&\cN_{I, \uparrow}(x) = \{y\in \cX \mid \rk(y)=\rk(x) + 1,s(x) \cap s(y) \neq \emptyset \} \nonumber \\
&\cN_{I, \downarrow}(x) = \{y\in \cX \mid \rk(y)=\rk(x) - 1, s(x) \cap s(y) \neq \emptyset \}.
\end{align}
For instance, $\cN_A(x)$ can represent adjacencies between geographic spaces that share a portion or a boundary, while the up and down incidences $\cN_{I, \uparrow}(x)$ and $\cN_{I, \downarrow}(x)$ can capture the notion of road crossing a county, or two geographic areas of different rank overlapping. Evidently, more than one neighborhood function can be used or combined with the standard choices in \eqref{eq:incidences} -\eqref{eq:adjacencies} according to the application. It is not always necessary to use points and polylines. One can also consider geographic divisions (polygons) with multiple resolution levels, each mapping to a different rank. For example, in the United States,
census tracts are smaller than zip codes, which are smaller than counties, and so on. A census tract can intersect multiple zip codes, and zip codes can belong to multiple counties, partitioning space irregularly. As an example, \cref{fig:mol_spat_cc} portrays a road polyline traversing multiple census tract polygons. Unlike classical multi-resolution spatial modeling 
\citep{multiscale_statistical_hierarchical, Hengl2021}, we don't require regular structures such as grids or nested partitions. More importantly, we allow each resolution to have unique features. This point illustrates a crucial difference and advantage over graph modeling, which requires all data to be pre-aggregated at the same rank (typically the 0-cells), see Appendix \ref{sec:rw_cts}.
This work is the first to explore combinatorial topological modeling of multi-resolution irregular geospatial data, which are the backbone of geographic information systems. 
The dataset and hyper-local air pollution downscaling task that we will present in the next section will not only propose a novel technique for learning from irregular multi-resolution geospatial data, but can be used as a \textit{benchmark} for TDL.
\vspace{-.4cm}
\notocsection{Experiments}\label{sec:results}
\vspace{-.2cm}

\textbf{Molecular property prediction.} We evaluate our model on molecular property prediction using the QM9 dataset \citep{Ramakrishnan2014}, a comprehensive collection of quantum chemical calculations for small organic molecules. It includes geometric, energetic, electronic, and thermodynamic properties for 134k stable molecules with up to nine heavy atoms (C, O, N, F). See Appendix~\ref{app:qm9_statistics} for a summary.

\noindent We consider several different variants of ETNNs (see Appendix \ref{app:ablation_qm9}), but the most general molecular combinatorial complexes (CCs) that we design is the one as in Fig. \ref{fig:mol_spat_cc}. It employs three ranks to capture different levels of molecular interactions: i) atoms as 0-cells, ii) bonds, i.e., pairs of atoms, as 1-cells, and iii) rings and functional groups of size greater than two, i.e., at least triplets of atoms, as 2-cells. 
Similar to methodologies for global connections (e.g. EGNN\citep{satorras2021egnn} and VN-EGNN \citep{sestak2024vnegnn}), we generalize the notion of virtual node by incorporating a single cell, that we refer to as \textit{virtual cell}, containing the whole set of 0-cells (the atoms), and having maximum rank, e.g. in the most general case it would have rank 3. As neighborhood functions, we employed the up/down adjacencies and incidences from \eqref{eq:incidences}-\eqref{eq:adjacencies}, and a "max" adjacency defined as
\begin{equation}
    \cN_{A,\textrm{max}}(x) = \{y \in \cX | \rk(y)=\rk(x), \exists z \in \cX: \rk(z)=\underset{z \in \cX}{\textrm{max }\rk(z)}, \rev{x} \subset z, \text { and } y \subset z\} \label{eq:max_adjancency}
\end{equation}
The virtual cell together with the max adjacency in \eqref{eq:max_adjancency} acts as a hub across all cells of the same rank, connecting each pair, following a common practice \citep{satorras2021egnn,eijkelboom2023empsn}. However, the max adjacency is an object of independent interest, even when the virtual cell is not in the complex. In that case, it would allow cells of lower ranks to communicate based on the highest order cells they are part of, e.g. atoms belonging to the same ring or functional group. In this setting, it is clear how ETNN naturally handles heterogeneity, e.g., the same pair of bonds could be connected because part of the same functional group (up adjacency) and the virtual cell (max adjacency), but the two messages will be different across the neighborhoods.
Similarly, ETNN allows us to equip each rank with unique features without any ad-hoc aggregations at the atom or bond levels. We consider four distinct types of non-geometric feature vectors: atom features, bond features, ring features, and functional group features. We provide a full list of the non-geometric feature vectors we used in Appendix \ref{app:qm9_features}. As geometric features, we use the atoms' positions. The complete list of the model parameters is in Appendix~\ref{app:qm9_parameters}. A description of all the configurations and a detailed reproducibility discussion is found in Appendix \ref{app:ablation_qm9}. In all the experiments, following \citep{satorras2021egnn,eijkelboom2023empsn}, we use the invariant version of ETNN.

\begin{table}[t]
  \centering
    \caption{Mean Absolute Error for the molecular property prediction benchmark in QM9 dataset. ETNN-* corresponds to the configurations, among the ones we tested, that produced the best results for each property. \new{ETNN-single-* corresponds to the configuration, among the ones we tested, that produced the average best results across properties. ETNN-graph-W and EMPSN* are EGNN \citep{satorras2021egnn} and EMPSN \citep{eijkelboom2023empsn} derived as instances of ETNNs through our codebase.} All the models except ETNN-w/o VC use the virtual cell. We report the improvement of ETNN-* over the  original EGNN.  \new{TopNet$^\dagger$ is the best performing variant of TopNets~\citep{verma2024topnet}.} The other baselines are  from \citep{satorras2021egnn}. Details and full results in Appendix \ref{app:ablation_qm9}.}
  \label{tab:qm9_results_part1}
% \resizebox{.8\textwidth}{!}{
  \scalebox{1}{
  \begin{tabular}{l c c c c c c}
  \toprule
    Task & $\alpha$ & $\Delta \varepsilon$ & $\varepsilon_{\mathrm{HOMO}}$ & $\varepsilon_{\mathrm{LUMO}}$ & $\mu$ & $C_{\nu}$ \\
    Units ($\downarrow$) & bohr$^3$ & meV & meV & meV & D & cal/mol K \\
    \midrule
    NMP & .092 & 69 & 43 & 38 & .030 & .040 \\
    Schnet & .235 & 63 & 41 & 34 & .033 & .033 \\
    Cormorant & .085 & 61 & 34 & 38 & .038 & .026 \\
    L1Net & .088 & 68 & 46 & 35 & .043 & .031 \\
    LieConv & .084 & 49 & 30 & 25 & .032 & .038 \\
    DimeNet++ & \rev{\textbf{.044}} & 33 & 25 & 20 & .030 & .023 \\
    TFN & .223 & 58 & 40 & 38 & .064 & .101 \\
    SE(3)-Tr. & .142 & 53 & 35 & 33 & .051 & .054 \\
    Equiformer (SotA) & .046 & \rev{\textbf{30}} & \rev{\textbf{15}} & \rev{\textbf{14}} & \rev{\textbf{.011}} & \rev{\textbf{.023}} \\
    TopNet$^\dagger$  & .083& 47 & 37& 24 & .035& .032\\
    EGNN & .071 & 48 & 29 & 25 & .029 & .031 \\
    \midrule
    
    \textbf{ETNN-\textbf{*}} & .062 & 45 & 26 & 22 & .022 & .030 \\
    ETNN-graph-W & .067 & 46 & 27 & 25 & .030 & .036 \\
    ETNN-w/o VC & .161 & 73 & 49 & 48 & .306 & .051\\
    EMPSN* & .061 & 42 & 29 & 25 & .030 & .028 \\
    ETNN-single-\textbf{*}	& .069	& 46	& 26	& 23	& .025	& .033\\
    \hdashline
    Improvement over EGNN & \textcolor{OliveGreen}{\textbf{-13\%}} &  \textcolor{OliveGreen}{\textbf{-6\%}} & \textcolor{OliveGreen}{\textbf{-10\%}} & \textcolor{OliveGreen}{\textbf{-12\%}} & \textcolor{OliveGreen}{\textbf{-26\% }}& \textcolor{OliveGreen}{\textbf{-3\%}} \\
    \bottomrule
  \end{tabular}  }
\end{table}
In Table~\ref{tab:qm9_results_part1}, we present the results for the first 6 molecular properties, from which it is clear that ETNN significantly outperforms EGNN \citep{satorras2021egnn}. Details and full results for all the properties are described
in Appendix \ref{app:ablation_qm9}. ETNN generally improves the prediction performance of EGNN, with notable improvements on $\alpha$ and $\mu$, achieving state-of-the-art performance for $\mu$ despite being a general framework not tailored for chemical tasks. To ensure further fairness in the comparison, the table shows the results of a 1.5M EGNN obtained through our framework and codebase (EGNN-graph-W), as explained in Appendix \ref{app:ablation_qm9}. Comparing this with the original EGNN, we observe performance improvements. Finally, the low prediction performance of ETNN-w/o VC (ETNN without the virtual cell) highlights the significant impact of the virtual cell. \new{The reproduced EMPSN* \citep{eijkelboom2023empsn} is obtained by running ETNN on a combinatorial complex being the same exact Vietoris-Rips simplicial complex of the original paper, similarly using the up/down adjacencies and incidences plus the max adjacency, and with architectural hyperparameters as similar as possible to the original architecture. See Appendix \ref{app:ablation_qm9} for a detailed discussion. As the reader can see, ETNN generally matches or outperform the reproduced EMPSNs. However, ETNN employs \textit{almost two orders of magnitude} less  cells. In particular, the original VR complex of dimension 2 used in EMPSN has 555.25 cells per molecule on average (18 nodes, 157.9 edges, 379.36 triangles) and it is also expensive to compute in the preprocessing step, while our molecular CC (in the worst case) has 39.8 cells per molecule on average (18 nodes, 19 edges, 1.8 rings, and 1.3 functional groups) and it requires slightly more than a lookup table to be computed. As a consequence, ETNNs have half of the memory usage and almost half of the runtime per epoch w.r.t. EMPSNs, as we describe in Appendix \ref{sec:complexity_analysis}. This fact ultimately showcases the effectiveness of the principled inductive bias given by the molecular CC.}

\textbf{Hyperlocal air pollution downscaling.}
Next, we introduce a novel benchmark for TDL based on the geospatial framework introduced in \cref{sec:data_modeling}. We first provide a data an overview of the data and tasks for completeness. 

\begin{wraptable}{r}{0.6\textwidth} % Wrap table to the right of the text, adjust width as necessary
  \centering
    \vskip -1.2em
      \caption{\rev{Results for the \emph{air pollution downscaling} task. AEV is the average $R^2$ (explained variance) over 30 seeds, DEV is the difference in $R^2$, and MSE the Mean Squared Error.}} % General caption
    \label{tab:spatial-benchmarks}
    % \vskip -3.5em
  % \scalebox{0.9}{ % Scale the table size, adjust the scale as necessary
    \begin{subtable}{\linewidth} % Ensure the subtable fits within the scaled wraptable
      \centering
      \color{black}
      \scalebox{0.85}{
      \begin{tabular}{|l|c|c|c|}
\hline
\textbf{Baseline} & \textbf{AEV($R^2$)} & \textbf{Std. Err ($R^2$)} & \textbf{MSE} \\
\hline
Linear   & 0.51\%                           & 0.95\%             & 1.106    \\
MLP      & 2.35\%                           & 1.61\%             & 1.022    \\
GNN      & 2.44\%                           & 0.99\%             & 0.987    \\
EGNN     & 1.43\%                           & 1.2\%              & 1.041    \\
\midrule
ETNN     & 9.34\%                           & 2.05\%             & 0.935    \\
\hline
\end{tabular}
      }
      \caption{Baseline model comparison} % Subcaption for the first table
      \label{tab:spatial-base}
    \end{subtable}\\
    %\vskip 0.5em
    \begin{subtable}{\linewidth}
      \centering
      \color{black}
      \scalebox{0.85}{
\begin{tabular}{|l|c|c|c|}
\hline
\textbf{Baseline}                      & \textbf{DEV ($R^2$)} & \textbf{Std. Err ($R^2$)} & \textbf{MSE} \\
\hline
no virtual node                       & -1.80\%                            & 2.32\%             & 0.957    \\
\makecell[l]{no position update: \\ \; invariant ETNN}   & -1.37\%                            & 2.52\%             & 0.956    \\
\makecell[l]{no geometric features: \\ \; CCMPN}         & -1.08\%                            & 2.05\%             & 0.946    \\
\hline
\end{tabular}}
      \caption{Ablation study results} % Subcaption for the second table
      \label{tab:spatial-ablations}
    \end{subtable}
     \vskip -2em
\end{wraptable}

The task consists of predicting $\textrm{PM}_{2.5}$ air pollution at a high resolution of $0.0002^\circ \approx 22m$. The prediction targets of $\textrm{PM}_{2.5}$ measurements are obtained from a publicly available dataset \citep{wang2023hyperlocal} consisting of measurements by mobile air sensors installed on cars, corresponding to the last quarter of 2021 in the Bronx, NY, USA. The targets are aggregated at the desired spatial resolution. \new{The resulting Geospatial CC consists of 3,946 point measurement units (0-cells), 550 roads (1-cells), and 151 census tracts (2-cell).} \new{\cref{fig:diag-benchmark} provides an overview of the features for each rank, including demographics, land-use information, daily traffic rates and traffic composition per road, and distance to schools and other points of interest. The most important predictor for the downscaling task is the coarse-level annual PM 2.5 from the previous year \citep{di2019ensemble}, available at a coarser 1 km resolution which is projected to the roughly equivalent census-tract area level. In total, we include 5 point-level features, 2 road-level features, and 17 tract-level features. The full list of features and their data sources is in \cref{app:geospatial:features}. All the pre-processing code is included in our code repository.
}

\new{We create a geospatial CC using the neighborhood functions described by \cref{eq:adjacencies-spatial}. 
\rev{
For the geometric features, we use the coordinates of the 0-cells with Mercator projection. This projection has the key property that Euclidean distance in coordinate space approximates geodesic distance in meters. As such, $E(n)$-equivariance is implicitly essential to the purpose of the Mercator projection. The exact coordinate values are irrelevant: What is most important for our application is its usefulness in expressing geodesic distances. By using these geometric features in GCCs, we allow the model to use the coordinates but only modulo Euclidean transformations, which is the original intention of the Mercator projection. In other words, the model can still capture location-specific effects in node-level tasks without using the coordinates directly, which would lead to poor inductive biases.
}

Since only CCs can support multi-level features, we create point-level features for other baselines (MLP, GNN, and EGNN) by concatenating the features of each point with the features of the 1-cell and 2-cell that the point belongs to. For the base ETNN, we again employ a virtual cell as defined in the previous paragraph. Since the task is at the node level, the base implementation uses the equivariant update from \ref{th:equivariance}. We will perform ablation experiments over these defaults. \cref{app:geospatial:config} presents in detail the hyper-parameter and architecture configurations used for the experiments}

The experiment results are shown in Table \ref{tab:spatial-base}. The results are averaged over \rev{30} seeds. \rev{Standard errors are computed with the formula $\sigma/\sqrt{n}$, which is recommended when having at least sample size 30.} The table indicates that ETNN outperforms EGNN, GNN, and MLP, emphasizing the benefits of multi-level modeling combined with $E(n)$ equivariance. We additionally conduct ablations studies on ETNN, presented in \cref{tab:spatial-benchmarks} (b). The results show that, as expected, not including geometric features decreases performance. Perhaps surprisingly, the ablation of ETNN using only the geospatial CC structure without geometric features (invariants) performed slightly better than using the features but not using the position update in \eqref{eq:eq_update_pos}, i.e., using the invariant version of ETNN. Nonetheless, the largest decrease came from not using the virtual cell.

% \begin{table}[tbp]
% \centering
% \scalebox{0.9}{
% \begin{tabular}{lcp{9cm}}
% \toprule
%  & Cell level & Description \\ \midrule
% Census tract features & 2-cells & PM$_{2.5}$ estimates at coarse-resolution, tract demographics, land use percentages (e.g., residential, commercial, parks) \\
% Road features & 1-cells & Annual average daily traffic and traffic composition \\
% Point-level features & 0-cells & Distance to the nearest school, distance to intersection \\ \midrule
% Targets & 0-cells & Fine-resolution PM$_{2.5}$ from ground sensors and mobile units. \\
% \bottomrule
% \end{tabular}
% }
% \caption{Overview of features and targets used in the \emph{hyperlocal air pollution prediction} task.}
% \label{tab:feature_summary}
% \end{table}

%     \item \textbf{County Level} $\cT_3$: we consider the partition of the city X induced by its counties. Again, we can look at counties $\{x_{3,p}\}_{p=1}^{N_3}$ as (overlapping) subsets of grid elements (nodes) or (overlapping) subsets of zip codes (1-cells), therefore they are cells of rank 2. Each county $x_{3,p}$ comes with social features $\bh_{x_{3,p}}$, e.g. X.
% \end{enumerate}

\begin{figure}
    \centering    \includegraphics[width=1.0\textwidth]{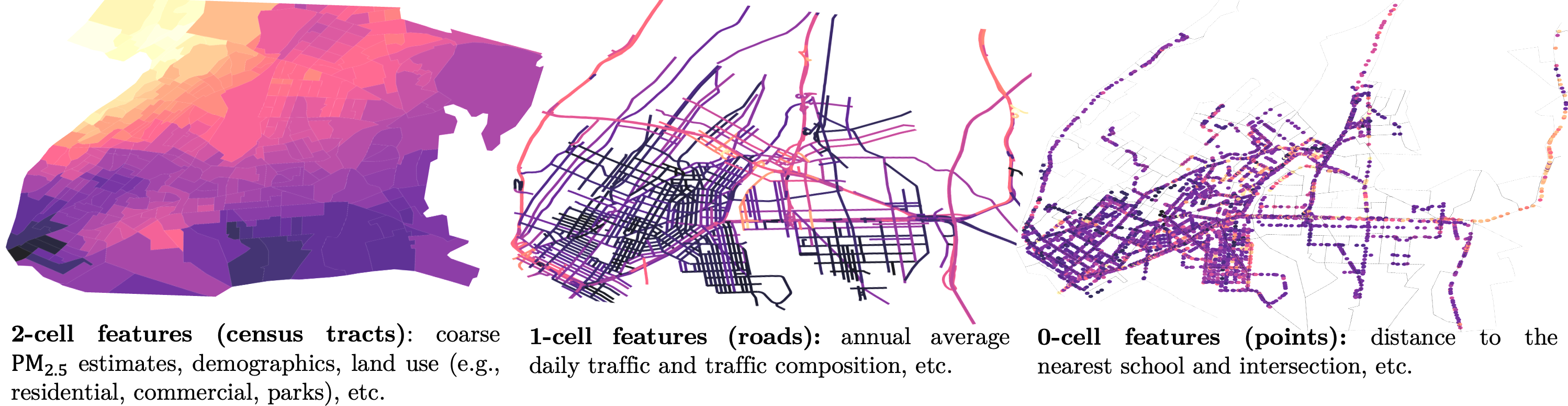}
    \caption{Overview of features used in the \emph{air pollution downscaling} benchmark. The task is to predict the local $\mathrm{PM}_{2.5}$ measured at the 0-cells, which have an approximate resolution of $22 \times 22$ meters. As such, this is a node regression task.}
    \label{fig:diag-benchmark}
    \vskip -1em
\end{figure}

\vspace{-.5em}
\notocsection{Conclusion}
We introduced E(n) Equivariant Topological Neural Networks (ETNNs), a framework for designing scalarization-based equivariant message passing networks on combinatorial complexes. Combinatorial complexes flexibly represent arbitrary hierarchical higher-order interactions, thus ETNNs are a legitimate proxy for designing equivariant models on a wide class of combinatorial topological spaces, such as simplicial or cell complexes, or, more in general, over combinatorial objects used to model interactions among entities, such as hierarchical graphs. We showed that ETNNs are at least as expressive as existing scalarization-based equivariant architectures for geometric graphs. Experiments on the QM9 molecular benchmark demonstrated ETNNs' effectiveness for richly structured molecular data.
Moreover, we introduced a novel geographic regression task, highlighting ETNNs' ability and showing for the first time how combinatorial complexes can be used to model irregular multi-resolution geospatial data. We discuss the limitations and future directions in Appendix~\ref{app:limitations}. ETNNs represent, overall, a unifying framework showcasing the benefits of a principled synergy between combinatorial and geometric equivariances over beyond-graph domains. \new{We believe that our methodological contribution, together with the easy-to-use and solid codebase we provide, can mark a step in boosting and standardizing research for scalarization-based equivariant TDL.}
% \vspace{-.5em}

\section*{Reproducibility Statement}
We include all the details about our experimental setting, e.g. the choice of hyperparameters and the specifications of our machine, in Appendix \ref{sec:exp_set_general}.
The code, data splits, and virtual environment needed to replicate the experiments and easily use the ETNN framework are provided %\href{https://github.com/NSAPH-Projects/topological-equivariant-networks}{\textbf{here}}\footnote{\url{https://github.com/NSAPH-Projects/topological-equivariant-networks}}. 
at the following repository: \url{https://github.com/NSAPH-Projects/topological-equivariant-networks}. 
An in-depth discussion about the reproducibility of the molecular property prediction task is in Appendix \ref{app:ablation_qm9}. 

\let\oldaddcontentsline\addcontentsline% Store \addcontentsline
\renewcommand{\addcontentsline}[3]{}% Make \addcontentsline a no-op
\bibliographystyle{iclr2025_conference}
\bibliography{biblio}
\let\addcontentsline\oldaddcontentsline% Restore \addcontentsline

%%%%%%%%%%%%%%%%%%%%%%%%%%%%%%%%%%%%%%%%%%%%%%%%%%%%%%%%%%%%

\begin{appendix}
% Appendix TOC
\renewcommand{\contentsname}{Appendix Contents}
\addtocontents{toc}{\protect\setcounter{tocdepth}{-1}}
\tableofcontents
\addtocontents{toc}{\protect\setcounter{tocdepth}{3}}
\addtocontents{toc}{\vspace{-.2cm}}

\section{Limitations \& Future  Directions}\label{app:limitations}
\vspace{-.2cm}
To our knowledge, ETNN is the first unifying framework for scalarization-based $E(n)$ equivariant networks over combinatorial topological spaces or, more in general, over combinatorial objects used to model interactions among entities. In this work, we introduced ETNNs as a significantly more general, expressive, and flexible framework compared to prior geometric/topological deep learning models operating on geometric graphs \citep{satorras2021egnn} or simplicial complexes \citep{eijkelboom2023empsn}. The promising results open several avenues. 

\noindent\textbf{Methodological.} First, while we introduced several useful invariants like pairwise distances, centroids, Hausdorff distances, and convex hull volumes, there are likely many other meaningful geometric quantities that could be leveraged, especially for specific application domains. Therefore, developing a more comprehensive set of geometric invariants, as well as learning methods to discover them from data automatically, is an important future direction.  In this sense, generalizing the approach from \citep{liu2024clifford} of integrating Clifford group-equivariant layers with message passing layers. i.e., going beyond scalarization, is a possible solution, although requiring to solve possible scalability issues. Second, extending our framework to make it scale invariant could be useful in geospatial tasks. Third, at this stage ETNNs cannot directly handle time-varying scenarios, thus we aim to go beyond the static settings and develop dynamic or temporal variants of ETNNs. This could enable new applications in areas like spatio-temporal forecasting, and physical simulations. Fourth, and finally, ETNNs assume that (potentially) each rank should satisfy the same $E(n)$ equivariance, but another interesting direction is studying the setting in which each rank is required to satisfy different symmetries, pointing towards a principled use of products of groups.  

\noindent\textbf{Computational.} While our experimental validation of ETNNs covered both supervised inductive and semisupervised transductive tasks, future works could tailor ETNNs to fit challenging specific applications at scale. In this sense, unsupervised or self-supervised training methods for ETNNs could enable the learning of meaningful representations in a data-driven manner without requiring labeled data for every task. This could allow ETNNs to better leverage large, unlabeled geometric datasets. Moreover,  though computationally tractable, ETNNs may benefit from further improvements to effectively scale on very large datasets. The main bottlenecks are MP operations and geometric invariants computation. A possible solution could be neighbor samplers \citep{zhang2019heterogeneous}, or developing $E(n)$ equivariant message passing-free ETNNs, building on works like \citep{maggs2024simplicial}.

\section{Related Works}\label{sec:related_works}
\vspace{-.2cm}

\noindent \textbf{Topological Deep Learning.} TDL is informed by foundational work in Topological Signal Processing (TSP) \citep{barbarossa2020topological, schaub2021signal, roddenberry2022cellsp, sardellitti2022cell}, highlighting the value of analyzing multi-way relationships in data. The works in \citep{bodnar2021weisfeiler,bodnarcwnet} extended the Weisfeiler-Lehman graph isomorphism test to simplicial and regular cell complexes \citep{bodnarcwnet}, respectively, showing that message passing over these spaces are more expressive than message passing over graphs. Convolutional \citep{ebli2020simplicial, yang2022effsimpl, hajij2020cell, yang2023convolutional, roddenberry2021principled, hajij2022} and attentional architectures \citep{battiloro2023generalized,giusti22, anonymous2022SAT, giusti2023cell} over simplicial and cell complexes have been introduced as well. Additionally, the application of message passing or diffusion on cellular sheaves \citep{hansen2019toward} over graphs \citep{hansen2020sheaf, sheaf2022, battiloro2022tangent, battiloro2023tangent, barbero2022sheaf} has proven effective in heterophilic scenarios. Models without message passing have been introduced for simplicial complexes \citep{madhu2024simplicialunsupervised, gurugubelli2024sann,maggs2024simplicial} and hypergraphs \citep{tang2024hypergraphmlp}. An architecture for inferring a latent regular cell complex to improve a downstream task has been introduced in \citep{battiloro2024dcm}.  Gaussian Processes over cell complexes have been introduced in \citep{alain2023gpcw}. \rev{Message passing networks over path complexes have been introduced in \citep{truong2024wlpath,li2024path}}. An extensive review of TDL is available in \citep{papillon2023architectures}. A comprehensive framework for TDL was outlined in \citep{hajij2023topological}, proposing combinatorial complexes and message-passing networks (CCMPNs) over them. \rev{A general topological framework for expressivity analysis of CCMPNs has been proposed in \citep{eitan2024topological}.} Finally, software for neural networks on combinatorial topological spaces has been presented in \citep{hajij2024topox, telyatnikov2024topobenchmarkx}.

\noindent \textbf{Equivariant Neural Networks}
The effectiveness of Convolutional Neural Networks \citep{he2016deep}, and their first group extension to the SO(2) group \citep{cohen2016group} initially demonstrated the advantages of equivariant models, and paved the way for a series of works exploring how to design models respecting some symmetries \citep{weiler2023EquivariantAndCoordinateIndependentCNNs}. Loosely speaking, some of these works employ natively equivariant function spaces  \citep{thomas2018tensor,fuchs2020se}, or port the
spatial space into high-dimensional spaces \citep{cohen2019general, finzi2020generalizing, hutchinson2021lietransformer,batatia2022mace}. An alternative strategy is designing neural networks \citep{kohler2019equivariant, schutt2017schnet, brandstetter2021geometric, batzner20223} that perform equivariant operations directly within the original space, including the (VN-)EGNN from \citep{satorras2021egnn,sestak2024vnegnn}. A characterization theorem for equivariant networks has been presented in \citep{pacini2024theoremequiv}. Recent works showed how to achieve equivariance by leveraging neural network architectures operating on the Clifford
algebra \citep{ruhe2024clifford, brehmer2024geometric}. Each of these approaches has limitations and advantages, and the effectiveness of a specific design choice is  related to the considered application and computational constraints \citep{han2022geometrically}.

\new{Our paper builds on both classes of works. To the best of our knowledge, the most relevant works merging TDL models and $E(n)$ equivariant neural networks are \citep{eijkelboom2023empsn,liu2024clifford, verma2024topnet}. In particular, \citep{eijkelboom2023empsn} generalizes the approach from \citep{satorras2021egnn} and introduces $E(n)$ Equivariant Simplicial Message passing Networks (EMPSNs). The framework in \citep{verma2024topnet} injects information about persistent homology (PH) on top of (equivariant) simplicial message passing networks, thus interestingly combining TDL and Topological Data Analysis. Finally, \citep{liu2024clifford} adopts the approach from \citep{ruhe2024clifford} by combining Clifford group equivariant layers with simplicial message passing networks. However, no other combinatorial topological spaces are considered, and, overall, a general framework for designing equivariant neural networks over combinatorial topological spaces is missing. Here we address this important gap by introducing our ETNNs.}

    \section{Combinatorial Topological Spaces as Combinatorial Complexes}\label{sec:cc_to_cts}
\vspace{-.2cm}
    In this appendix, we revisit a subclass of mathematical objects (not always formal topological spaces) that can be described combinatorially, message passing networks operating on them, and how they can be cast in the ETNN frameworks. As we did for CCs, we will often overload the notation $\cX$ in the following subsections, incurring some notation abuses for the sake of exposition and consistency.

\subsection{Graphs}\label{sec:graphs}
\vspace{-.2cm}
The first combinatorial objects we describe are obviously graphs. They are remarkably useful in several fields, across several tasks, thanks to their ability to encode prior data knowledge, i.e., their networked structure.

\textbf{Graphs.} A graph $\cX$ is a pair $(\cS,\cE)$, where $\cS$ is the set of nodes, and $\cE$ is the set of edges (i.e., pairs of nodes).

\textbf{Neighbhorhood Functions in a Graph.} Given a graph $\cX$, for all $x \in \cS$, the up adjacency $\cN_{A,\uparrow}$ (i.e., the usual node adjacency) is defined as
\begin{equation}\label{eq:adjacency_graph}
    \cN_{A,\uparrow}(x)=\{y\in \cS|\exists z \in \cE: x \subset z \textrm{ and } y \subset z\}.
\end{equation}

\textbf{Graphs as Combinatorial Complexes.} Graphs can be easily cast as combinatorial complexes.  Let us denote the set of singletons $\widetilde{\cS} = \{\{s\}\}_{s \in \cS}$, a graph $\cX$ is a combinatorial complex $(\cS,\cX,\rk)$, where $\cS$ is the set of nodes, the set of cells $\cX=\widetilde{\cS}$ is the set of singletons, and the rank function $\rk$ assigns rank 0 to each singleton. Please notice that there is no need to include edges as cells in the complex, differently from, e.g., simplicial complexes, as detailed in the next subsection, because a  CCMPN as in \eqref{eq:message_passing} with $\cCN = \{\cN_{A,\uparrow}\}$ recovers the standard Graph Message Passing Network from \citep{gilmer2017}. An ETNN as in \eqref{eq:eq_message_passing} with the same collection of neighborhoods $\cCN$ recovers the $E(n)$ Equivariant Graph Neural Network (EGNN) from \citep{satorras2021egnn}.

\subsection{Simplicial Complexes}\label{sec:simplicial_complex}
Graphs are capable of explicitly managing only pairwise interactions among nodes. However, in numerous applications \citep{giusti2016two,kanari2018topological,patania2017shape, sardellitti2022cell}, including the ones presented in this work, higher-order multiway interactions need to be taken into account. Simplicial complexes are one of the most intuitive objects to start incorporating them.

\textbf{Simplicial Complexes.} Given a finite set of nodes $\cS$, a $k$\textit{-simplex} $x$ is a subset of $\cS$ with cardinality $k+1$. A \textit{face} of $x$ is a $k-1$-simplex being a subset of it, thus a $k$-simplex has $k+1$ faces. A \textit{coface} of $x$ is a $(k + 1)$-simplex being a superset of it \citep{barbarossa2020topological,lim2020hodge}. A simplicial complex (SC)  $\cX$ of order $K$ is a collection of $k$-simplices $x$, $k = 0, \ldots, K$ such that, if a simplex $x$ belongs to $\cX$, then all its subsets $y \subset x$ also belong to the complex (inclusivity property). In most of the cases, the focus is on complexes of order up to two, thus having a set of nodes, a set of edges, and a set of triangles. To give a simple example, in Fig. \ref{fig:SC} we sketch a simplicial complex of order $2$. 
\begin{figure}[t]
    \centering\includegraphics[scale=.12]{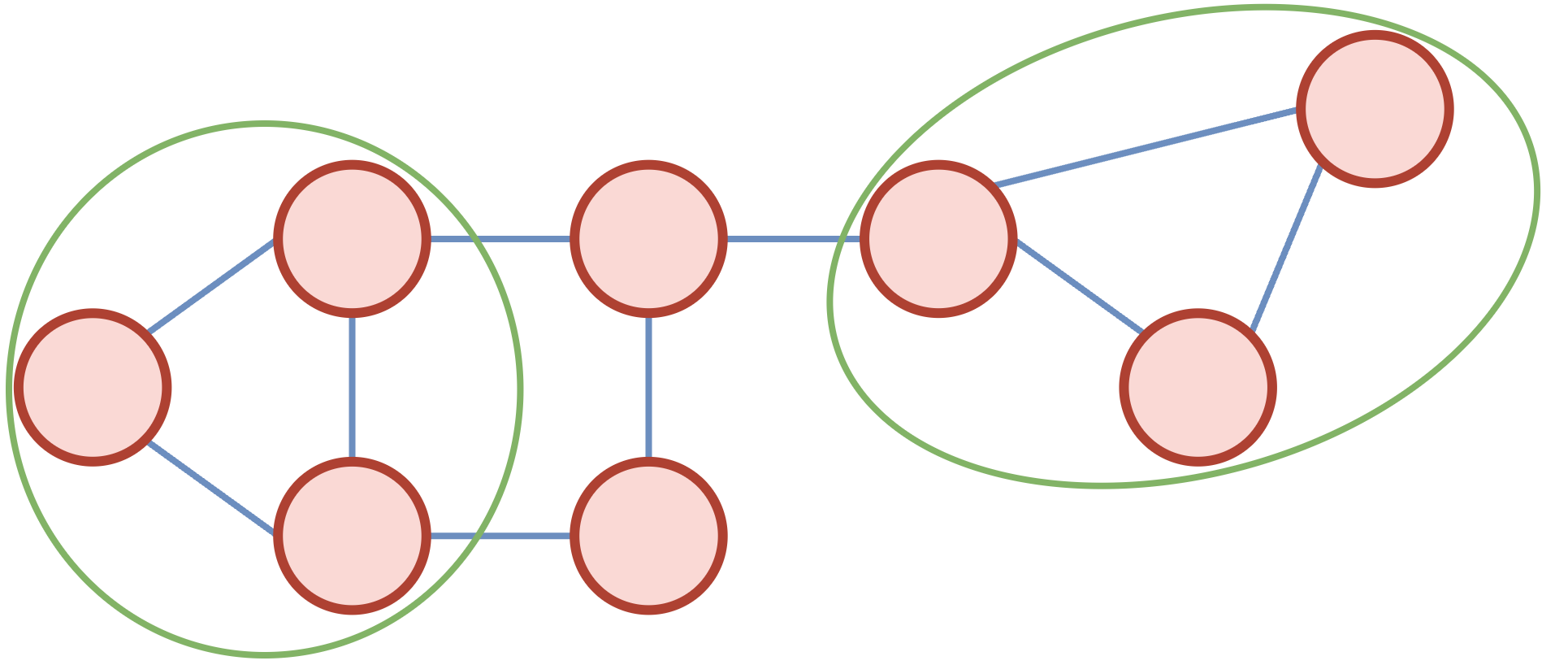}
  \centering\caption{A simplicial complex of order 2, comprising of  \textcolor{rank0_red}{nodes} (cells of rank 0), \textcolor{rank1_blue}{edges} (cells of rank 1), and \textcolor{rank2_green}{triangles} (cells of rank 2).} \label{fig:SC}
\end{figure}

\textbf{Neighbhorhood Functions in a Simplicial Complex.} Given a SC $\cX$, the \textit{coboundary} and \textit{boundary} neighborhood functions $\cN_{C}$ and $\cN_{B}$, respectively, are defined, as
\begin{equation}\label{eq:incidences_SC}
     \cN_{C}(x) = \{y \in \cX| y \textrm{ is a coface of }x\}, \quad \cN_{B}(x) = \{y \in \cX| y \textrm{ is a face of }x\}.
\end{equation}
The up and down adjacencies $\cN_{A, \uparrow}$ and $\cN_{A,\downarrow}$, respectively, are defined, as
\begin{align}\label{eq:adjacencies_SC}
     &\cN_{A,\uparrow}(x) = \{y \in \cX |y  \textrm{ shares a coface with } x\}, \; \cN_{A,\downarrow}(x) = \{y \in \cX |y \textrm{ shares a face with } x\}.
\end{align}
\textbf{Simplicial Complexes as Combinatorial Complexes.} It is easy to frame SCs in the CC framework. In particular, a simplicial complex $\cX$ is a combinatorial complex $(\cS,\cX,\rk)$, where $\cS$ is the set of nodes, the set of cells $\cX$ is the set of simplices, and the rank function $\rk$ assigns to each cell its cardinality minus one. A CCMPN as in \eqref{eq:message_passing} with $\cCN = \{\cN_{C},\cN_{B}, \cN_{A,\uparrow},\cN_{A,\downarrow}\}$ recovers the Message Passing Simplicial Network from \citep{bodnar2021weisfeiler}. An ETNN as in \eqref{eq:eq_message_passing} with the same collection of neighborhoods $\cCN$ recovers the $E(n)$ Equivariant Message Passing Simplicial Network (EMPSN) from \citep{eijkelboom2023empsn}.

\subsection{Cell Complexes}\label{sec:cell_complexes}
\vspace{-.2cm}
Simplicial complexes are capable of managing relationships of varying degrees, yet the inclusion property often binds them. This property mandates that if a set is part of the space, then all its respective subsets must also be included in that space. In numerous applications, this restriction is not only unnecessary but also lacks a justified basis. Consequently, cell complexes are frequently employed instead \citep{mulder2018network,hirani2010least}.
These structures are versatile and support a hierarchical organization.

\textbf{Regular Cell Complexes.} A \textit{regular cell complex} (or regular CW complex) is a topological space $\cX$ together with a partition $\{\cX_{x}\}_{x \in \cH_{\cX}}$ of subspaces $\cX_{x}$ of $\cX$ called cells, where $\cH_{\cX}$ is the indexing set of $\cX$, such that
\begin{enumerate}[leftmargin=*]
    \item For each $x$ $\in$  $\cX$, every sufficient small neighborhood of $x$ intersects finitely many $\cX_{x}$;  
    \item For all $x$,$y$ we have that $\cX_{x}$ $\cap$ $\overline{\cX}_{y}$ $\neq$ $\varnothing$ iff $\cX_{x}$ $\subseteq$ $\overline{\cX}_{y}$, where $\overline{\cX}_{y}$ is the closure of the cell;
    \item Every $\cX_{x}$ is homeomorphic to $\mathbb{R}^{k}$ for some $k$;
    \item For every $x$ $\in$ $\cH_{\cX}$ there is a homeomorphism $g$ of a closed ball in $\mathbb{R}^{k}$ to $\overline{\cX}_{x}$ such that the restriction of $g$ to the interior of the ball is a homeomorphism onto $\cX_{x}$.
\end{enumerate}
From 2, $\cH_{\cX}$ has a poset structure, given by $y$ $\leq$ $x$ iff $\cX_{y}$ $\subseteq$ $\overline{\cX_x}$, and we say that $y$ \textit{bounds} $x$. This is known as the \textit{face poset} of $\cX$. From 4,  all of the topological information about $\cX$ is encoded in the poset structure of $\cH_{\cX}$.
Then, a regular cell complex can be identified with its face poset. From now on we will indicate the cell $\cX_{x}$ with its corresponding face poset element $x$. The dimension or order $\textrm{dim}(\sigma)$ of a cell $\sigma$ is $k$, and we call it a $k-$cell. The dimension or order of a regular cell complex is the largest dimension of any of its cells. In the following, we will refer to regular cell complexes simply as cell complexes or CW complexes. Cell complexes can be described via an incidence relation (boundary relation) with a reflexive and transitive closure that is consistent with the partial order introduced in the above definition. In particular, we have the boundary relation  $y$ $\prec$ $x$ iff $\dim({y})$ $\leq$ $\dim({x})$ and there is no cell $z$ such that $y$ $\leq$ $z$ $\leq$ $x$. In other words,  the boundary of a cell $x$ of dimension $k$ is the set of all cells of dimension less than $k$ bounding $x$. The {\it k-skeleton} of a regular cell complex $\cX$ is the subcomplex of $\cX$ consisting of cells of dimension at most $k$. Usually, CW complexes $\cX$  are constructed through a hierarchical
gluing procedure starting from a set of nodes $\cS$ \citep{hansen2019toward,bodnarcwnet}. Therefore, we can interpret the 0-skeleton  of a cell complex $\cX$ as the set $\cS$ of 0-cells (nodes) and the 1-skeleton as the set of nodes together with the set of 1-cells (edges), thus a graph. For this reason, given a graph $\cG$, i.e., a realization of an order-1 cell complex, it is possible to lift it to an order-2 cell complex $\cX$ whose 1-skeleton is isomorphic to $\cG$. A way to do it is by attaching order 2 cells to the/a set/subset of simple/induced cycles of $\cG$. Cells of order greater than 2 can be attached as well using the same gluing procedure mentioned above \citep{sardellitti2022cell,grady2010discrete}, but there is usually little interest in them.  In Fig. \ref{fig:CC}, we sketch an order 2 CW complex.
\begin{figure}[h]
\centering\includegraphics[scale=.12]{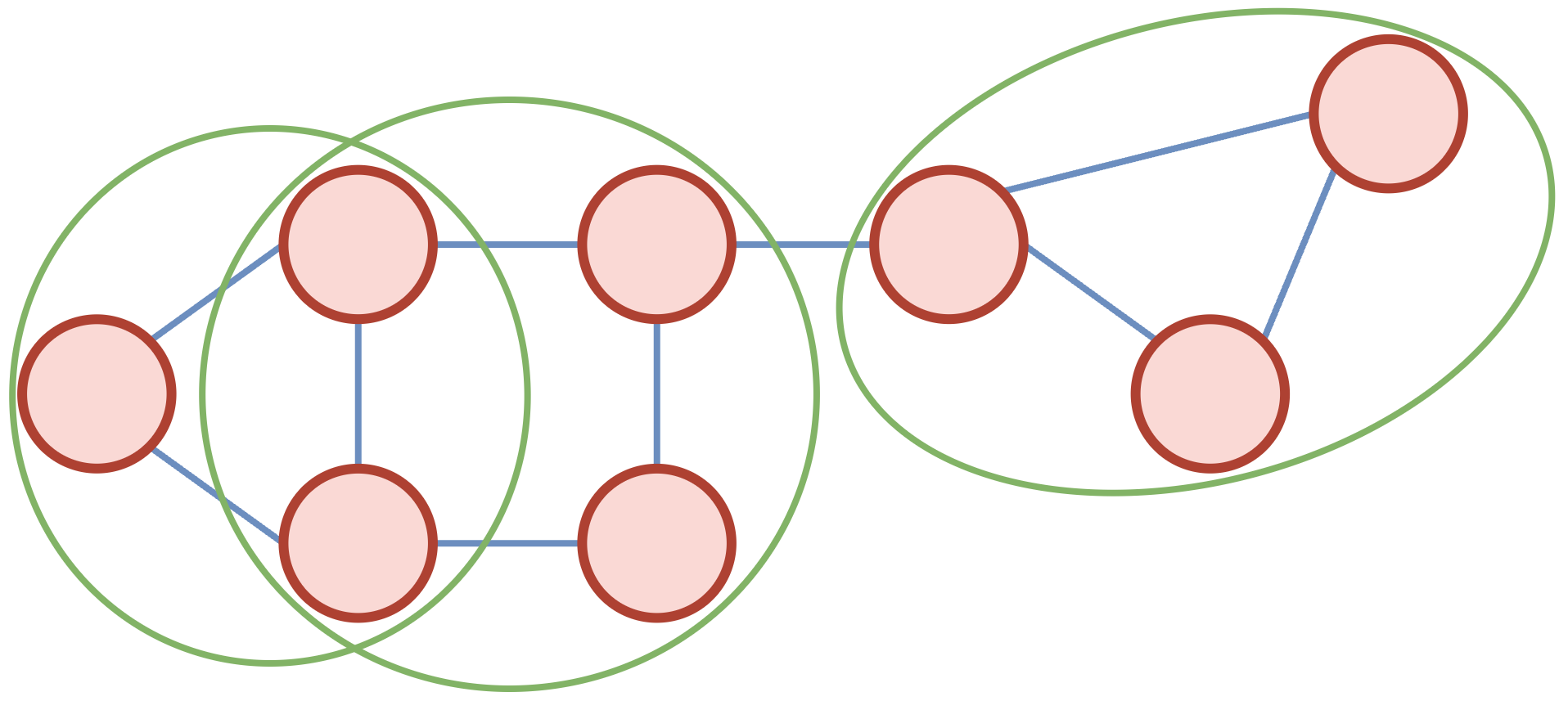}
  \centering\caption{A CW complex of order 2, comprising of  \textcolor{rank0_red}{nodes} (cells of rank 0), \textcolor{rank1_blue}{edges} (cells of rank 1), and \textcolor{rank2_green}{cycles} (cells of rank 2).} \label{fig:CC}
\end{figure}
\begin{remark}
    It is easy to notice that a CW complex considering cycles of length up to 3 is equivalent to a simplicial complex of order 2 having the same set $\cS$ of 0-simplices (nodes), the same set of 1-simplices (edges), and the same set of 2-simplices (triangles). In general, simplicial complexes are special cases of CW complexes for which any $k$-cell is composed of exactly $k+1$ nodes.
\end{remark}
\begin{remark}\label{rmk:simplicial_topological}
    \normalfont The reader may have noticed that we gave no explicit characterization of a simplicial complex as a topological space, although it is pretty easy to particularize cell to simplicial complexes. This is because our definition of SC in Sec. \ref{sec:simplicial_complex} is, to be completely rigorous, the definition of an \textit{abstract} simplicial complex, that itself is not a topological space. However, when we refer to a simplicial complex as a topological space, we are implicitly talking about the \textit{geometric realization} of an abstract simplicial complex. Therefore, we are implicitly stating that the SC has a topology coherent with its simplices, which are "topologized" as homeomorphisms of the standard simplices living in Euclidean space. Please refer to \citep{barbarossa2020topological} for a detailed but accessible description. We will not incur any ambiguity by referring to abstract simplicial complexes as just simplicial complexes.
\end{remark}

\textbf{Neighbhorhood Functions in a Cell Complex.} Given a CW complex $\cX$, the \textit{coboundary} and \textit{boundary} neighborhood functions $\cN_{C}$ and $\cN_{B}$, respectively, are defined, as
\begin{equation}\label{eq:incidences_CW}
     \cN_{C}(x) = \{y \in \cX| y \prec x\}, \quad \cN_{B}(x) = \{y \in \cX| x \prec y\}.
\end{equation}
The up and down adjacencies $\cN_{A, \uparrow}$ and $\cN_{A,\downarrow}$, respectively, are defined, as
\begin{align}\label{eq:adjacencies_CW}
     &\cN_{A,\uparrow}(x) = \{y \in \cX |\exists z  \textrm{ such that } x \prec z, y\prec z\}, \;\cN_{A,\downarrow}(x) = \{y \in \cX |\exists z  \textrm{ such that } z \prec x, z\prec y\}.
\end{align}
\textbf{Cell Complexes as Combinatorial Complexes.} It is again straightforward to frame CW complexes in the CC framework. In particular, a cell complex is a combinatorial complex $(\cS,\cX,\rk)$, where $\cS$ is the set of nodes, $\cX$ is the set of cells, and the rank function $\rk$ assigns to each cell its dimension $\mathrm{dim}$. A CCMPN as in \eqref{eq:message_passing} with $\cCN = \{\cN_{B}, \cN_{A,\uparrow}\}$ on a CW complex recovers the Molecular Message Passing Network from \citep{bodnarcwnet}. ETNNs as in \eqref{eq:eq_message_passing} with the same collection of neighborhoods $\cCN$ give rise to $E(n)$ Equivariant Molecular Message Passing Networks, not present in literature\footnote{\new{This approach was recently explored in \citep{kovavc2024equivcellular}, appeared online only \textit{after} we released the first preprint of ETNN.}}.
\begin{remark}
      Simplicial Complexes and Cell Complexes are rich topological objects. Neighborhoods in a simplicial or a cell complex are strongly rooted in arguments from algebraic topology.  In particular, Hodge theory \citep{griffiths1975recent} is used to derive formal notions of boundary, coboundary, and Hodge Laplacians \citep{barbarossa2020topological}. More detailed descriptions can be found in \citep{grady2010discrete, barbarossa2020topological,battiloro2023weighted, ribando2022graph}.  In this work, we are interested in the definition of cells (in the CC sense) and neighborhood functions that these objects (SCs and CWCs) induce. For this reason, a reader familiar with algebraic topology could find our definitions and treatment not totally comprehensive. However, we aim to provide a unifying framework in the context of (equivariant) TDL, thus our presentation is sufficiently consistent and strikes a good tradeoff between generality, rigor, and applicability. 
\end{remark}

\subsection{Hypergraphs}
\vspace{-.2cm}

Cell complexes have their formal hierarchical and topology-grounded characterization as their main strength. However, their structure is still constrained by the homeomorphism requirements, preventing them from modeling arbitrary higher-order hierarchical interactions. On the other hand, hypergraphs are combinatorial objects that have total flexibility, but without any hierarchical and (in the general case) topological structure. 

\textbf{Hypergraphs.} A hypergraph (HG) on a nonempty set of nodes $\cS$ is a pair $(\cS, \cE)$, where $\cX$ is a subset of the power set of $\mathcal{P}(\cS) \backslash \{\emptyset\}$ of $\cS$. Elements of $\cS$ are called \textit{hyperedges}. In Fig. \ref{fig:HG}, we sketch a hypergraph with 3 nodes and 3 hyperedges.

\textbf{Neighbhorhood Functions in a Hypergraph.} Let us again denote the set of singletons $\widetilde{\cS} = \{\{s\}\}_{s \in \cS}$. Given a HG $\cX$, for all $x \in \widetilde{\cS} \cup \cE$, the up and down incidences $\cN_{I, \uparrow}$ and $\cN_{I, \downarrow}$, respectively, are defined as
\begin{equation}\label{eq:adjacencies_HG}
    \cN_{I, \uparrow}(x) = \{y \in \widetilde{\cS} \cup \cE| x\subset y\}, \quad \; \cN_{I, \downarrow}(x) = \{y \in \widetilde{\cS} \cup \cE| y\subset x\}.
\end{equation}
Therefore, if $x$ is a node, the down and up incidences evaluated at $x$ are the empty set and the hyperedges containing $x$, respectively. If $x$ is a hyperedge, then the down and up neighborhood functions evaluated at $x$ are the nodes contained in $x$ and the empty set, respectively.

\textbf{Hypergraphs as Combinatorial Complexes.} Even in this case, framing HGs in the CC framework is straightforward. In particular, a hypergraph $\cX$ is a combinatorial complex $(\cS,\cX,\rk)$, where $\cS$ is the set of nodes, the set of cells $\cX = \widetilde{\cS} \cup \cE$, and the rank function $\rk$ assigns rank 0 to the nodes and rank 1 to the hyperedges. A CCMPN as in \eqref{eq:message_passing} with $\cCN = \{\cN_{I, \uparrow}, \cN_{I, \downarrow}\}$ recovers the Hypergraph Message Passing Networks from \citep{feng2019hypergraph} (in the convolutional realization) and \citep{heydari2022messagehyper}. ETNNs as in \eqref{eq:eq_message_passing} with the same collection of neighborhoods $\cCN$ give rise to $E(n)$ Equivariant Hypergraph Message Passing Networks, not present in literature.
\begin{figure}[t]
    \centering\includegraphics[scale=.12]{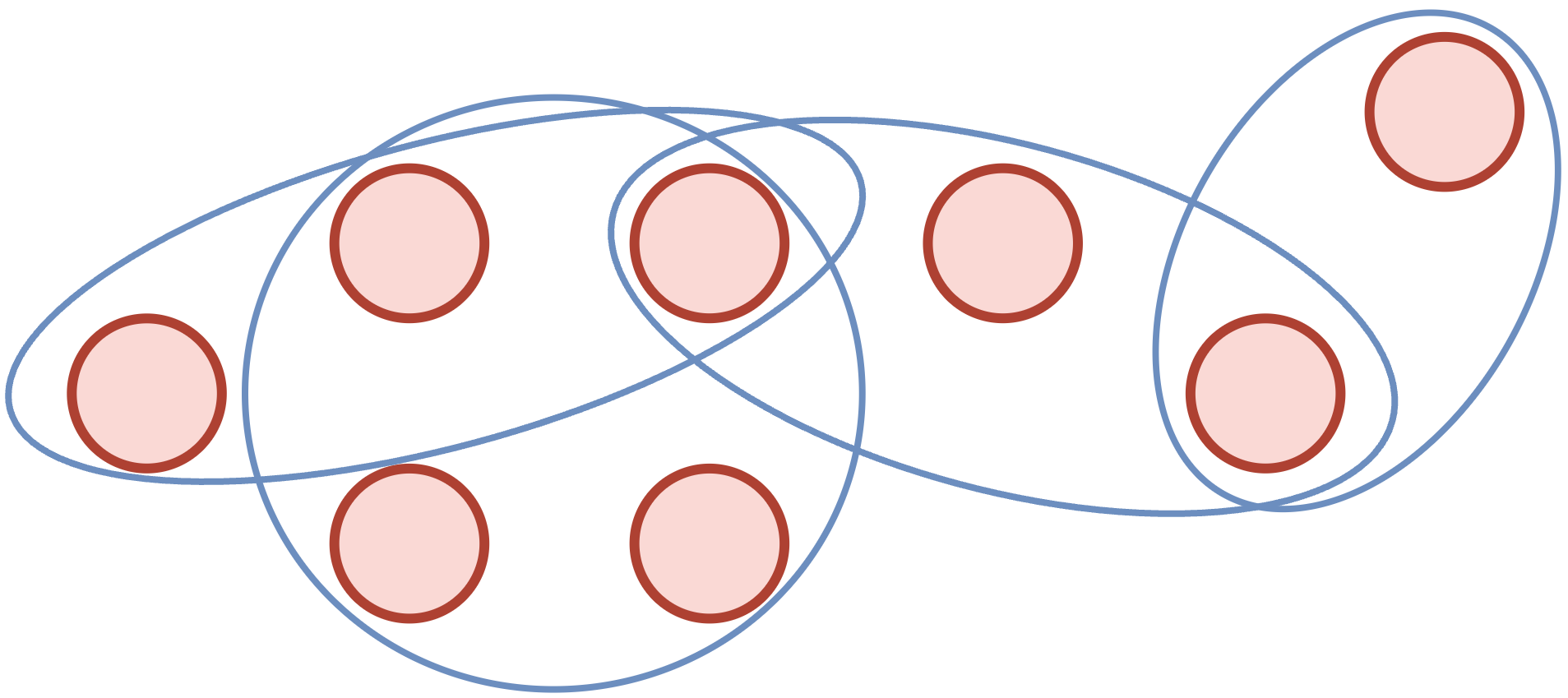}
  \centering\caption{A hypergraph, comprising of  \textcolor{rank0_red}{nodes} (cells of rank 0), \textcolor{rank1_blue}{hyperedges} (cells of rank 1).} \label{fig:HG}
\end{figure}

\section{ETNNs with Velocity Type Inputs}\label{sec:velocity_type}
\vspace{-.2cm}
\new{As EGNNs, ETNNs can handle settings in which nodes come with velocity type geometric features.  In this case, we have non-geometric feature $\bh_x \in \bbR^F$, and  positions $\bx_z \in \bbR^n$ and velocities $\bv_z$ for cell $z \in \cS$.  We simply need to modify the position update in \eqref{eq:eq_update_pos} to include the velocity update. Let us denote the velocity of the $z$-th node at layer $l$ with $\bv_z^l$. At the $l$-th layer, an ETNN updates the embeddings $\bh_x^l$ of every cell $x \in \cX$ as in \eqref{eq:eq_message_passing}, and the position $\mathbf{x}_z^l$ and velocity $\mathbf{v}_z^l$ of every node $z \in \mathcal{S}$ as
\begin{align}\label{eq:pos_vel_update}
& \mathbf{v}_z^{l+1}=\zeta\left(\bh_z^l\right) \mathbf{v}_z+C \sum_{\mathcal{N} \in \mathcal{C N}} \sum_{t \in \mathcal{S}:\{t\} \in \mathcal{N}(z)}\left(\mathbf{x}_z^l-\mathbf{x}_t^l\right) \xi\left(\mathbf{m}_{z, t}^{\mathcal{N}}\right), \quad \mathbf{x}_z^{l+1}=\mathbf{x}_z^l+\mathbf{v}_z^{l+1}.
\end{align}}

\section{Proof of Theorem \ref{th:equivariance}}\label{sec:proof_th_1}
\vspace{-.2cm}
The proof of Theorem \ref{th:equivariance} is a straightforward generalization of the proof provided for Equivariant Graph Neural Networks (EGNN) in \citep{satorras2021egnn}.  In particular,  we can directly state that the update of the features in \eqref{eq:eq_message_passing} is $E(n)$ invariant because the positions $\{\bx_z\}_{z \in x}$ are used only as input to the geometric invariant(s) $\Inv$ and the features $\bh_x$ contain no geometric information, i.e.
\begin{equation}\label{eq:eq_message_passing_inv}
    \bh_x^{l+1} = \beta \left(\bh^l_x, \bigotimes_{\cN \in \cCN}\bigoplus_{y \in \cN(x)} \underbrace{\psi_{\cN,\rk(x)}\left(\bh^l_x,\bh^l_y, \Inv\left(\{\bO\bx_z^l+\bb\}_{z \in x},\{\bO\bx_z^l+\bb\}_{z \in y}\right)\right)}_{\bm_{x,y}^\cN}\right),
\end{equation}
for all $x \in \cX$ and $(\bO,\bb) \in E(n)$. Therefore, we just need to show that the update of the positions in \eqref{eq:eq_update_pos} is $E(n)$ equivariant, i.e., we want to prove that
\begin{equation}\label{eq:eq_update_pos_proof}
    \bO\bx_z^{l+1}+\bb= \bO\bx_z^{l}+\bb+ C\sum_{\cN \in \cCN}\sum_{t \in S: \{t\} \in \cN(z)}\left((\bO\bx_z^l + \bb) -(\bO\bx_t^l+\bb)\right)\xi\left(\bm_{z,t}^{\cN}\right), 
\end{equation}
for all $z \in \cS$ and $(\bO,\bb) \in E(n)$. By direct computation, as showed in \citep{satorras2021egnn}, we can write
\begin{align}
&\bO \bx_z^l+\bb+C \sum_{\cN \in \cCN}\sum_{t \in S: \{t\} \in \cN(z)}\left((\bO \bx_z^l+\bb)-(\bO \bx_t^l+\bb)\right) \xi\left(\bm_{z,t}^{\cN}\right) \nonumber \\
&=\bO \bx_z^l+\bb+\bO C \sum_{\cN \in \cCN}\sum_{t \in S: \{t\} \in \cN(z)}\left(\bx_z^l-\bx_t^l\right) \xi\left(\bm_{z,t}^{\cN}\right)  \nonumber \\
&=\bO\left(\bx_z^l+C \sum_{\cN \in \cCN}\sum_{t \in S: \{t\} \in \cN(z)}\left(\bx_z^l-\bx_t^l\right) \xi\left(\bm_{z,t}^{\cN}\right)\right)+\bb =\bO \bx_z^{l+1}+\bb.
\end{align}
Therefore, the proof is concluded as the updates in \eqref{eq:eq_message_passing}-\eqref{eq:eq_update_pos} are $E(n)$ invariant and equivariant, respectively. 

\new{The proof is slightly different if velocities $\bv_z$ are available. In this case, we want to prove that the updates in  \eqref{eq:pos_vel_update} are $E(n)$ equivariant. Therefore, for the velocity update, we want to show that
\begin{equation}
\mathbf{O}\bv_z^{l+1}=\zeta\left(\bh_z^l\right) \mathbf{O}\bv_z^0+C \sum_{\cN \in \cCN}\sum_{t \in S: \{t\} \in \cN(z)}\left(\left(\mathbf{O}\bx_z^l+\bb \right)-\left(\mathbf{O}\bx_t^l+\bb\right)\right) \xi\left(\bm_{z,t}^{\cN}\right),
\end{equation}
for all $z \in \cS$ and $(\bO,\bb) \in E(n)$. Again by direct computation, we can write
\begin{align}
&\zeta\left(\bh_i^l\right) \mathbf{O}\bv_z^0+C \sum_{\cN \in \cCN}\sum_{t \in S: \{t\} \in \cN(z)}\left(\left(\mathbf{O}\bx_z^l+\bb\right)-\left(\mathbf{O}\bx_t^l+\bb\right)\right) \xi\left(\bm_{z,t}^{\cN}\right) \nonumber\\
&=\bO\zeta\left(\bh_i^l\right) \bv_z^0+\bO C \sum_{\cN \in \cCN}\sum_{t \in S: \{t\} \in \cN(z)}\left(\bx_z^l-\bx_t^l\right) \xi\left(\bm_{z,t}^{\cN}\right) \nonumber \\
& =\bO\left(\zeta\left(\bh_i^l\right) \bv_z^0+C \sum_{\cN \in \cCN}\sum_{t \in S: \{t\} \in \cN(z)}\left(\bx_z^l-\bx_t^l\right) \xi\left(\bm_{z,t}^{\cN}\right)\right) =\mathbf{O}\bv_z^{l+1}.
\end{align}
It is trivial to see that the position update in \eqref{eq:pos_vel_update} is $E(n)$ equivariant. Therefore, the proof is concluded as the updates in \eqref{eq:eq_message_passing}-\eqref{eq:pos_vel_update} are $E(n)$ invariant and equivariant, respectively.}

\section{Expressiveness of ETNNs}\label{sec:expressivity}
\vspace{-.2cm}
The ability of Graph Neural Networks (GNNs) to differentiate between non-isomorphic graphs is typically employed as an expressivity metric. In particular, it is well known that vanilla GNNs are at maximum as powerful as the Weisfeiler-Leman (WL) test \citep{weisfeiler1968reduction} in discriminating isomorphic graphs. As such, the WL test sets a limit on the expressiveness of GNNs. To broaden the use of this method to include geometric graphs, the Geometric Weisfeiler-Leman test (GWL) has been introduced \citep{joshi2023gwl}. This test evaluates whether two (undirected) graphs are geometrically isomorphic, as defined next.

\noindent\textbf{Geometrically Isomorphic Graphs.}\citep{joshi2023gwl,sestak2024vnegnn} Two geometric graphs $\cG_1=(\cS_1, \cE_1)$ and $\cG_2=(\cS_2, \cE_2)$ with $|\cS_1|=|\cS_2|$, non-geometric and geometric $i$-th node features $\bh_{x_i}^{\cG_j}$ and  $\bx_{x_i}^{\cG_j}$ ($j \in\{1,2\}$), are called geometrically isomorphic if there exists an edge-preserving bijection $b: \cI_1 \rightarrow \cI_2$ between their corresponding node indices set $\cI_1$ and $\cI_2$, such that their positions are equivalent up to the $E(n)$ group action, i.e.:
\begin{equation}
\bh_{x_{b(i)}}^{\cG_2},\bx_{x_{b(i)}}^{\cG_2} = \bh_{x_i}^{\cG_1}, \bO\bx_{x_i}^{\cG_1}+\bb
\end{equation}
for all $i \in \cI_1,(\bO,\bb) \in E(n)$.

\noindent\textbf{$\mathbf{k}$-hop Distinct Graphs.}\citep{joshi2023gwl,sestak2024vnegnn}  Two geometric graphs $\cG_1$ and $\cG_2$ are said to be $k$-hop distinct if for all graph isomorphisms $b$, there is some node $i \in \cI_1, b(i) \in \cI_2$ such that the corresponding $k$-hop neighborhood subgraphs are distinct (not geometrically isomorphic). Otherwise, if they are identical up to $E(n)$ actions for all $i \in \cI_1$, we say $\cG_1$ and $\cG_2$ are said to be $k$-hop identical. 

To study Combinatorial Complexes (CCs) and the expressiveness of Equivariant CCMPNs (ETNNs), we introduce the notion of Geometric Augmented Hasse Graph representation of CCs, generalizing \citep{hajij2023topological} and the notion of Hasse diagram for finite partially ordered sets \citep{demey2014relationshiphasse} to the geometric setting.

\noindent\textbf{Geometric Augmented Hasse Graph of a Combinatorial Complex.} Let $(\cS, \cX, \rk)$ be a CC where each node $z \in \cS$ comes with positions $\bx_z$, and $\cCN$ a collection of neighborhood functions. The Geometric Augmented Hasse graph $\cG_\cX$ of $(\cS, \cX, \rk)$ is a (possibly) directed geometric graph $\cG_\cX=$ $\left(\cX,\cE\right)$ with cells as nodes, edges given by
\begin{equation}
    \cE =\{(y,x)| x \in \cX, y \in \cX, \exists \cN \in \cCN: y \in \cN(x)\},
\end{equation}
and positions of cell $x$ being a linear permutation invariant function $\bigoplus_{z\in x}\bx_z$ of its corresponding node positions$\{\bx_z\}_{z\in x}$, $z \in \cS$.
Therefore, the geometric augmented Hasse graph of a CC is a graph representation of it obtained by considering the cells as nodes, by inserting edges among them if the cells are neighbors in the  CC, and by assigning positions to higher-order cells as functions of the nodes (of the CC) they contain. Fig. \ref{fig:cc_hasse} shows an example of a CC and the corresponding geometric augmented Hasse graph. Given a CC $\mathcal{C}$, a standard graph message-passing network that runs over the augmented Hasse graph $\mathcal{G}_\mathcal{X}$ induced by a collection of neighborhoods $\mathcal{CN}$ is equivalent to a specific CCMPN derived as a particular case of \eqref{eq:message_passing} and running over $\mathcal{X}$ using $\mathcal{CN}$ \citep{hajij2023topological}. The following also holds.
\begin{figure}
\centering
\begin{subfigure}[b]{0.49\textwidth}
   \includegraphics[width=1\linewidth]{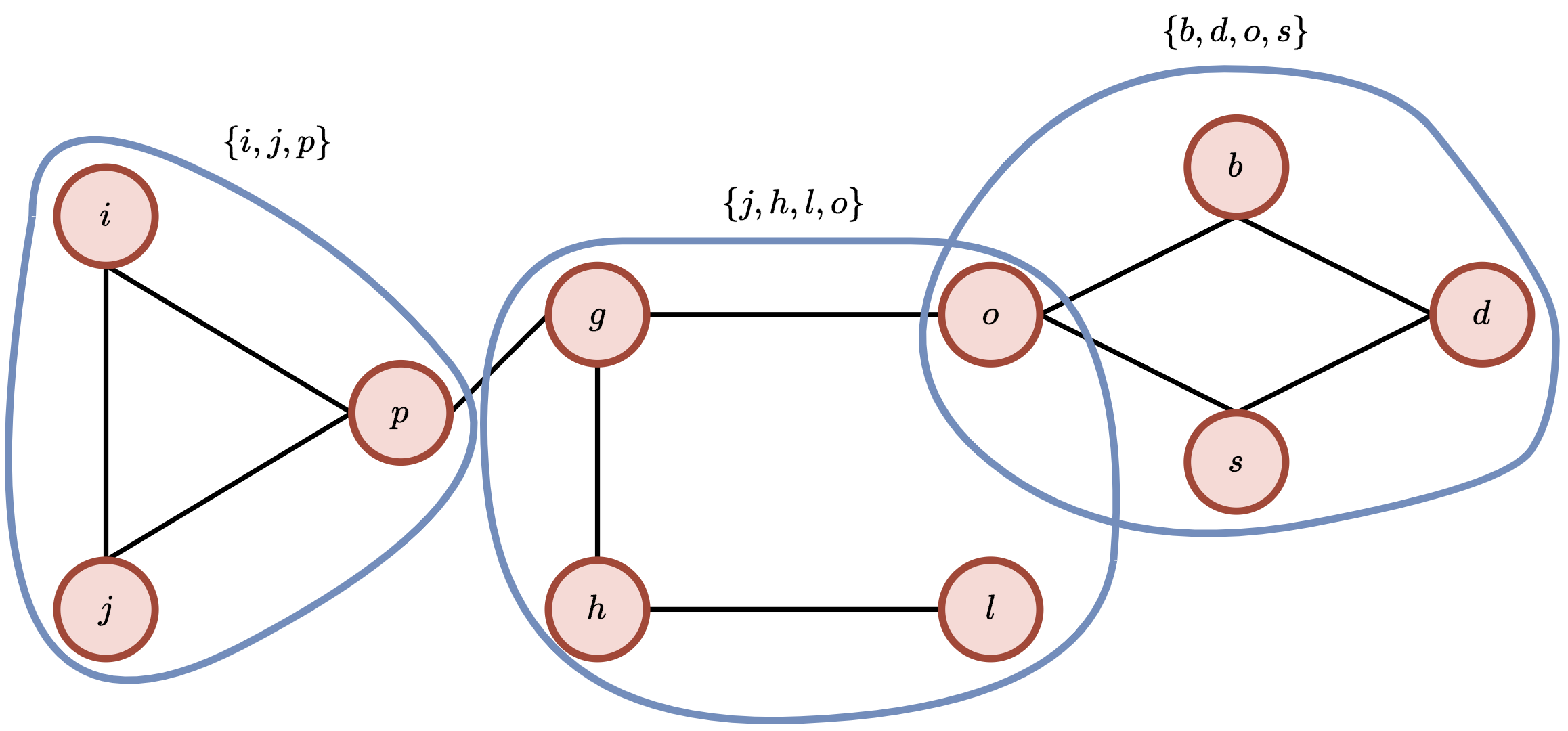}
   \caption{}
   \label{fig:cc_hasse_a} 
\end{subfigure}
\begin{subfigure}[b]{0.49\textwidth}
   \includegraphics[width=1\linewidth]{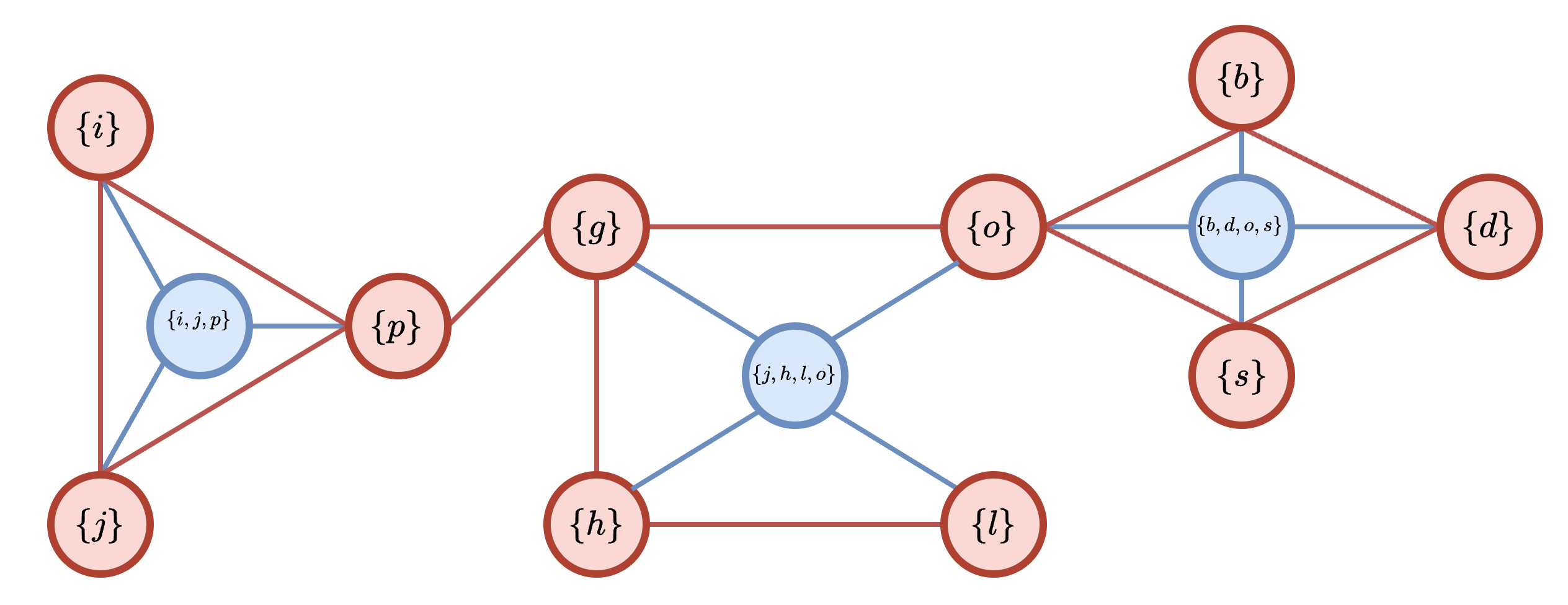}
   \caption{}
   \label{fig:cc_hasse_b}
\end{subfigure}

\caption{(a) A combinatorial complex with nodes of the underlying graph as cells of \textcolor{rank0_red}{ rank 0}, and with three arbitrary cells of \textcolor{rank1_blue}{ rank 1}. (b) The corresponding geometric augmented Hasse graph if $\cCN = \{\textcolor{rank0_red}{\cN_{A,\uparrow}},\textcolor{rank1_blue}{\cN_{I,\uparrow}},\textcolor{rank1_blue}{\cN_{I,\downarrow}}\}$ with $\textcolor{rank0_red}{\cN_{A,\uparrow}}$ as in \eqref{eq:adjacency_graph}, and $\bigoplus$ is the mean. This is also an example of a skeleton-preserving lift.} 
\label{fig:cc_hasse}
\end{figure}

\begin{proposition}\label{prop:eq_ETNN_egnn}
Let $(\cS, \cX, \rk)$ be a CC where each node $z \in \cS$ comes with positions $\bx_z$, $\cCN$ a collection of neighborhood functions, and  $\cG_\cX$ the corresponding geometric augmented Hasse graph. Assuming that:
\begin{enumerate}[leftmargin=*]
    \item a single message function for all neighborhoods and ranks, i.e., $\psi_{\cN,\rk(x)} = \psi$ in \eqref{eq:eq_message_passing};
    \item the inter- and intra-neighborhood aggregation functions are the same, i.e., $\bigotimes=\bigoplus$ in 
    \eqref{eq:eq_message_passing};
    \item the only employed geometric invariants are sum of distances of linear permutation invariant functions as in \eqref{eq:distance_per_inv_fun}, e.g. sum or mean.
\end{enumerate}
Then ETNNs over $(\cS, \cX, \rk)$ are equivalent to EGNNs \citep{satorras2021egnn} over $\cG_\cX$ with the following modified update for positions of node  $x$ (of the Hasse graph):
\begin{equation}\label{eq:eq_update_pos_modified}
    \bx_x^{l+1}=\begin{cases}
        \bx_x^{l} +C\sum_{\cN \in \cCN}\sum_{t \in S: \{t\} \in \cN(z)}\left(\bx_x^l-\bx_t^l\right)\xi\left(\bm_{x,t}\right), \text{ if } x \in \cS,\\
        \bx_x^{l+1} = \bigoplus_{z \in x}\bx_z^{l+1}, x \in \cX \setminus \cS
        \end{cases} 
\end{equation}
\end{proposition}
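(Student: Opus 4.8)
The plan is to prove the equivalence layer by layer and by induction on the layer index $l$, decomposing one ETNN update into its three ingredients --- the non-geometric feature update \eqref{eq:eq_message_passing}, the geometric invariants feeding the messages, and the position update \eqref{eq:eq_update_pos} --- and matching each with the corresponding ingredient of an EGNN run on the Geometric Augmented Hasse Graph $\cG_\cX$. The non-geometric part is essentially the already-known reduction: under Assumption~1 the message function $\psi$ ceases to depend on $(\cN,\rk(x))$, and under Assumption~2 the nested aggregation $\bigotimes_{\cN}\bigoplus_{y\in\cN(x)}$ collapses into a single aggregation over all Hasse-graph neighbors of $x$, so \eqref{eq:eq_message_passing} becomes exactly the aggregation step of a plain message-passing network over $\cG_\cX$, by the correspondence between CCMPNs and message-passing networks over the augmented Hasse graph established in \citet{hajij2023topological}. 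I would state this reduction first and then concentrate on the genuinely new, geometric part.

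Next I would match the geometric invariants. Under Assumption~3, $\Inv(\{\bx^l_z\}_{z\in x},\{\bx^l_z\}_{z\in y})$ has the form \eqref{eq:distance_per_inv_fun}, i.e. $\tau\!\left(\big\|\bigoplus_{z\in x}\bx^l_z-\bigoplus_{t\in y}\bx^l_t\big\|\right)$ with $\bigoplus$ a linear permutation-invariant aggregator (sum or mean). By the definition of $\cG_\cX$, the position of cell $x$ is $\bx^l_x:=\bigoplus_{z\in x}\bx^l_z$ for exactly this $\bigoplus$, so the invariant equals $\tau(\|\bx^l_x-\bx^l_y\|)$ --- the pairwise distance along the edge $(y,x)$ of $\cG_\cX$ that EGNN feeds into its edge function. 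Together with Assumptions~1--2 (which make $\bm^\cN_{x,y}$ one and the same function of $(\bh^l_x,\bh^l_y,\|\bx^l_x-\bx^l_y\|)$ for every neighborhood $\cN$ through which $x$ and $y$ are related, up to the reparametrization absorbing $\tau$ into the edge MLP), this shows the ETNN messages coincide with the EGNN messages $\bm_{x,y}$ on $\cG_\cX$, and hence so do the updated features $\bh^{l+1}_x$.

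Then I would match the position updates. For a $0$-cell $x=z\in\cS$, the inner sum of \eqref{eq:eq_update_pos} runs over $t\in\cS$ with $\{t\}\in\cN(z)$ for some $\cN$ --- precisely the $0$-cell neighbors of $z$ in $\cG_\cX$ --- and since the messages already agree, \eqref{eq:eq_update_pos} is verbatim the EGNN position update restricted to those edges, i.e. the first branch of \eqref{eq:eq_update_pos_modified}. For a higher-rank cell $x\in\cX\setminus\cS$, ETNN leaves $0$-cell positions untouched, but in $\cG_\cX$ the position of $x$ is by definition $\bigoplus_{z\in x}\bx_z$, so after the $0$-cell step it automatically equals $\bigoplus_{z\in x}\bx^{l+1}_z$, which is the second branch of \eqref{eq:eq_update_pos_modified}. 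Induction on $l$ (with the base case $\bh^0_x=\bh_x$, $\bx^0_z=\bx_z$, and $\bx^0_x=\bigoplus_{z\in x}\bx_z$ for $x\notin\cS$) then gives the full equivalence; for internal consistency of the picture I would also remark that with $\bigoplus$ linear and permutation-invariant the cell positions transform as $\bx_x\mapsto\bO\bx_x+\bb$, so the modified EGNN on $\cG_\cX$ is still $E(n)$-equivariant, in agreement with Theorem~\ref{th:equivariance}.

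The step I expect to demand the most care is pinning down ``equivalence'' in the presence of neighborhood multiplicities: a pair of cells can be related through several $\cN\in\cCN$, so in the ETNN sums the associated message is counted once per neighborhood, whereas $\cE$ is written as a set of pairs. The clean resolution is to read $\cG_\cX$ as the multigraph carrying one edge per triple $(\cN,y,x)$ with $y\in\cN(x)$ --- which is also the reading under which the \citet{hajij2023topological} correspondence holds; Assumptions~1--2 force all such parallel edges to carry identical messages, so the EGNN aggregate over $\cG_\cX$ reproduces the ETNN aggregate exactly. The remaining checks --- that the modified position update is well defined (higher-rank positions are deterministic functions of $0$-cell positions, and only the latter receive the EGNN-style displacement, so there is no circular dependency) and that substituting the collapsed aggregators back into \eqref{eq:eq_message_passing}--\eqref{eq:eq_update_pos} literally reproduces an EGNN layer --- are then routine bookkeeping.
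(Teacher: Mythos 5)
Your proposal is correct and follows the same route as the paper, which simply states that the result follows by direct substitution into \eqref{eq:eq_message_passing}--\eqref{eq:eq_update_pos}; you have carried out that substitution explicitly, matching messages, invariants, and position updates component by component. The extra care you take with neighborhood multiplicities (reading $\cG_\cX$ as a multigraph) and with the well-definedness of the modified position update fills in details the paper leaves implicit, but does not change the argument.
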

\begin{proof}
    The proof is trivial and is obtained by simple direct substitution in \eqref{eq:eq_message_passing}-\eqref{eq:eq_update_pos}.
\end{proof}
Intuitively, the modified update is required because higher-order cells do not come with positions, thus they cannot be updated as the positions of the nodes in $\cS$. 

\noindent\textbf{Lifting of a Graph into a CC.} Given a graph, many ways exist to lift it into a combinatorial complex. As shown in Appendix \ref{sec:graphs}, denoting the set of singletons $\widetilde{\cS} = \{\{s\}\}_{s \in \cS}$, a graph $\cG(\cS, \cE)$ is a combinatorial complex $(\cS,\cX,\rk)$, where $\cS$ is the set of nodes, the set of cells $\cX=\widetilde{\cS}$ is the set of singletons, the rank function $\rk$ assigns rank 0 to each singleton, and the up adjacency from \eqref{eq:adjacency_graph} is employed. However, we can enrich the representation by including the edges $\cE$ as cells, i.e., $\cX = \widetilde{\cS} \bigcup \cE$, and assign rank 1 to them. In this setting, it is possible to make nodes communicate \textit{with} edges and vice-versa, rather than just making nodes communicate \textit{through} edges. In general, a natural question is how to define higher-order cells, i.e., cells of rank greater than 0. We have already shown in Section \ref{sec:data_modeling} that domain knowledge can be injected by properly modeling higher-order cells. However, we are now interested in understanding which conditions should higher-order cells respect to result in an increased expressive power of ETNN. To perform a fair analysis, we consider only skeleton-preserving lifts \citep{bodnarcwnet}, i.e., lifts in which the original connectivity among the nodes in $\cS$ is preserved. In other words, we don't directly rewire the underlying graph $\cG$. As shown in Fig. \ref{fig:cc_hasse}, a straightforward example of skeleton-preserving lifts is choosing \eqref{eq:incidences}-\eqref{eq:adjacency_graph} as neighborhood functions without necessarily adding the edges as cells. Another example is choosing \eqref{eq:incidences}-\eqref{eq:adjacencies} as neighborhood functions including the edges as cells. 

\noindent\textbf{Expressivity of ETNN.} Due to Proposition \ref{prop:eq_ETNN_egnn}, given two geometric graphs $\cG_1$ and $\cG_2$ and two combinatorial complexes $\cX_{\cG_1}$ and $\cX_{\cG_2}$ obtained by respectively lifting them, we can study how expressive ETNNs are by analyzing how the GWL performs on the geometric augmented Hasse graphs $\cG_{\cX_{\cG_1}}$ and $\cG_{\cX_{\cG_1}}$ of $\cX_{\cG_1}$ and $\cX_{\cG_2}$, respectively,  w.r.t. how it would perform directly on the underlying graphs $\cG_1$ and $\cG_2$.   Since the GWL operates on undirected graphs, we need to assume that the produced Hasse graph is undirected. The two examples of skeleton-preserving lifts clearly result in undirected Hasse graphs. We would obtain a directed Hasse graph if, e.g., only one of the up/down incidences in \eqref{eq:incidences} is used as a neighborhood function. Similar to the standard graph WL test, the GWL method updates node colors iteratively based on the features of nodes in the local neighborhood. Additionally, GWL maintains $E(n)$-equivariant hash values that capture each node's local geometry. Therefore, any $k$-hop distinct, $(k-1)$-hop identical geometric graphs can be distinguished by $k$ iterations of GWL, i.e., when the updated hash values that capture each node’s local geometry differ across the two graphs for the first time, or by $k$ layers of an EGNN, i.e., when the geometric invariants differ across the two graphs for the first time \citep{joshi2023gwl}. 
\begin{proposition}\label{prop:expressivity}
    Assume to have two $k$-hop distinct geometric graphs $\cG_1=(\cS_1,\cE_1)$ and $\cG_2=(\cS_2,\cE_2)$ where the underlying graphs are isomorphic in the standard sense, two combinatorial complexes $\cX_{\cG_1}$ and $\cX_{\cG_2}$ and a collection $\cCN$ of neighborhood functions obtained via a skeleton-preserving lift and leading to undirected geometric Hasse graphs $\cG_{\cX_{\cG_1}}$ and $\cG_{\cX_{\cG_2}}$, respectively. An ETNN  operating over $\cX_{\cG_1}$ and $\cX_{\cG_2}$ respecting Assumptions 1-3 can distinguish $\cG_1$ and $\cG_2$ in $M$ layers, where $M$ is the number of layers required to have at least one cell in  $\cX_{\cG_1}$/$\cX_{\cG_2}$ whose receptive field is the whole set of nodes in $\cG_1$/$\cG_2$. 
\end{proposition}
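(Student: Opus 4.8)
The plan is to leverage Proposition~\ref{prop:eq_ETNN_egnn} to reduce the statement to a claim about an EGNN running on the geometric augmented Hasse graph, and then to combine the GWL characterization of $k$-hop distinctness with the fact that a skeleton-preserving lift does not destroy the original graph's connectivity. First, by Proposition~\ref{prop:eq_ETNN_egnn}, under Assumptions~1--3 an ETNN acting on $\cX_{\cG_i}$ is exactly an EGNN \citep{satorras2021egnn} acting on the (undirected, by hypothesis) geometric augmented Hasse graph $\cG_{\cX_{\cG_i}}$, with the modified position update \eqref{eq:eq_update_pos_modified}. As in all WL-type expressivity arguments, I would assume the message, update and readout maps --- as well as $\Inv$, which under Assumption~3 reduces to a sum of distances between linear permutation-invariant aggregations of node positions --- are chosen injective on the relevant finite domains, so that after $\ell$ layers the embedding $\bh_c^{\ell}$ of a cell $c$ is a lossless encoding of the depth-$\ell$ computation tree (equivalently, the $E(n)$-isomorphism type of the depth-$\ell$ receptive field) rooted at $c$ in $\cG_{\cX_{\cG_i}}$.

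Second, I would track how receptive fields grow. Because the lift is skeleton-preserving, the rank-$0$ cells of $\cG_{\cX_{\cG_i}}$ together with the induced node adjacency reproduce $\cG_i$ verbatim, and every higher-order cell is, by the containment/common-neighbor definitions \eqref{eq:incidences}--\eqref{eq:adjacencies}, within Hasse-distance at most $2$ of each of its constituent nodes; hence running the Hasse-graph EGNN contains (for a suitable, allowed choice of component functions) a GWL/EGNN run on $\cG_i$ as a sub-process, possibly accelerating information flow through the higher-order cells. By definition of $M$, after $M$ layers there is a cell $c$ whose depth-$M$ receptive field in $\cG_{\cX_{\cG_i}}$ contains every node of $\cG_i$, hence --- by skeleton preservation --- the full node set with all original edges and all node positions (accessed, as in GWL, through the pairwise/centroid distances supplied by $\Inv$). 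In particular $\bh_c^{M}$ refines, i.e.\ is at least as informative as, the depth-$k$ GWL color produced at the node $v$ witnessing $k$-hop distinctness.

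Third, I would conclude. Since the underlying graphs are isomorphic in the standard sense but $\cG_1$ and $\cG_2$ are $k$-hop distinct, no standard graph isomorphism makes all $k$-hop neighborhoods geometrically coincide, so $\cG_1$ and $\cG_2$ are not geometrically isomorphic; by the GWL theorem of \citep{joshi2023gwl}, $k$ GWL rounds --- hence the strictly richer information captured in $\bh_c^{M}$ --- separate them. Thus the corresponding cell embeddings differ across the two complexes, and composing the ETNN with an injective permutation-invariant readout over cells yields different outputs, i.e.\ the ETNN distinguishes $\cG_1$ and $\cG_2$ in $M$ layers.

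\textbf{Main obstacle.} The delicate step is ``$\bh_c^{M}$ refines the depth-$k$ color'': one must argue that, once the depth-$M$ receptive field of $c$ covers all nodes, the computation tree rooted at $c$ actually embeds the discriminating depth-$k$ sub-structure around $v$ faithfully (rather than merely reaching $v$ as a leaf), and that the restricted invariants of Assumption~3 (sums of centroid/pairwise distances) retain enough geometry to simulate what GWL uses on $\cG_i$. Handling the directed-versus-undirected subtlety of the Hasse graph, and the existence of such an $M$ (guaranteed, e.g., by a virtual/hub cell or a connected Hasse graph), are the remaining bookkeeping items.
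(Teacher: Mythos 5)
Your proposal follows essentially the same route as the paper's proof: reduce the ETNN to an EGNN on the geometric augmented Hasse graph via Proposition~\ref{prop:eq_ETNN_egnn}, argue that after $M$ layers some cell's receptive field covers all of $\cS_1$/$\cS_2$, and invoke $k$-hop distinctness together with the GWL characterization to conclude the resulting colors/embeddings differ. The ``main obstacle'' you flag (that covering the node set at depth $M$ genuinely refines the discriminating depth-$k$ GWL color rather than merely reaching the witness node) is exactly the step the paper itself asserts without further elaboration, so your treatment is, if anything, more explicit than the paper's.
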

\begin{proof}
    To prove the result, we study the behavior of the GWL on the geometric augmented Hasse graphs $\cG_{\cX_{\cG_1}}$ and $\cG_{\cX_{\cG_2}}$ of $\cX_{\cG_1}$ and $\cX_{\cG_2}$, respectively.  For 1-hop distinct graphs, one iteration of GWL suffices to distinguish them even without any lifting into a CC and, thus, the proposition holds.

Now, let us assume that $\mathcal{G}_1$ and $\mathcal{G}_2$ are $k$-hop distinct and $(k-1)$-hop identical for any $k>1, k \in \mathbb{N}$. As a direct consequence of Proposition \ref{prop:eq_ETNN_egnn}, $M$ GWL iterations are required for (at least one) node of the Hasse graphs $\cG_{\cX_{\cG_1}}$/$\cG_{\cX_{\cG_2}}$ to have the entire set of nodes $\cS_1$/$\cS_2$ in its receptive field. 
Denote that node with $x_i \in \cG_1$ and $x_{b(i)} \in \cG_2$, where $b$ is any isomorphism between $\cG_1$ and $\cG_2$. Due to the k-hop distinctness of the graphs, the hash values corresponding to node $x_i \in \cG_1$ and $x_{b(i)} \in \cG_2$ will differ in the $M$-th iteration, thus $M$ iterations of GWL can distinguish the graphs.
Therefore. Combining the fact that EGNNs are as powerful as the GWL in distinguishing $k$-hop distinct graphs \citep{sestak2024vnegnn,joshi2023gwl} with the result from Proposition \ref{prop:eq_ETNN_egnn}, the proof is completed.
\end{proof}
\rev{
\begin{corollary}
An ETNN as in Proposition \ref{prop:expressivity} is strictly more powerful than an EGNN \citep{satorras2021egnn} in distinguishing $k$-hop distinct graphs $\cG_1$ and $\cG_2$ if the employed skeleton-preserving lifting is such that the number of ETNN layers required to have at least one cell in  $\cX_{\cG_1}$/$\cX_{\cG_2}$ whose receptive field is the whole set of nodes in $\cG_1$/$\cG_2$ is smaller than the number of EGNN layers required to have at least one node in  $\cG_1$/$\cG_2$ whose receptive field is the whole set of nodes in $\cG_1$/$\cG_2$.
\end{corollary}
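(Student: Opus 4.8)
The plan is to derive the corollary by instantiating Proposition~\ref{prop:expressivity} twice: once for the given skeleton-preserving lift and once for the trivial lift of Appendix~\ref{sec:graphs}, for which an ETNN collapses exactly to an EGNN \citep{satorras2021egnn} and whose geometric augmented Hasse graph is the input graph itself. Write $M_{\cX}$ for the smallest number of ETNN layers after which some cell of $\cX_{\cG_1}$ (equivalently of $\cX_{\cG_2}$, using the assumed isomorphism of the underlying graphs) has the whole node set in its receptive field, and $M_{\cG}$ for the analogous quantity for a plain node of $\cG_1/\cG_2$; the corollary assumes $M_{\cX}<M_{\cG}$.

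First I would fix a pair of $k$-hop distinct, $(k-1)$-hop identical geometric graphs $\cG_1,\cG_2$ with isomorphic underlying graphs whose geometric distinctness only emerges at the global scale, i.e.\ with $k=M_{\cG}$ (such pairs are exactly the ``global'' counterexamples discussed in Appendix~\ref{sec:expressivity}, e.g.\ those of \citep{pozdnyakov2020incompleteness,joshi2023gwl}). By Proposition~\ref{prop:expressivity}, an ETNN respecting Assumptions~1--3 over the given lift distinguishes $\cG_1$ and $\cG_2$ within $M_{\cX}$ layers. Applying the very same proposition to the trivial lift shows that an EGNN distinguishes $\cG_1,\cG_2$ within $M_{\cG}$ layers; moreover, since after $m$ layers an EGNN node's hash depends only on its $m$-hop geometric neighbourhood and the two graphs are $(M_{\cG}-1)$-hop identical, no EGNN of depth strictly below $M_{\cG}$ can separate them, so $M_{\cG}$ is tight here (this is precisely the fact that EGNNs are as powerful as GWL on $k$-hop distinct graphs, \citep{joshi2023gwl,sestak2024vnegnn}).

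Second I would conclude the strict separation. Pick any layer budget $L$ with $M_{\cX}\le L<M_{\cG}$, which is non-empty by the hypothesis $M_{\cX}<M_{\cG}$: an ETNN of depth $L$ distinguishes $\cG_1$ and $\cG_2$, whereas no EGNN of depth $L$ does. Combining this with the already-established direction that an ETNN over any skeleton-preserving lift is at least as powerful as an EGNN on $k$-hop distinct graphs (the informal Proposition~2, which follows from Proposition~\ref{prop:eq_ETNN_egnn} by specializing the lift to the trivial one), one obtains that an ETNN is strictly more powerful than an EGNN in distinguishing $k$-hop distinct graphs under the stated condition.

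The main obstacle is the tightness argument on the EGNN side: the corollary is only meaningful if on the relevant pair an EGNN genuinely requires $M_{\cG}$ layers rather than merely \emph{at most} $M_{\cG}$, so the proof must either restrict attention to pairs whose distinctness is ``global-only'' (as above) or phrase the separation at the level of the depth guarantees of Proposition~\ref{prop:expressivity}. A minor check is that the trivial lift is indeed skeleton-preserving and induces an undirected geometric Hasse graph, so that Proposition~\ref{prop:expressivity} legitimately applies to it.
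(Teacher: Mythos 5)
Your proposal is correct and follows the same route as the paper, whose entire proof is the single line ``Direct consequence of Proposition~\ref{prop:expressivity}'': you instantiate that proposition once for the given lift and once for the trivial lift (under which an ETNN collapses to an EGNN and the Hasse graph is the input graph), then compare the two depth bounds. The one substantive thing you add beyond the paper is the explicit lower bound on EGNN depth (via $(M_{\cG}-1)$-hop identity and the GWL upper bound on EGNNs) needed to make the separation genuinely \emph{strict}; the paper leaves this implicit, and your observation that the claim is only tight for pairs whose distinctness is global-only is a fair reading of a looseness in the corollary's statement rather than a gap in your argument.
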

\begin{proof}
    Direct consequence of Proposition \ref{prop:expressivity}.
\end{proof}}
\begin{remark}
    The reader may have noticed that, in proving Proposition \ref{prop:expressivity}, we used the result from Proposition \ref{prop:eq_ETNN_egnn} ignoring the modified position update rule in \eqref{eq:eq_update_pos_modified}, i.e., implicitly assuming that also the positions of higher-order cells, when seen as nodes of the corresponding augmented Hasse graph, are updated as in \eqref{eq:eq_update_pos}, following the standard EGNN. This is not relevant, as the modified update rule in \eqref{eq:eq_update_pos_modified} is actually more powerful than the standard rule in \eqref{eq:eq_update_pos}, because the positions of the higher-order cell $x$ at layer $l$ is updated using the already updated node positions $\bigoplus_{z \in x}\bx_z^{l+1}$ and not the $\bx_z^{l}$s. For this reason, Proposition \ref{prop:expressivity} is somehow a pessimistic scenario for the practical use of ETNNs, and there could be corner cases in which ETNNs can distinguish $k$-hop distinct geometric graphs with a lower number of layers than prescribed. Finally, in practice, we observed it is not important to stick to the homogeneous setting (Assumption 1), to a unique aggregation function (Assumption 2), or to a certain geometric invariant (Assumption 3). These assumptions are useful to simplify the tractation and have a perfect match with the geometric augmented Hasse graph formalism. 
\end{remark}
\noindent\textbf{Discussion.} Proposition \ref{prop:expressivity} provides formal hints about how different lifts of graphs into combinatorial complexes can impact the expressive power of ETNNs. Our result can be seen as a proper generalization of Proposition 2 in \citep{sestak2024vnegnn} about the expressivity of VN-EGNNs. Indeed, adding a virtual node connected to all the other nodes can be interpreted as a combinatorial complex in which the cells are the set of singletons (the nodes) plus a cell containing all the nodes (the virtual node). A (simplified) variant of a VG-EGNN is then obtained by using the adjacency induced by the edges of the original graph and the up/down incidences as neighborhood functions. In this case, as prescribed from Proposition \ref{prop:expressivity}, the receptive field of the virtual node is the whole graph by definition, making VN-EGNNs able to distinguish $k$-hop distinct graphs in one layer. Our results also go in the direction of \citep{jogl2023expressivitypreserving}, in which the expressivity of standard graph message passing networks on transformed/lifted graphs is systematically studied. This work provides a first, preliminary intuition on how the framework from \citep{jogl2023expressivitypreserving} could be extended to the geometric graph setting. Moreover, it is worth noting that our expressivity analysis could be enriched through the same GWL framework from \citep{joshi2023gwl}, eventually leading to a topological/higher-order generalization of it. An interesting direction is studying the expressivity of ETNNs when the lifts result in directed geometric augmented Hasse graphs. Finally, our result could shed light on which lifts should be preferred. In the current literature \citep{bodnarcwnet,hajij2023topological,battiloro2023generalized,eijkelboom2023empsn} except for \citep{battiloro2024dcm}, the lifts are always decided based on simple criteria (e.g., if the domain is a simplicial complex, all the triangles of the underlying graphs are considered as 2-simplices). However, lifts could strike desirable tradeoffs between expressivity and learning performance (e.g. if the domain is again a simplicial complex, only some triangles could be chosen without expressivity drop).
\begin{figure}
\centering
   \includegraphics[width=1\linewidth]{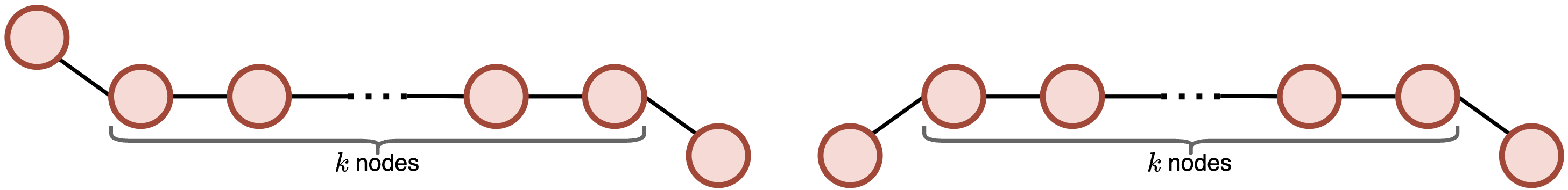}
   \caption{$k$-chain geometric graphs, i.e., $k$ aligned nodes and two end nodes with opposite orientations.}
   \label{fig:kchains}
\end{figure}

\noindent\textbf{Experimental Validation.} To validate our theoretical claims, we follow the approach from \citep{joshi2023gwl,sestak2024vnegnn}. In particular,  we employ pairs of $k$-chain geometric graphs, where each graph pair consists of $k$ nodes linearly aligned and two endpoints with different orientations, as shown in Fig. \ref{fig:kchains}. These graphs are clearly $\left(\left\lfloor\frac{k}{2}\right\rfloor+1\right)$-hop distinct, and thus should be distinguishable by $\left(\left\lfloor\frac{k}{2}\right\rfloor+1\right)$ EGNN layers or the same number of GWL iterations \citep{joshi2023gwl,sestak2024vnegnn}. To show the correctness of our result, we consider $k=4$, and we lift each pair of graphs into a CC in different ways, listed below. In all the cases, we use the mean to assign positions to cells in the Hasse graph (Assumption 3 of Proposition \ref{prop:eq_ETNN_egnn}). Without loss of generality, we use  $\cN_{A,\uparrow}$ as in \eqref{eq:adjacency_graph} (the usual node adjacency) and the up/down incidences $\cN_{I,\uparrow}$/$\cN_{I,\downarrow}$ in \eqref{eq:incidences} as neighborhood functions, i.e., we keep the connectivity among nodes as in the original pair of graphs without including edges as cells and we allow higher-order cells to communicate only with nodes but not among them (thus resulting in a hypergraph-like structure). The readout classifier takes as input the geometric invariants computed on the final updated positions.
\begin{itemize}[leftmargin=*]

    \item Lift 1a: we consider a set of cells composed of the nodes of the original graphs plus one 1-cell containing all the nodes, as shown in Fig. \ref{fig:lift1_a} (a). In Figure \ref{fig:lift1_b}, we show the corresponding geometric Hasse graphs. All the nodes will have the whole graph as their receptive field before the first iteration, thus we expect the GWL on the Hasse graph, hence ETNN, to distinguish the graphs using one layer. This case would be solved even by just feeding the initial geometric invariants to the readout classifier (using "0" layers).

    \item Lift 2a: we consider a set of cells composed of the nodes of the original graphs plus two 1-cells of the same cardinality being overlapping paths of length 4, as shown in Fig. \ref{fig:lift3_a}. In Figure \ref{fig:lift3_b}, we show the corresponding geometric Hasse graphs. It is clear that nodes $\{p_1\}$,$\{p_2\}$ will have the whole graph as their receptive field in one iteration, thus we expect the GWL on the Hasse graph, hence ETNN, to distinguish the graph using one layer.

    \item Lift 3a: we consider a set of cells composed of the nodes of the original graphs plus two 1-cells of different cardinality and not overlapping, as shown in Fig. \ref{fig:lift2_a}. In Figure \ref{fig:lift2_b}, we show the corresponding geometric Hasse graphs. It is clear that nodes $\{p_1\}$,$\{p_2\}$,$\{o_1\}$,$\{p_2\}$ will have the whole graph as their receptive field in two iterations, thus we expect the GWL on the Hasse graph, hence ETNN, to distinguish the graph using two layers. 
\end{itemize}
Per each lift, we trained ETNNs with an increasing number of layers to classify the pairs of graphs. Due to possible oversquashing effects \citep{joshi2023gwl}, we also trained ETNNs with 5 different hidden dimensions and averaged over 10 different seeds. In Table \ref{table:synth}, we report the results. We also report the results for EGNN (taken from \citep{sestak2024vnegnn}) and the GWL on the original pair of graphs without any lifting into a CC. As the reader can see, empirical results confirm the validity of our analysis, and the clear advantage, in terms of expressivity, of using CCs and ETNNs over standard EGNNs. Further empirical advantage is reported, pointing at higher-order interactions as a practical solution to oversquashing \citep{giusti2023cin,di2023over}.

\begin{table}[t]
\caption{Classification Accuracy on $4$-chain geometric graphs.}
\label{table:synth}
\centering
\begin{tabular}{llllll}
\toprule
                                                   &             & \multicolumn{4}{c}{Number of Layers}                                                      \\ \cmidrule(r){3-6}
\multicolumn{1}{l}{$k=4$}                          & Hidden Dim. & 1                     & 2                     & 3                     & 4                     \\ \midrule
\multicolumn{1}{l}{GWL}                            &             & \textcolor{rank0_red}{$50\%$} & \textcolor{rank0_red}{$50\%$} & \textcolor{OliveGreen}{$100\%$} & \textcolor{OliveGreen}{$100\%$} \\ \midrule
\multirow{3}{*}{EGNN from \citep{sestak2024vnegnn}}      & 32          & \textcolor{rank0_red}{$50\% $} & \textcolor{rank0_red}{$50\% $} & \textcolor{OliveGreen}{$56.5\% $} & \textcolor{rank0_red}{$50\% $} \\ \cmidrule(r){2-6}
                                                   & 64          & \textcolor{rank0_red}{$50\% $} & \textcolor{rank0_red}{$50\% $} & \textcolor{OliveGreen}{$100\% $} & \textcolor{OliveGreen}{$99\% $} \\ \cmidrule(r){2-6}
                                                   & 128         & \textcolor{rank0_red}{$50\% $} & \textcolor{rank0_red}{$50\% $} & \textcolor{OliveGreen}{$96.5\% $} & \textcolor{OliveGreen}{$98.5\% $} \\ \midrule
\multirow{3}{*}{ETNN Lift 1a}                       & 32          & \textcolor{OliveGreen}{$100\% $} & \textcolor{OliveGreen}{$100\% $} & \textcolor{OliveGreen}{$100\% $} & \textcolor{OliveGreen}{$100\% $} \\ \cmidrule(r){2-6}
                                                   & 64          & \textcolor{OliveGreen}{$100\% $} & \textcolor{OliveGreen}{$100\% $} & \textcolor{OliveGreen}{$100\% $} & \textcolor{OliveGreen}{$100\% $} \\ \cmidrule(r){2-6}
                                                   & 128         & \textcolor{OliveGreen}{$100\% $} & \textcolor{OliveGreen}{$100\% $} & \textcolor{OliveGreen}{$100\% $} & \textcolor{OliveGreen}{$100\% $} \\ \midrule
\multirow{3}{*}{ETNN Lift 2a}                       & 32          & \textcolor{OliveGreen}{$100\% $} & \textcolor{OliveGreen}{$100\% $} & \textcolor{OliveGreen}{$100\% $} & \textcolor{OliveGreen}{$100\% $} \\ \cmidrule(r){2-6}
                                                   & 64          & \textcolor{OliveGreen}{$90\% $}  & \textcolor{OliveGreen}{$100\% $} & \textcolor{OliveGreen}{$100\% $} & \textcolor{OliveGreen}{$100\% $} \\ \cmidrule(r){2-6}
                                                   & 128         & \textcolor{OliveGreen}{$90\% $}  & \textcolor{OliveGreen}{$100\% $} & \textcolor{OliveGreen}{$100\% $} & \textcolor{OliveGreen}{$100\% $} \\ \midrule
\multirow{3}{*}{ETNN Lift 3a}                       & 32          & \textcolor{rank0_red}{$50\% $}  & \textcolor{OliveGreen}{$85\% $} & \textcolor{OliveGreen}{$100\% $} & \textcolor{OliveGreen}{$100\% $} \\ \cmidrule(r){2-6}
                                                   & 64          & \textcolor{rank0_red}{$50\% $}  & \textcolor{OliveGreen}{$80\% $} & \textcolor{OliveGreen}{$95\% $}  & \textcolor{OliveGreen}{$85\% $}  \\ \cmidrule(r){2-6}
                                                   & 128         & \textcolor{rank0_red}{$50\% $}  & \textcolor{OliveGreen}{$80\% $} & \textcolor{OliveGreen}{$95\% $}  & \textcolor{OliveGreen}{$95\% $}  \\ \bottomrule
\end{tabular}
\end{table}

\begin{table}[t]
\caption{Classification Accuracy on the counterxamples graphs.}
\label{tab:counterxamples}
\centering
\begin{tabular}{lllll}
\toprule
                                                   &             & \multicolumn{3}{c}{Body Order}                                                      \\ \cmidrule(r){3-5}
\multicolumn{1}{l}{}                          &   & 2-body                     & 3-body                     & 4-body                                        \\ \midrule
\multirow{1}{*}{EGNN from \citep{joshi2023gwl}}      &           & \textcolor{rank0_red}{$50\% $} & \textcolor{rank0_red}{$50\% $} & \textcolor{rank0_red}{$50\% $}  \\ \midrule
\multirow{2}{*}{ETNN Lift 1b}                       & Inv.          & \textcolor{OliveGreen}{$100\% $} & \textcolor{OliveGreen}{$100\% $} & \textcolor{rank0_red}{$50\% $}  \\ \cmidrule(r){2-5}
                                                   & Equiv.          & \textcolor{OliveGreen}{$100\% $} & \textcolor{OliveGreen}{$100\% $} & \textcolor{rank0_red}{$50\% $}  \\ \midrule
\multirow{3}{*}{ETNN Lift 2b}                       & Inv.          & \textcolor{OliveGreen}{$100\% $} & \textcolor{rank0_red}{$50\% $} & \textcolor{rank0_red}{$50\% $}  \\ \cmidrule(r){2-5}
                                                   & Equiv.          & \textcolor{OliveGreen}{$100\% $}  & \textcolor{OliveGreen}{$100\% $} & \textcolor{rank0_red}{$50\% $} \\ \midrule
\multirow{3}{*}{ETNN Lift 3b}                       & Inv.          & \textcolor{OliveGreen}{$100\% $} & \textcolor{OliveGreen}{$100\% $} & \textcolor{rank0_red}{$50\% $} \\ \cmidrule(r){2-5}
                                                   & Equiv.          & \textcolor{OliveGreen}{$100\% $}  & \textcolor{OliveGreen}{$100\% $} & \textcolor{rank0_red}{$50\% $}  \\ \midrule
\multirow{3}{*}{ETNN Lift 4b}                       & Inv.          & \textcolor{OliveGreen}{$100\% $} & \textcolor{OliveGreen}{$100\% $} & \textcolor{rank0_red}{$50\% $}  \\ \cmidrule(r){2-5}
                                                   & Equiv.          & \textcolor{OliveGreen}{$100\% $}  & \textcolor{OliveGreen}{$100\% $} & \textcolor{rank0_red}{$50\% $} \\  \bottomrule
\end{tabular}
\end{table}

\begin{remark}
    The lifts we used in the experiments above are handcrafted lifts, employed to validate Proposition \ref{prop:expressivity} in a easy and consistent way. However, our result obviously holds in all the cases, even with more ranks and more sophisticated collections of neighborhood functions (e.g. induced by skeleton-preserving lift to simplicial, cell, or path complexes\citep{bodnarcwnet,giusti2023cell,battiloro2023generalized,truong2024wlpath,bodnar2021weisfeiler}). Moreover, it is worth it to notice that, with the experiments above, we investigated the role of the choice of the cells given the collection of neighborhood functions. However, it is interesting to study even the opposite case, in which the cells are fixed and the neighborhood functions are chosen. For example, in Lift 2a, adding the lower adjacency from \eqref{eq:adjacencies} would have led to solving the task with just one layer because the cells $c_1$/$c_2$ and $d_1$/$d_2$ would have had the whole graph as their receptive field before the first iteration. \rev{An opposite example serves as proof of Proposition 3 of the main body.}
\end{remark}
    \begin{proposition}
        \rev{There exists  a pair of CCs whose nodes come with geometric features and a collection of neighborhoods such that the CCs are undistinguishable by ETNN.}
    \end{proposition}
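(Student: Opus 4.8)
The plan is to reduce the statement to the already-established expressivity ceiling of scalarization-based equivariant GNNs, by choosing the simplest possible pair of CCs: the \emph{trivial lifts} of two geometric graphs that no equivariant message-passing network can separate. Concretely, I would take a pair of non-(geometrically-)isomorphic geometric graphs $\cG_1=(\cS_1,\cE_1)$ and $\cG_2=(\cS_2,\cE_2)$, equipped with node positions (and, say, constant non-geometric features), that are indistinguishable by the Geometric Weisfeiler--Leman test, equivalently by every EGNN regardless of depth or width. Such pairs exist: the degenerate point-cloud constructions of \citep{pozdnyakov2020incompleteness, joshi2023gwl} — in particular the $4$-body example, whose failure for ETNN is already witnessed empirically in Table~\ref{tab:counterxamples} — provide explicit witnesses.

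First I would form the trivial lift of each graph into a CC, exactly as in Appendix~\ref{sec:graphs}: cells $\cX_j=\widetilde{\cS_j}$ are the singletons, $\rk(\{s\})=0$, and the neighborhood collection is $\cCN=\{\cN_{A,\uparrow}\}$ with $\cN_{A,\uparrow}$ the ordinary node adjacency of \eqref{eq:adjacency_graph}. Next I would observe that the geometric augmented Hasse graph $\cG_{\cX_j}$ of $\cX_j$ is just $\cG_j$ itself: each singleton sits at its own node position, and two singletons are Hasse-adjacent iff the corresponding nodes are adjacent in $\cG_j$. Since the resulting CC has a single rank and a single neighborhood, Assumptions~1--3 of Proposition~\ref{prop:eq_ETNN_egnn} are automatically met once $\Inv$ is taken to be a pairwise-distance invariant as in \eqref{eq:distance_per_inv_fun} (literally the pairwise distance, as in EGNN), so Proposition~\ref{prop:eq_ETNN_egnn} gives that \emph{every} ETNN on $\cX_j$ coincides with an EGNN on $\cG_j$ (the modified update for higher-rank cells in \eqref{eq:eq_update_pos_modified} is vacuous, there being no non-singleton cells).

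From here the argument closes immediately: because EGNN is upper bounded by GWL \citep{joshi2023gwl, sestak2024vnegnn} and $\cG_1,\cG_2$ were chosen GWL-indistinguishable, no EGNN — hence no ETNN on $\cX_1$ versus $\cX_2$ — can produce different outputs, for any choice of the learnable maps $\psi,\beta,\xi$ and any number of layers. This exhibits a pair of CCs with geometric node features together with a neighborhood collection on which ETNN is blind, which is exactly the claim.

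The step I would be most careful about is justifying that the chosen graph pair is genuinely beyond GWL, rather than merely beyond some fixed body order or fixed-depth truncation: one must invoke the complete-information counterexamples of \citep{pozdnyakov2020incompleteness, joshi2023gwl} instead of, say, a $k$-chain, since a sufficiently deep ETNN would eventually separate $k$-chains (cf.\ Proposition~\ref{prop:expressivity}). A secondary point worth a sentence is that the proposition only asks for \emph{some} collection of neighborhoods, so there is no need to rule out richer lifts — adding incidences, higher-rank cells, or the max adjacency of \eqref{eq:max_adjancency} — that could restore expressivity; indeed Proposition~\ref{prop:expressivity} shows many of them would. The trivial lift is precisely the adversarial choice, mirroring the ``opposite example'' remark above.
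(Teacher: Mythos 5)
Your overall strategy --- reduce to EGNN via the trivial lift (which \cref{prop:eq_ETNN_egnn} does license) and then exhibit a pair of geometric graphs that no EGNN can separate --- is a genuinely different route from the paper's, but as written it has a gap at its central premise. The paper's own proof is far more elementary: it takes a pair of $k$-chain graphs, lifts them via Lift 3a (nodes plus two non-overlapping higher-order cells), and keeps \emph{only} the up adjacency from \eqref{eq:adjacencies} in $\cCN$; the resulting geometric augmented Hasse graphs are then disconnected, the oriented endpoints can never exchange information with the rest of the chain, and indistinguishability follows for any depth. No appeal to the limits of scalarization is needed --- the failure is engineered purely through an impoverished neighborhood collection, which is exactly the point of the remark preceding the proposition.

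The gap in your version is the claim that the counterexample pairs of \citep{pozdnyakov2020incompleteness, joshi2023gwl} are indistinguishable by GWL. They are not: GWL is the \emph{equivariant} test, it propagates full geometric information through its hashes, and it separates any $1$-hop distinct connected geometric graphs in one iteration --- which these pairs are. What those counterexamples defeat is the invariant test restricted to $k$-body scalars and, correspondingly, \emph{single-layer} scalarization-based architectures; that is also all that Table \ref{tab:counterxamples} tests. So the chain ``ETNN on the trivial lift $=$ EGNN $\le$ GWL, and GWL fails'' breaks at its middle link: the upper bound GWL does not fail on these pairs, EGNN simply does not attain the upper bound. To close the argument along your route you would have to show directly that the iterated pairwise-distance refinement (a depth-unbounded EGNN on the fully connected pair) never separates the two point clouds --- a statement about the stability of $2$-body incompleteness under refinement that is stronger than what you cite and is not established anywhere in the paper. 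You flag this yourself as the delicate step, but flagging it does not discharge it; as it stands the existence claim is not proved by your argument, whereas the disconnection construction proves it in two lines.
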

    \begin{proof}
        \rev{We take again a pair of $k$-chain graphs $\cG_1$ and $\cG_1$, and we lift them in two CCs using Lift 3a but employing only the up adjacency from \eqref{eq:adjacencies} as neighborhood function. In this case, ETNN is not able to distinguish the CCs because the resulting geometric augmented Hasse graph is a disconnected graph, since $p_1$/$p_2$ and $o_1$/$o_2$ are been connected.}
    \end{proof}

\new{\textbf{Counterexamples Structures.} To provide an even more in-depth analysis of the expressivity of ETNNs beyond the distinguishability of $k$-hop distinct graphs, we also tested them on the counterexamples graphs from \citep{pozdnyakov2020incompleteness, joshi2023gwl}, which test a layer’s ability to create distinguishing fingerprints for local neighborhoods and stress exactly the importance of higher body order of scalarization. In particular, this task evaluates single-layer geometric architectures in distinguishing counterexample graphs that cannot be differentiated using $k$-body scalarization. we refer the reader to \citep{pozdnyakov2020incompleteness, joshi2023gwl} for further details. We tested 4 different lifts:
\begin{itemize}[leftmargin=*]
    \item Lift 1b: we consider nodes and edges as 0-cells and 1-cells.
    \item Lift 2b: we consider a set of cells composed of the nodes of the original graphs plus one 1-cell
containing all the nodes (equivalent to Lift 1a).
    \item Lift 3b: we consider nodes, edges, and a cell containing all the nodes as 0-cells, 1-cells, and 2-cell, respectively.
    \item Lift 4b: we consider nodes, edges, and paths of length 3 as 0-cells, 1-cells, and 2-cells (this is equivalent to a path complex).
\end{itemize}
For all the lifts, we use the up/down incidences  $\cN_{I,\uparrow}$/$\cN_{I,\downarrow}$ and adjacencies $\cN_{A,\uparrow}$/$\cN_{A,\downarrow}$ from \eqref{eq:incidences}-\eqref{eq:adjacencies}, respectively, as neighborhood functions, and we test both the invariant and equivariant versions of ETNN. The results are presented in Table \ref{tab:counterxamples}. As the reader can see, ETNN is able to distinguish all the counterexamples but the 4-body (chiral).} 
\begin{remark}
    \rev{Finally, please notice that we also indirectly showed that lifting (in a skeleton-preserving way) a graph to a CC and using ETNN can be more effective than lifting it to a simplicial complex (SC) and using EMPSN \citep{eijkelboom2023empsn}. In particular, on both the numerical expressivity tasks, EMPSN with any skeleton preserving lifting would be as expressive as an EGNN (and, thus, less expressive than an ETNN), as there are no triangles nor cliques.}
\end{remark}

\begin{figure}[h]
\centering
\begin{subfigure}[b]{1\textwidth}
   \includegraphics[width=1\linewidth]{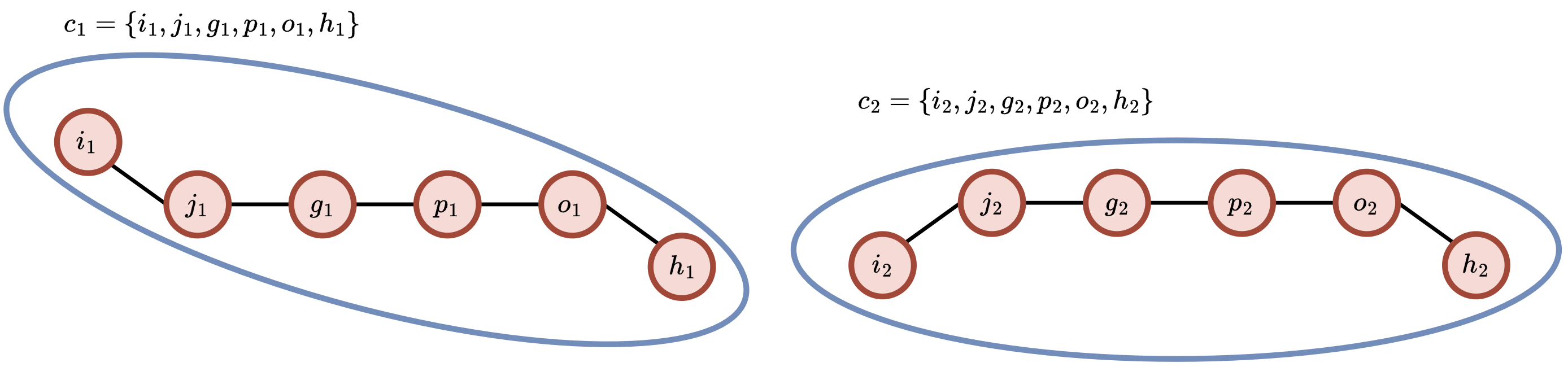}
   \caption{}
   \label{fig:lift1_a} 
\end{subfigure}
\begin{subfigure}[b]{1\textwidth}
   \includegraphics[width=1\linewidth]{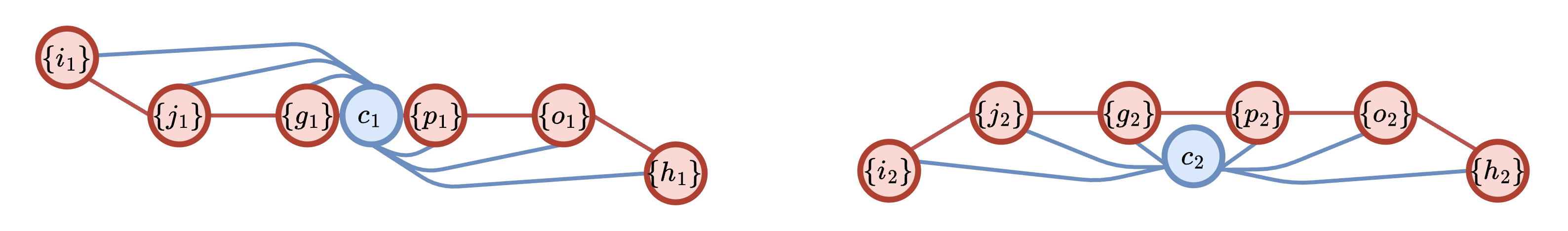}
   \caption{}
   \label{fig:lift1_b}
\end{subfigure}

\caption{(a) Lift 1a. (b) The corresponding geometric augmented Hasse graphs.} 
\label{fig:lift1}
\end{figure}
\begin{figure}[h]
\centering
\begin{subfigure}[b]{1\textwidth}
   \includegraphics[width=1\linewidth]{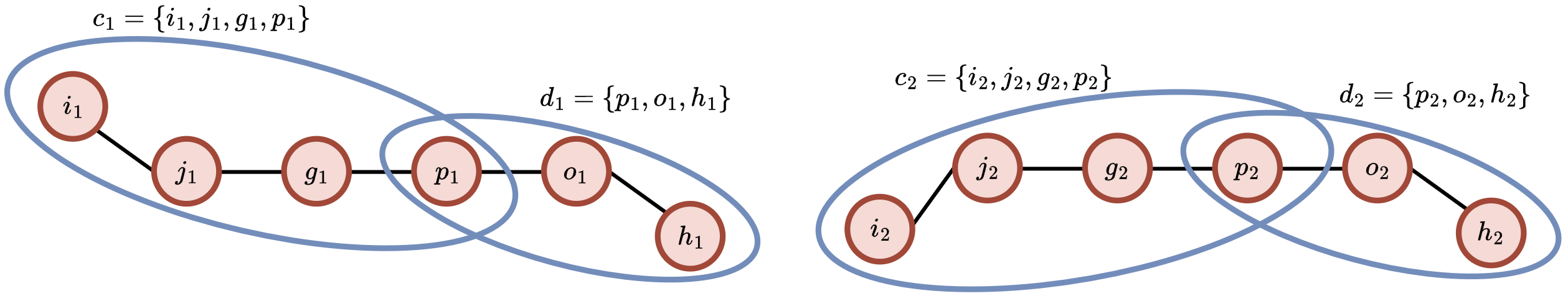}
   \caption{}
   \label{fig:lift3_a} 
\end{subfigure}
\begin{subfigure}[b]{1\textwidth}
   \includegraphics[width=1\linewidth]{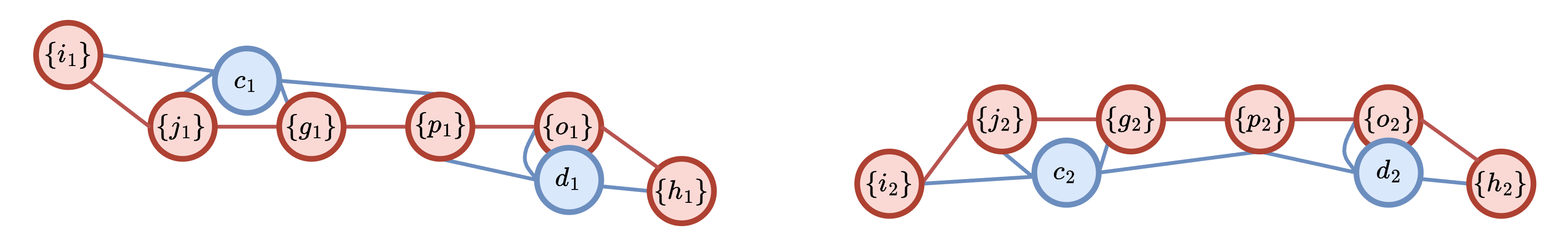}
   \caption{}
   \label{fig:lift3_b}
\end{subfigure}

\caption{(a) Lift 2a. (b) The corresponding geometric augmented Hasse graphs.} 
\label{fig:lift3}
\end{figure}
\begin{figure}[h]
\centering
\begin{subfigure}[b]{1\textwidth}
    \centering
   \includegraphics[width=1\linewidth]{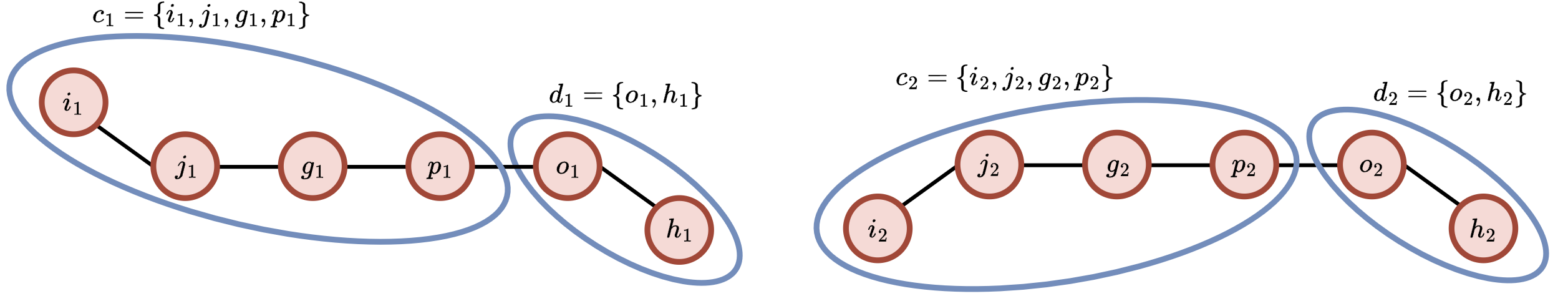}
   \caption{}
   \label{fig:lift2_a} 
\end{subfigure}
\begin{subfigure}[b]{\textwidth}
    \centering
   \includegraphics[width=1\linewidth]{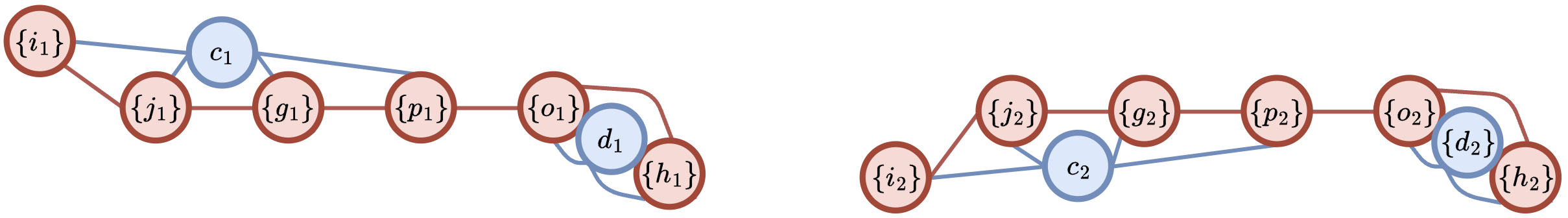}
   \caption{}
   \label{fig:lift2_b}
\end{subfigure}
\caption{(a) Lift 3a. (b) The corresponding geometric augmented Hasse graphs.} 
\label{fig:lift2}
\end{figure}

\section{Complexity Analysis}\label{sec:complexity_analysis}
\vspace{-.2cm}
Given the flexibility in the definition of cells and neighborhood functions, the complexity of an ETNN layer can vary. To provide a reference complexity, let us consider a CC with $|\cX|$ cells, and the up/down incidences and adjacencies from \eqref{eq:incidences}-\eqref{eq:adjacencies} as neighborhood functions.
A cell $x \in \cX$ has then  $|\cN_{I,\uparrow}(x)|+|\cN_{I,\downarrow}(x)|$ incident cells and there are $|\cN_{A,\uparrow}(x)|+|\cN_{A,\downarrow}(x)|=\binom{|\cN_{I,\uparrow}(x)|+|\cN_{I,\downarrow}(x)|}{2}$ up/down adjacencies between them. Therefore, each node $z \in \cS$ has $|\cN_{A,\downarrow}(z)|=\binom{|\cN_{I,\uparrow}(z)|}{2}$ adjacent nodes. Then, an ETNN layer as in \eqref{eq:eq_message_passing}-\eqref{eq:eq_update_pos} has a computational complexity given by

\begin{align}
&\Theta\left(\underbrace{\sum_{x \in \cX}\left(\left(|\cN_{I,\uparrow}(x)|+\cN_{I,\downarrow}(x)|\right) +\binom{|\cN_{I,\uparrow}(x)|+|\cN_{I,\downarrow}(x)|}{2} \right)}_{\textrm{$\bh_x$ update from \eqref{eq:eq_message_passing}}}+\underbrace{\sum_{z \in \cS}\binom{|\cN_{I,\uparrow}(z)|}{2}}_{\textrm{$\bx_z$ update from \eqref{eq:eq_update_pos}}}\right) \nonumber \\
&=\Theta\left(\sum_{x \in \cX}\left(\left(|\cN_{I,\uparrow}(x)|+\cN_{I,\downarrow}(x)|\right)+\left(|\cN_{A,\uparrow}(x)|+|\cN_{A,\downarrow}(x)|\right)\right)+\sum_{z \in \cS}|\cN_{A,\uparrow}(z)|\right) \nonumber \\
&=\Theta\left(\sum_{x \in \cX}\left(|\cN_{A,\uparrow}(x)|+|\cN_{A,\downarrow}(x)|\right)\right).
\end{align}
\begin{table}[ht]
    \centering
    \scalebox{1}{\begin{tabular}{|l|c|}
        \hline
        & \textbf{Avg. runtime (seconds/epoch)} \\
        \hline
        EGNN-graph-W  & 60.34 \\
        ETNN  & 193.16 \\
        EMPSN* & 251.3 \\
        \hline
    \end{tabular}}
    \caption{Average runtime per epoch for different ETNN configurations}
    \label{tab:etnn_runtime}
\end{table}
\begin{table}[ht]
    \centering
    \scalebox{1}{\begin{tabular}{|l|c|}
        \hline
        & \textbf{Avg. memory consumption (GB)} \\
        \hline
        EGNN-graph-W  & 13.63 \\
        ETNN  & 12.05 \\
        EMPSN* & 27.7 \\
        \hline
    \end{tabular}}
    \caption{Average memory usage for different ETNN configurations}
    \label{tab:etnn_memory}
\end{table}
If we consider the connectivity of the CC to be sufficiently sparse, which is usually the case, then the neighborhoods' cardinalities can be absorbed in the bound. Therefore, the overall complexity will be linear in the number of the cells $\Theta\left(|\cX|\right)$. Clearly, in the worst case, each cell could be connected to all the other cells, resulting in a quadratic cost in the number of cells. \new{In practice, with our implementation, both the model’s forward and backward pass scale linearly with the number of neighborhood functions, as they are processed in a loop. In Table \ref{tab:etnn_runtime}, we compare the runtimes of EGNN-graph-W, EMPSN* (i.e., EMPSN reproduced with our codebase as an instance of ETNN), and an instance of ETNN using the complete Molecular CC (configs. 35 of Tables \ref{tab:new_qm9_results_1}-\ref{tab:new_qm9_results_2}). We decided to use this configuration of ETNN because is one of the most computationally intensive. However, better learning performance can be obtained with even (much) lighter configurations, as shown in Tables \ref{tab:new_qm9_results_1}-\ref{tab:new_qm9_results_2}. In this case, ETNN uses exactly 3 times as many neighborhoods as in EGNN-graph-W, as the runtimes confirm. A different implementation could parallelize the processing, making the runtime constant w.r.t. the number of neighborhoods. Most importantly, Table \ref{tab:etnn_runtime} provides the reader with a quantitative metric of the significantly reduced computational burden of ETNN w.r.t. EMPSN \citep{eijkelboom2023empsn}, as described in the main body. To further highlight this gain, in Table \ref{tab:etnn_memory} we show the memory usage of the same architectures, proving that ETNN needs less than half memory w.r.t. EMPSN.}

\section{Additional References on Real-World Combinatorial Topological Spaces}\label{sec:rw_cts}
\subsection{Molecular Data}
\textbf{Molecular graphs.} One of the most renowned representations of molecules is graphs. In this case, atoms are the nodes, and bonds among them are the edges\citep{gilmer2017, Gasteiger2020Directional, Jiang2021}. Message passing networks on molecular graphs have been proven to be effective on several tasks. However, graphs cannot explicitly model higher-order, multiway interactions among the atoms. For instance, graphs cannot explicitly consider the information coming from the rings, i.e., the induced cycles of the molecular graphs. To represent different atomic relations, efforts focused on learning representations for small molecules and larger biomolecules (i.e., proteins) via multi-view or relational graphs\citep{ma2020multiview, gearnet, sestak2024vnegnn,li2024neural}.

\textbf{Molecular cell complexes.} To overcome the above limitation, cell complexes (CW complexes), being a generalization of simplicial complexes \citep{eijkelboom2023empsn,bodnar2021weisfeiler}, have been employed \citep{bodnarcwnet}. Molecular CW complexes can be seen as "hierarchical augmented graphs" in which atoms are $0$-cells (nodes), bonds are $1$-cells (edges), and rings are $2$-cells (induced cycles). Message passing networks on molecular CW complexes have shown superior performance w.r.t. models on molecular graphs \citep{giusti2023cin,bodnarcwnet}. However, CW complexes cannot model higher-order interactions among the atoms and bonds that are not described by a cycle in the underlying molecular graph. For instance, molecular CW complexes cannot explicitly consider the information coming from the functional groups of the molecule, i.e., specific motifs (not necessarily cycles)  of the underlying molecular graphs.

\textbf{Molecular hypergraphs.} Hypergraphs (HGs) can model arbitrary higher-order interactions due to the flexibility of hyperedges. HGs and message passing networks on HGs have been employed for modeling and learning from molecules \citep{chen2023molecularhypergr,park2021molecularhyperfunct}. In molecular HGs, atoms are nodes, and bonds, rings, functional groups, etc, are hyperedges. However, HGs cannot explicitly consider any hierarchy in the hyperedges, thus putting pairwise and higher-order interactions of any kind on the same level.

\subsection{Geospatial Data}
\label{app:geospatial}

\textbf{Spatial graphs and CTS} The single resolution version of the spatial setting in \cref{sec:data_modeling} has been traditionally handled using proximity graphs, i.e., geometric graphs whose connectivity is dictated by some notion of spatial closeness. In the multiresolution case, the most common approach is to aggregate all the data at a common denominator of spatial resolution. However, substantial information is clearly lost in this process. Some GNN methods for expanding the receptive field of GNNs in spatial data have been proposed. For instance, through edge removal during training to address over-smoothing and over-fitting \citep{rong2019truly}, and the introduction of skip connections \citep{he2016deep, li2018deeper}. Yu et al. \citep{yu2019stunet} introduced ST-UNet, a heuristic graph pooling method for coarsened graph construction. Note that these methods do not admit multi-level data, i.e., features available in each level of a hierarchy. Hierarchical GNNs (HGNNs) \citep{diehl2019towards, lee2019self, ying2018hierarchical}  create multi-level graphs balancing the expansion of receptive fields and preservation of local features. HGNNs can admit multi-level features. A highly relevant application is due to Li et al. \citep{li2019deep}, who utilized a geography-based HGNN for modeling geographic information systems. Importantly, this work does not consider the use of geometric (coordinate information) directly, and therefore, it does not explore equivariant architectures. Other related work includes Wu et al. \citep{wu2020learning}, who proposed HRNR for learning road segment representations. Zhang et al. \citep{zhang2020share} combined HGNNs with semi-supervised learning for citywide parking availability prediction. It is worth noticing that the constructions of such HGNNs are highly reliant on the specifics of the application, while Geospatial CCs offer a more generally applicable formalism. Motivated by topological approaches, some works used hypergraphs \citep{jia2023spatial}, whose limitation have been explained in the previous subsection. Some Topological Data Analysis (TDA) work leveraged simplicial complexes \citep{Bajardi2015,Lo2018,Stolz2016,persistent_homology_geospatial, persistent_homology_spatiotemporal_covid}. However, TDL should not be confused with TDA, which is a data-driven approach to describing the mathematical properties and ``shape" of a topological space. Instead, TDL focuses on modeling relations and data distributions defined over such topological spaces \citep{barbarossa2020topological}.

\section{Experimental Setup}\label{sec:exp_set_general}

\subsection{Molecular Property Prediction}\label{app:exps_mol}
In this section, we present all dataset information, data featurization, and model parameter configuration for the molecular property prediction task.

\subsubsection{Dataset Statistics}\label{app:qm9_statistics}
QM9 dataset consists of over $130,000$ molecules with $12$ property prediction targets. See Table \ref{tab:qm9_stats} for a detailed list of the $12$ molecular properties, as well as statistics over the number of molecules, atoms, bonds, rings, and functional groups. 
\begin{table}[h]
\centering
\caption{Statistics of the QM9 Dataset}
\label{tab:qm9_stats}
\resizebox{\textwidth}{!}{
\begin{tabular}{lrr}
\toprule
\textbf{Property} & \textbf{Description} & \textbf{Units} \\
\midrule
$\alpha$ & Electronic spatial extent & bohr$^3$ \\
$\Delta \epsilon$ & Gap between $\epsilon_{HOMO}$ and $\epsilon_{LUMO}$ & meV \\
$\epsilon_{HOMO}$ & Energy of the highest occupied molecular orbital & meV \\
$\epsilon_{LUMO}$ & Energy of the lowest unoccupied molecular orbital & meV \\
$\mu$ & Dipole moment & D \\
$C_v$ & Heat capacity at 298.15 K & cal/mol K \\
$G$ & Free energy at 298.15 K & meV \\
$H$ & Enthalpy at 298.15 K & meV \\
%$R^2$ & Radius of gyration & bohr$^2$ \\
$U$ & Internal energy at 298.15 K & meV \\
$U_0$ & Internal energy at 0 K & meV \\
$ZPVE$ & Zero-point vibrational energy & meV \\
\midrule
Number of molecules & & 133,885 \\
Average number of atoms & & 18 nodes \\
Average number of bonds & & 19 \\
Average number of rings & & 1.6 \\
Average number of functional groups & & 1.2 \\
\bottomrule
\end{tabular}}
\end{table}

\subsubsection{Cell Features}\label{app:qm9_features}
For the molecular prediction task, we considered different peculiar feature vectors for each rank, that can be easily integrated thanks to the ETNN structure.
\begin{enumerate}[leftmargin=*]
    \item \textbf{0-cells} features (atom features):
    \begin{itemize}
        \item \textit{Atom Types}: QM9 molecular samples consist of $5$ atoms: hydrogen (H), carbon (C), nitrogen (N), oxygen (O), and fluorine (F). We used one-hot encoding for the atom types.
        \item \textit{Functions over atom types}: Following Cormorant's methodology\citep{cormorant2019}, we computed the atomic fraction (current atomic number divided by the maximum atomic number), multiplied the atomic number with the atomic fraction, and computed the squared quantities.
    \end{itemize}
    \item \textbf{1-cells} features (bond features):
    \begin{itemize}
        \item \textit{Bond Types}: We can meet $4$ different bond types in QM9 molecules: single, double, triple, aromatic. We used one-hot encoding for the bond types.
        \item \textit{Conjugation}: A binary feature telling whether the bond is part of a conjugated system, affecting electronic and optical properties.
        \item \textit{Ring Membership}:  binary feature telling whether the bond is part of a ring, affecting electronic and structural characteristics of the molecule.
    \end{itemize}
    \item \textbf{2-cells} features (ring and functional group features):
    
    \medskip
    
    \textit{Ring features}
    \begin{itemize}
        \item \textit{Ring Size}: Number of atoms that form the ring. 
        \item \textit{Aromaticity}: A binary feature telling whether the ring is aromatic.
        \item \textit{Has Heteroatom}: A binary feature telling whether the ring contains a heteroatom, which is an atom other than carbon, such as nitrogen, oxygen, or sulfur. Heteroatoms can introduce different electronic effects and reactivity patterns within the ring structure.
        \item \textit{Saturation}: A binary feature telling whether the ring is saturated, meaning all the bonds between carbon atoms are single bonds.
    \end{itemize}
\textit{Functional Group features}
    \begin{itemize}
        \item \textit{Conjugation}: A binary feature telling whether the functional group is part of a conjugated system, which involves alternating double and single bonds. 
        \item \textit{Acidity}: The ability of the functional group to donate protons ($H+$ ions) in a solution. Possible values: ["neutral", "high", "basic", "weakly acidic"].
        \item \textit{Hydrophobicity}: Tendency of the functional group to repel water. $3$ Possible values: [hydrophilic, "moderate", "hydrophobic"]
        
        \item \textit{Electrophilicity}: Tendency of the functional group to accept electrons during a chemical reaction. High electrophilicity means the functional group readily accepts electrons, influencing its reactivity with nucleophiles.  $3$ Possible values: ["low", "moderate", "high"].

        \item \textit{Nucleophilicity}: Tendency of the functional group to donate electrons during a chemical reaction. High nucleophilicity means the functional group readily donates electrons, affecting its reactivity with electrophiles. $3$  possible values: ["low", "moderate", "high"].
        \item \textit{Polarity}: Distribution of electron density within the functional group. High polarity can significantly affect solubility, reactivity, and interactions with other molecules. $3$ possible values: ["low", "moderate", "high"].
    \end{itemize}
\end{enumerate}

\subsubsection{Parameter Configuration}\label{app:qm9_parameters}
Next, we present the ETNN's hyperparameters we used for the molecular property prediction tasks. Table~\ref{tab:etnn_hyperparameters} shows the list of the hyperparameters of the optimizer, as well of our model. Specifically:
\begin{itemize}
    \item Following EGNN's experimentation setup, we use $1e$-$3$ as the initial learning rate for the properties  $\Delta \epsilon, \epsilon_{\text{HOMO}}, \epsilon_{\text{LUMO}}$, and $5e$-$4$ for the rest of the properties.
    \item We calibrate the number of hidden units (i.e., layer width) among $\{70,104,128,182\}$ for each configuration, so that we maintain \textbf{roughly the same} size of parameter space ($\approx$ 1.5M). The motivation for this choice is to provide the fairest comparison possible. see Appendix \ref{app:ablation_qm9} for further motivation and  details.
    \item All the reported configurations were run for $1000$ epochs, in agreement with the original EGNN\citep{satorras2021egnn}.
    \item We used the same train/validation/test splits as introduced in EGNN\citep{satorras2021egnn}. See Appendix \ref{app:ablation_qm9} for further motivation and details.
    \item For all the configurations we normalize the geometric invariants via batch normalization. That includes the normalization of pairwise distances, Hausdorff distances, and volumes. Moreover, we perform gradient clipping to a maximum norm of gradients equal to $1.0$. 
\end{itemize}

\begin{table}[tbp]
\centering
\caption{Hyperparameters for ETNN Model in the Molecular Task}
\label{tab:etnn_hyperparameters}
\begin{tabular}{lr}
\toprule
\textbf{Hyperparameter} & \textbf{Value} \\
\midrule
% \textbf{Optimizer} & \\
Optimizer & Adam \\
Initial Learning Rate & [$5*10^{-4}$, $10^{-3}$]  \\
Learning Rate Scheduler & Cosine Annealing \\
Weight Decay & 1e-5 \\
Batch Size & 96 \\
Epochs & $[100, 200, 350,1000]$ \\
\textbf{Model} & \\
Number of Message Passing Layers & [4, 7, 10]\\
Hidden Units per Layer & $[70, 104, 128, 182]$ \\
Activation Function & SiLU \\
Invariant Normalization & [True, False] \\
Gradient Clipping & [True, False] \\
\bottomrule
\end{tabular}
\end{table}

\subsection{Hyperlocal Air Pollution Downscaling}
\label{app:geospatial-training-details}

\subsubsection{Dataset Statistics}
\label{app:geospatial:stats}

We create a geospatial CC using the neighborhood functions described by \cref{eq:adjacencies-spatial}. The hyperlocal air measurements points (target) are taken from \citep{wang2023hyperlocal}. The targets are aggregated at the desired resolution of $0.0002^\circ \sim 22 m$. After removing roads and tracts that do not contain any measurements, the resulting dataset contains 3,946 point measurements, 550 roads, and 151 census tracts, corresponding to 0-, 1- and 2-cells, respectively.

\subsubsection{Cell Features}\label{app:geospatial:features}

The learning targets are the fine-resolution PM 2.5 from \citep{wei2023long} as described in the main text. As such, this is a semisupervised node regression task. We now describe the features of each rank. For the 0-cells, we generate point-level features using the \texttt{osmnx} Python package \citep{osmnx}, calculating metrics such as density and distance to points of interest within a buffer radius. We remove roads and tracts that do not contain any measurements.  We consider only points that correspond to weekdays between 9 and 5 pm. For 1-cells, we use the 2021 annual average daily traffic information \citep{aadt}. For 2-cells we generate land-use features from the Primary Land Use Tax Lot Output (PLUTO)\citep{pluto} and previous coarse-resolution estimates of PM 2.5 at the census-tract level are taken from Wei et al. \citep{wei2023long}. Values are monthly averages over the last quarter of 2020, projected to the census tract level using bilinear interpolation from their native 1 km resolution. Below is a list of the employed features. \cref{tab:geospatial-data-sources} summarizes the data sources.

\begin{enumerate}[leftmargin=*]
    \item \textbf{0-cells} features (points; OpenStreetMap (OSMnx) and Wang et al.):
    \begin{itemize}
        \item \textit{Intersection density}: Using the \texttt{osmnx} Python package, we calculate the density of intersections within a 100m buffer radius \citep{osmnx}.
        \item \textit{Nearest school distance}: Calculated using \texttt{osmnx} to determine the distance to the nearest school \citep{osmnx}.
        \item \textit{Restaurant density}: Using \texttt{osmnx}, we calculate the density of restaurants within a 300m radius \citep{osmnx}.
        \item \textit{Road density}: Derived from \texttt{osmnx} by calculating the density of roads within a buffer radius \citep{osmnx}.
        \item \textit{Time of day}: We include the median time of measurements by Wang et al. \citep{wang2023hyperlocal} for each of the points in a $0.0002^\circ \times 0.0002^\circ$ area.
    \end{itemize}
    \item \textbf{1-cells} features (road/polylines; AADT, NYC, Dept. of Transportation):
    \begin{itemize}
        \item \textit{AADT 2021}: Annual Average Daily Traffic data for 2021  \citep{aadt}.
        \item \textit{PCT 2021}: Percentage of trucks traffic \citep{aadt}.
    \end{itemize}
    \item \textbf{2-cells} features (census tracts):
    \item[] \textit{PLUTO, NYC Dept. of Planning \citep{pluto}}
    \begin{itemize}
        \item \textit{Assessment total}
        \item \textit{Building area}
        \item \textit{Commercial area (\% area)}
        \item \textit{Factory area (\% area)}
        \item \textit{Mixed residential/commercial (\% tax lots)}
        \item \textit{Multi-family elevator (\% tax lots)}
        \item \textit{One/two family (\% tax lots)}
        \item \textit{Open space/outdoor recreation (\% tax lots)}
        \item \textit{Parking/vacant (\% tax lots)}
        \item \textit{Public facilities/institutions (\% tax lots)}
        \item \textit{Transportation/utility (\% tax lots)}
        \item \textit{Lot area (sq. km)}
        \item \textit{Lot depth (m)}
        \item \textit{Office area (\% area)}
        \item \textit{Residential area (\% area)}
        \item \textit{Retail area (\% area)}
    \end{itemize}
    \textit{Other sources}
    \begin{itemize}
        \item \textit{Coarse $\textrm{PM}_{2.5}$ predictor} 
        \citep{di2019ensemble}: Coarse-resolution monthly estimates of mean $\textrm{PM}_{2.5}$ by census tract.
    \end{itemize}
\end{enumerate}

\begin{table}
\centering
\caption{Data sources of the Geospatial Air Pollution Downscaling Benchmark}
\label{tab:geospatial-data-sources}
\scalebox{0.8}{
\begin{tabular}{lll}
\toprule
Source & Citation & LICENSE \\
\midrule
Wang et al. & \citep{wang2023hyperlocal} & CC 4.0 \\
PLUTO, NYC Dept. of Planning & \citep{pluto} & Open Data \\
AADT, NYC Dept. of Transportation & \citep{aadt} & Open Data \\
OpenStreetMap (OSMnx) & \citep{osmnx} &  Open Data \\
Wei et al. & \citep{wei2023long} & CC 4.0 \\
\bottomrule
\end{tabular}
}
\end{table}

\subsubsection{Parameter Configuration}
\label{app:geospatial:config}

The training and evaluation scheme is similar to the molecular case. For training, we use the Huber loss which provides additional robustness to outliers \citep{huber1964}. The evaluation metric is obtained similarly to the molecular task. Since the data consists of a single graph, we use a standard procedure for semisupervised tasks on spatial data and generate training, validation, and test masks. To do so while being mindful of spatial leaks and correlations, we randomly select 70\% of census tracts for training and split the remaining 30\% equally in test and validation tracts. Since points in the dataset only belong to a single census tract, we can use these splits to partition points in their corresponding sets, ensuring that neighboring points belong, on average, to the same split. While all the nodes (training/validation/test) are given as inputs to the model, the loss function for backpropagation is only computed from the error in the training nodes, as in any standard semisupervised setting \citep{kipf2016semi}.  

The evaluation metric is the root-mean-squared error (RMSE) in the test split from the best model of each baseline. The base model is determined by the lowest RMSE in the validation RMSE at any point during training from a sweep over the hyperparameters described in \cref{tab:etnn_hyperparameters-geospatial}.
The test split only reports the final metric to ensure validity. The EGNN baseline used the same architecture but no lower/upper adjacencies. It also has higher-order features concatenated with point-level features. The multi-layer perception and linear baselines are evaluated similarly. The MLP baseline is implemented with 5 layers to match the number of hidden layers in the ETNN and graph-based baselines.

\rev{
\subsection{Geometric Features}
We note that for both the molecular property prediction, and the hyperlocal air pollution downscaling task we are using coordinates as \textit{geometric features}. Specifically, we employ $0$-cell geometric features, which are inherited in the higher ranks. Regarding the molecular property prediction, we are using the 3D atom coordinates, while for the geospatial task, we are using the latitude and longitude variables.}

\paragraph{Computational Resources} All experiments that produced the reported configurations in both tasks were conducted using Nvidia A100-SXM4-40GB GPUs. Each system has a capacity of 80GB RAM, 40GB GPU memory, and $6$ CPUs per unit.

\begin{table}[tbp]
\centering
\caption{Hyperparameters for ETNN Model in the Air Pollution Downscaling Task}
\label{tab:etnn_hyperparameters-geospatial}
\begin{tabular}{lr}
\toprule
\textbf{Hyperparameter} & \textbf{Value} \\
\midrule
% \textbf{Optimizer} & \\
Optimizer & Adam \\
Initial Learning Rate & [$10^{-3}$, $10^{-2}$]  \\
Learning Rate Scheduler & Cosine Annealing \\
Weight Decay & 1e-4 \\
Batch Size & 1 \\
Epochs & $500$ \\
\textbf{Model} & \\
Number of Message Passing Layers & 4\\
Hidden Units per Layer & $[4, 32]$ \\
Activation Function & SiLU \\
Invariant Normalization & False \\
Gradient Clipping & True \\
Dropout & [0.025, 0.25] \\
\bottomrule
\end{tabular}
\end{table}

\section{Ablation Study and Reproducibility for Molecular Property Prediction}\label{app:ablation_qm9}
In this section, we present the whole set of experiments we carried out to obtain our results in Table \ref{tab:qm9_results_part1}, i.e., all the different configurations of ETNN we tested. Moreover, we want to further specify some details that are useful to fairly evaluate our results and ensure reproducibility. 

\noindent\textbf{Reproducibility.} When we started to set up the experiments, we found a poor reproducibility level for the molecular prediction task on QM9 in the literature. In particular, practically every model of the ones listed in Table \ref{tab:qm9_results_part1} uses different data splits. The two closest models to ETNN are EGNN \citep{satorras2021egnn} and EMPSN \citep{eijkelboom2023empsn}, which are also particular cases of our framework. We were not able to reproduce the results of EMPSN (and this is also the reason we did not include it in Table \ref{tab:qm9_results_part1}), while we were able to reproduce the EGNN results using the original codebase. For all of the reasons above, we decided to employ the same data splits as EGNN\citep{satorras2021egnn}. \new{To guarantee a fair comparison with EMPSN, however, in this appendix we also report the results of EMPSN from the original paper, EMPSN reproduced in TopNets \citep{verma2024topnet}, and EMPSN reproduced with our codebase (as described in the main body).} Again, and as explained in Appendix \ref{sec:graphs}, EGNN can be directly derived from ETNN. Before exploring different ETNN configurations, we aligned our training and architecture parameters with EGNN. We ensured that our ETNN code, configured as an EGNN, could reproduce all the considered properties of the QM9 dataset reported in the original EGNN paper. Moreover, we trained our models for 1000 epochs as in EGNN and EMPSN. Finally, to ensure further consistency, we also trained a configuration of ETNN, dubbed ETNN-graph-W, that is again EGNN derived from our codebase but with a higher number of hidden units, chosen such that the number of parameters of ETNN-graph-W matches the average number of parameters of the other ETNN configurations (1.5M, while the original EGNN has 748k).

\noindent\textbf{Results.} In Tables~\ref{tab:new_qm9_results_1}-~\ref{tab:new_qm9_results_2}, we present the full list of the ETNN configurations we tested, representing also an exhaustive ablation study on all the main components of our architectures. \new{We also add the results of TopNet, presented in the contemporary work \citep{verma2024topnet}.} We grouped the relevant model hyperparameters of ETNN into the following five groups:
\begin{itemize}[leftmargin=*]
    \item \textit{Definition of the Molecular CC.} We study the performance of ETNN when bonds, rings, and functional groups, are included or not as cells in the complex. In Tables~\ref{tab:new_qm9_results_1}-~\ref{tab:new_qm9_results_2}, this is described in the first argument, \textcolor{rank1_blue}{Lifters}. The values of the argument indicate what are the considered cells, e.g. \textcolor{rank1_blue}{A+B+F+R} means that atoms, bonds, functional groups, and rings are used as cells. The rank is assigned in increasing order as described in Section \ref{sec:results}.
    \item \textit{Adjacencies.} We study the performance of ETNN when different adjacencies are employed in the collection of neighborhood functions $\cCN$.  In Tables~\ref{tab:new_qm9_results_1}-\ref{tab:new_qm9_results_2}, this is described in the second argument, \textcolor{rank0_red}{Adjacencies}. In particular, we test the usage of $\cN_{A,\textrm{max}}$ as in \eqref{eq:max_adjancency} in conjunction with the up/down adjacencies $\cN_{A,\uparrow}$ and $\cN_{A,\downarrow}$ from \eqref{eq:adjacencies}.  In Tables~\ref{tab:new_qm9_results_1}-~\ref{tab:new_qm9_results_2}, \textcolor{rank0_red}{Adjacencies = max} means that only the max adjacency from \eqref{eq:max_adjancency} is used, while \textcolor{rank0_red}{Adjacencies = max,$\uparrow$,$\downarrow$} means that both the max adjacency from \eqref{eq:max_adjancency} and the up/down adjacencies from \eqref{eq:adjacencies} are used (effectively resulting in a heterogeneous setting as described in the body of the paper).
    \item \textit{Incidences.} We study the performance of ETNN when different incidences are employed in the collection of neighborhood functions $\cCN$.In Tables~\ref{tab:new_qm9_results_1}-~\ref{tab:new_qm9_results_2}, this is described in the third argument, \textcolor{purple}{Incidences}. Testing only the he up/down incidences $\cN_{I,\uparrow}$ and $\cN_{I,\downarrow}$ from \eqref{eq:adjacencies} would always result in configurations in which cells of rank $k$ communicate only with cells of ranks $k-1$ and $k+1$. For this reason, we also test the $k$-up/down incidences, defined as:
    \begin{align}\label{eq:k_incidences}
     &\cN_{I,\uparrow_k}(x) = \{y \in \cX| \rk(y) = \rk(x)+k, x \subset y\}, \nonumber \\
     &\cN_{I,\downarrow_k}(x) =\{y \in \cX| \rk(y) = \rk(x)-k, y \subset x\}.
\end{align}
In this way, cells can communicate in a more flexible way across ranks. In Tables ~\ref{tab:new_qm9_results_1}-~\ref{tab:new_qm9_results_2}, \textcolor{purple}{Incidences = all} means that the $k$-up/down incidences from \eqref{eq:k_incidences} are used for all the possible $k$, i.e., each rank communicates with all the other ranks; \textcolor{purple}{Incidences = 1}  means that the up/down incidences from \eqref{eq:incidences} (equivalently, $k=1$ in \eqref{eq:k_incidences}) are used; \textcolor{purple}{Incidences = 0} means that no incidences are used, i.e., there are no inter-rank message exchanges and the embeddings are aggregated only in the readout layer.

\item \textit{Higher-order Features.}  We study the performance of ETNN when higher-order features (not only atom features) are employed. In Tables~\ref{tab:new_qm9_results_1}-~\ref{tab:new_qm9_results_2}, this is described in the fourth binary argument, \textcolor{rank2_green}{HO Features}. If \textcolor{rank2_green}{HO Features = 1}, higher-order features (bonds, ring and/or functional groups based on the complex) are fed to the model.
 \item \textit{Virtual Cell.} We study the performance of ETNN when the virtual cell is employed. In Tables~\ref{tab:new_qm9_results_1}-~\ref{tab:new_qm9_results_2}, this is described in the fifth binary argument, \textcolor{brown}{Virtual Cell}. If \textcolor{brown}{Virtual Cell = 1}, the virtual cell is included in the complex.
\end{itemize}
 \begin{remark}
      In all the configurations, we don't allow the virtual cell to exchange messages or being included in the readout layer. The reason is that  we mostly use it in the experiments to obtain a fully connected computational graph on each rank when it is used together with the max adjacency from \eqref{eq:max_adjancency}. However, it remains an interesting object, as it is i) a straightforward way to include whole molecule-level features in ETNN by assigning them as features to the virtual cell, and ii)  useful to improve expressiveness and generality of ETNN, as shown in Appendix \ref{sec:expressivity}. Therefore, when \textcolor{rank0_red}{Adjacencies = max} and \textcolor{brown}{Virtual Cell=1}, the resulting configuration is equivalent to not including the virtual cell in the complex and using the neighborhood function
$\cN_{A,\textrm{all}}(x) = \{y \in \cX | \rk(y)=\rk(x)\}\label{eq:all_adjancency}$.
 \end{remark}
To provide the reader with some usage examples of ETNN employing the above hyperparameter grouping (that we follow in our codebase), EGNN as in the original paper \citep{satorras2021egnn} can be obtained setting \textcolor{rank1_blue}{Lifters = A}, \textcolor{rank0_red}{Adjacencies = max}, \textcolor{purple}{Incidences = 0}, \textcolor{rank2_green}{HO Features = 0}, \textcolor{brown}{Virtual Cell = 1}; an instance of EMPSN \citep{eijkelboom2023empsn} can be obtained setting \textcolor{rank1_blue}{Lifters = A+B+R}, \textcolor{rank0_red}{Adjacencies = $\uparrow$, $\downarrow$}, \textcolor{purple}{Incidences = 1}, \textcolor{rank2_green}{HO Features = 1}, \textcolor{brown}{Virtual Cell = 0}, and considering only rings of length up to three; an equivariant version of  \citep{bodnarcwnet} can be obtained setting \textcolor{rank1_blue}{Lifters = A+B+R}, \textcolor{rank0_red}{Adjacencies = $\uparrow$, $\downarrow$}, \textcolor{purple}{Incidences = 1}, \textcolor{rank2_green}{HO Features = 1}, \textcolor{brown}{Virtual Cell = 0}, and considering rings of arbitrary length.

\begin{table}[th]
  \centering
    \caption{Mean Absolute Error for the first six molecular property predictions in QM9.}
  \label{tab:new_qm9_results_1}

\resizebox{1\textwidth}{!}{
  \begin{tabular}{l | c c c c c c}
  \toprule
    Task & $\alpha$ & $\Delta \varepsilon$ & $\varepsilon_{\mathrm{HOMO}}$ & $\varepsilon_{\mathrm{LUMO}}$ & $\mu$ & $C_{\nu}$ \\
    Units & bohr$^3$ & meV & meV & meV & D & cal/mol K \\
    \midrule
    NMP & .092 & 69 & 43 & 38 & .030 & .040 \\
    Schnet & .235 & 63 & 41 & 34 & .033 & .033 \\
    Cormorant & .085 & 61 & 34 & 38 & .038 & .026 \\
    L1Net & .088 & 68 & 46 & 35 & .043 & .031 \\
    LieConv & .084 & 49 & 30 & 25 & .032 & .038 \\
    DimeNet++* & .044 & 33 & 25 & 20 & .030 & .023 \\
    TFN & .223 & 58 & 40 & 38 & .064 & .101 \\
    SE(3)-Tr. & .142 & 53 & 35 & 33 & .051 & .054 \\
    EGNN & .071 & 48 & 29 & 25 & .029 & .031 \\
    \midrule
    EMPSN\citep{eijkelboom2023empsn} & .066 & 37 & 25 & 20 & .023 & .024 \\
    EMPSN* & .061 & 42 & 29 & 25 & .030 & .028 \\
    EMPSN\citep{verma2024topnet} & .066 & 51 & 32 & 25 & .031 & .027 \\
    TopNet (VC, discrete)\citep{verma2024topnet} & .083 & 47 & 37 & 24 & .035 & .032 \\
    TopNet (VC, continuous)\citep{verma2024topnet} & .075 & 49 & 36 & 27 & .030 & .035 \\
    TopNet (RePHINE, discrete)\citep{verma2024topnet} & .072 & 57 & 33 & 28 & .029 & .028 \\
    TopNet (RePHINE, continuous)\citep{verma2024topnet} & .070 & 50 & 35 & 25 & .032 & .030 \\
    \midrule
    \textbf{ETNN (Configurations)} & & & & & & \\
    \textbf{[ \textcolor{rank1_blue}{Lifters} | \textcolor{rank0_red}{Adjacencies} | \textcolor{purple}{Incidences} | \textcolor{rank2_green}{HO Features} |  \textcolor{brown}{Virtual Cell} ]} & & & & & & \\
 1: [\textcolor{rank1_blue}{A+B} | \textcolor{rank0_red}{max} | \textcolor{purple}{0} | \textcolor{rank2_green}{1} |  \textcolor{brown}{1}]           &   .074 &   46 &    30 &    24 &  .03 & .032 \\
2: [\textcolor{rank1_blue}{A+B+F} | \textcolor{rank0_red}{max} | \textcolor{purple}{0} | \textcolor{rank2_green}{1} |  \textcolor{brown}{1}]          &   .078 &   50 &    32 &    25 & .029 & .032 \\
3: [\textcolor{rank1_blue}{A+B+R} | \textcolor{rank0_red}{max} | \textcolor{purple}{0} | \textcolor{rank2_green}{1} |  \textcolor{brown}{1}]          &   .079 &   50 &    34 &    25 & .029 & .033 \\
4: [\textcolor{rank1_blue}{A+B+F +R} | \textcolor{rank0_red}{max} | \textcolor{purple}{0} | \textcolor{rank2_green}{1} |  \textcolor{brown}{1}]          &   .077 &   54 &    31 &    27 & .032 & .033 \\
5: [\textcolor{rank1_blue}{A+B} | \textcolor{rank0_red}{max} | \textcolor{purple}{0} | \textcolor{rank2_green}{0} |  \textcolor{brown}{1}]         &   \textbf{.062} &   48 &    27 &    25 &  .03 & .033 \\
6: [\textcolor{rank1_blue}{A+B} | \textcolor{rank0_red}{max} | \textcolor{purple}{all} | \textcolor{rank2_green}{1} |  \textcolor{brown}{1}]        &    .07 &   47 &    26 &    23 & .023 &  \textbf{.03} \\
7: [\textcolor{rank1_blue}{A+B+F} | \textcolor{rank0_red}{max} | \textcolor{purple}{all} | \textcolor{rank2_green}{1} |  \textcolor{brown}{1}]        &   .072 &   47 &    27 &    23 & .024 & .032 \\
8: [\textcolor{rank1_blue}{A+B+R} | \textcolor{rank0_red}{max} | \textcolor{purple}{all} | \textcolor{rank2_green}{1} |  \textcolor{brown}{1}]        &   .071 &   49 &    26 &    23 & .023 & .032 \\
9: [\textcolor{rank1_blue}{A+B+F+R} | \textcolor{rank0_red}{max} | \textcolor{purple}{all} | \textcolor{rank2_green}{1} |  \textcolor{brown}{1}]         &   .069 &   46 &    \textbf{26} &    23 & .025 & .033 \\
10:  [\textcolor{rank1_blue}{A+B+F+R} | \textcolor{rank0_red}{max, $\uparrow$,$\downarrow$} | \textcolor{purple}{all} | \textcolor{rank2_green}{1} |  \textcolor{brown}{0}]           &   .157 &   72 &    45 &    46 &  .31 & .053 \\
11: [\textcolor{rank1_blue}{A+B} | \textcolor{rank0_red}{max} | \textcolor{purple}{0} | \textcolor{rank2_green}{1} |  \textcolor{brown}{0}]          &   .149 &   69 &    45 &    45 & .298 & .048 \\
12: [\textcolor{rank1_blue}{A} | \textcolor{rank0_red}{max} | \textcolor{purple}{0} | \textcolor{rank2_green}{0} |  \textcolor{brown}{1}] (ETNN-graph)         &    .07 &   \textbf{45} &    28 &    23 & .029 & .035 \\
13: [\textcolor{rank1_blue}{A} | \textcolor{rank0_red}{max} | \textcolor{purple}{0} | \textcolor{rank2_green}{0} |  \textcolor{brown}{1}] (ETNN-graph-W) &    .07 &   49 &    28 &    25 &  .03 & .036 \\
14: [\textcolor{rank1_blue}{A+B+F+R} | \textcolor{rank0_red}{max} | \textcolor{purple}{all} | \textcolor{rank2_green}{1} |  \textcolor{brown}{0}]  &    .13 &   66 &    44 &    42 & .285 & .046 \\
15: [\textcolor{rank1_blue}{A+B+R} | \textcolor{rank0_red}{max} | \textcolor{purple}{all} | \textcolor{rank2_green}{1} |  \textcolor{brown}{1}] (triangles only, EMPSN-like)     &   .071 &   46 &    27 &    24 & \textbf{.022} & .032 \\
16: [\textcolor{rank1_blue}{A+B+F+R} | \textcolor{rank0_red}{max} | \textcolor{purple}{all} | \textcolor{rank2_green}{1} |  \textcolor{brown}{1}] (no geometric features, CCMPN-like)       &   .191 &   85 &    62 &    59 & .339 & .078 \\
17: [\textcolor{rank1_blue}{A+B+F+R} | \textcolor{rank0_red}{max} | \textcolor{purple}{all} | \textcolor{rank2_green}{1} |  \textcolor{brown}{0}] (no geometric features,  CCMPN-like)        &   .184 &   86 &    61 &    59 & .351 & .075 \\
18:  [\textcolor{rank1_blue}{A+B} | \textcolor{rank0_red}{max} | \textcolor{purple}{1} | \textcolor{rank2_green}{1} |  \textcolor{brown}{1}]         &   .069 &   47 &    26 &    \textbf{22} & .024 &  .03 \\
19: [\textcolor{rank1_blue}{A+B+F} | \textcolor{rank0_red}{max} | \textcolor{purple}{1} | \textcolor{rank2_green}{1} |  \textcolor{brown}{1}]         &   .067 &   44 &    27 &    23 & .023 & .032 \\
20: [\textcolor{rank1_blue}{A+B+R} | \textcolor{rank0_red}{max} | \textcolor{purple}{1} | \textcolor{rank2_green}{1} |  \textcolor{brown}{1}]       &   .069 &   46 &    26 &    24 & .023 & .033 \\
21: [\textcolor{rank1_blue}{A+B+F+R} | \textcolor{rank0_red}{max} | \textcolor{purple}{1} | \textcolor{rank2_green}{1} |  \textcolor{brown}{1}]           &   .075 &   45 &    27 &    23 & .024 & .034 \\
23: [\textcolor{rank1_blue}{A+B+F+R} | \textcolor{rank0_red}{max} | \textcolor{purple}{1} | \textcolor{rank2_green}{1} |  \textcolor{brown}{0}]      &   .139 &   68 &    41 &    43 & .281 & .046 \\
25: [\textcolor{rank1_blue}{A+B+F+R} | \textcolor{rank0_red}{max} | \textcolor{purple}{1} | \textcolor{rank2_green}{1} |  \textcolor{brown}{1}] (no geometric features,  CCMPN-like)       &   .193 &   90 &    60 &    63 & .359 & .075 \\
26: [\textcolor{rank1_blue}{A+B} | \textcolor{rank0_red}{max, $\uparrow$,$\downarrow$} | \textcolor{purple}{0} | \textcolor{rank2_green}{1} |  \textcolor{brown}{1}]        &   .187 &   85 &    61 &    58 & .342 & .074 \\
27: [\textcolor{rank1_blue}{A+B+F} | \textcolor{rank0_red}{max, $\uparrow$,$\downarrow$} | \textcolor{purple}{0} | \textcolor{rank2_green}{1} |  \textcolor{brown}{1}]        &  1.179 &  438 &   282 &   333 & .772 & .678 \\
28: [\textcolor{rank1_blue}{A+B+R} | \textcolor{rank0_red}{max, $\uparrow$,$\downarrow$} | \textcolor{purple}{0} | \textcolor{rank2_green}{1} |  \textcolor{brown}{1}] (equivariant CWN-like)        &  1.143 &  352 &   262 &   262 & .713 & .628 \\
29: [\textcolor{rank1_blue}{A+B} | \textcolor{rank0_red}{max, $\uparrow$,$\downarrow$} | \textcolor{purple}{0} | \textcolor{rank2_green}{0} |  \textcolor{brown}{1}]       &  1.038 &  332 &   237 &   284 & .721 & .548 \\
30: [\textcolor{rank1_blue}{A+B} | \textcolor{rank0_red}{max, $\uparrow$,$\downarrow$} | \textcolor{purple}{0} | \textcolor{rank2_green}{0} |  \textcolor{brown}{1}]      &   .984 &  267 &   209 &   229 & .657 & .496 \\
31: [\textcolor{rank1_blue}{A+B} | \textcolor{rank0_red}{max, $\uparrow$,$\downarrow$} | \textcolor{purple}{1} | \textcolor{rank2_green}{0} |  \textcolor{brown}{1}]       &  1.224 &  410 &   251 &   307 & .712 & .848 \\
32: [\textcolor{rank1_blue}{A+B+F} | \textcolor{rank0_red}{max, $\uparrow$,$\downarrow$} | \textcolor{purple}{1} | \textcolor{rank2_green}{0} |  \textcolor{brown}{1}]       &   .141 &   74 &    50 &    48 & .312 & .049 \\
33: [\textcolor{rank1_blue}{A+B+R} | \textcolor{rank0_red}{max, $\uparrow$,$\downarrow$} | \textcolor{purple}{1} | \textcolor{rank2_green}{0} |  \textcolor{brown}{1}]      &   .143 &   75 &    48 &    47 & .337 & .048 \\
34: [\textcolor{rank1_blue}{A+B+F+R} | \textcolor{rank0_red}{max, $\uparrow$,$\downarrow$} | \textcolor{purple}{1} | \textcolor{rank2_green}{0} |  \textcolor{brown}{1}]      &   .129 &   69 &    44 &    45 & .297 & .046 \\
35: [\textcolor{rank1_blue}{A+B+F+R} | \textcolor{rank0_red}{max, $\uparrow$,$\downarrow$} | \textcolor{purple}{1} | \textcolor{rank2_green}{0} |  \textcolor{brown}{0}]      &   .134 &   71 &    46 &    45 & .302 & .046 \\
36: [\textcolor{rank1_blue}{A+B+R} | \textcolor{rank0_red}{max, $\uparrow$,$\downarrow$} | \textcolor{purple}{1} | \textcolor{rank2_green}{0} |  \textcolor{brown}{0}] (triangles only, heterogeneous EMPSN-like)        &   .132 &   71 &    45 &    45 & .309 & .045 \\
37: [\textcolor{rank1_blue}{A+B+R} | \textcolor{rank0_red}{max, $\uparrow$,$\downarrow$} | \textcolor{purple}{1} | \textcolor{rank2_green}{1} |  \textcolor{brown}{0}] (no geometric features, CWN-like)       &   .145 &   75 &    49 &    49 & .324 & .049 \\
38: [\textcolor{rank1_blue}{A+B+F+R} | \textcolor{rank0_red}{max, $\uparrow$,$\downarrow$} | \textcolor{purple}{1} | \textcolor{rank2_green}{0} |  \textcolor{brown}{0}] (no geometric features, CCMPN-like)      &   .184 &   90 &    62 &    62 & .371 & .077 \\
39: [\textcolor{rank1_blue}{A+B+F+R} | \textcolor{rank0_red}{max, $\uparrow$,$\downarrow$} | \textcolor{purple}{1} | \textcolor{rank2_green}{0} |  \textcolor{brown}{0}] (no geometric features,  CCMPN-like)      &    .19 &   92 &    62 &    60 &  .36 & .076 \\
\midrule
    \textbf{Best ETNN Configuration} & & & & & & \\
    ETNN\textbf{-*} & .062 & 45 & 26 & 22 & .022 & .030 \\
    \hdashline
    Improvement over EGNN & \textcolor{OliveGreen}{\textbf{-13\%}} &  \textcolor{OliveGreen}{\textbf{-6\%}} & \textcolor{OliveGreen}{\textbf{-10\%}} & \textcolor{OliveGreen}{\textbf{-12\%}} & \textcolor{OliveGreen}{\textbf{-26\% }}& \textcolor{OliveGreen}{\textbf{-3\%}} \\
    \bottomrule
  \end{tabular}}
\end{table}

\begin{table}[th]
  \centering
    \caption{Mean Absolute Error for the last five molecular property predictions in QM9.}
  \label{tab:new_qm9_results_2}

\resizebox{1\textwidth}{!}{
  \begin{tabular}{l | c c H c c c}
  \toprule
    Task & $G$ & $H$ & $R^2$ & $U$ & $U_0$ & ZPVE \\
    Units & meV & meV & bohr$^3$ & meV & meV & meV \\
    \midrule
    NMP & 19 & 17 & .180 & 20 & 20 & 1.50 \\
    Schnet & 14 & 14 & .073 & 19 & 14 & 1.70 \\
    Cormorant & 20 & 21 & .961 & 21 & 22 & 2.03  \\
    L1Net & 14 & 14 & .354 & 14 & 13 &  1.56 \\
    LieConv & 22 & 24 & .800 & 19 & 19 & 2.28 \\
    DimeNet++* & 8 & 7 & .331 & 6 & 6 & 1.21  \\
    TFN & - & - & - & - & - & - \\
    SE(3)-Tr. & - & - & - & - & - & -  \\
    EGNN & 12 & 12 & .106 & 12 & 11 & 1.55  \\
    \midrule
    EMPSN\citep{eijkelboom2023empsn} & 6 & 9 & .101 & 7 & 10 & 1.37  \\
    EMPSN* & 11 & 11 & .403 & 10 & 10 & 1.46  \\
    EMPSN\citep{verma2024topnet} & - & - & .114 & - & - & 1.44  \\
    TopNet (VC, discrete)\citep{verma2024topnet} & - & - & .125 & - & - & 1.45 \\
    TopNet (VC, continuous)\citep{verma2024topnet} & - & - & .129 & - & - & 1.43 \\
    TopNet (RePHINE, discrete)\citep{verma2024topnet} & - & - & .132 & - & - & 1.38 \\
    TopNet (RePHINE, continuous)\citep{verma2024topnet} & - & - & .118 & - & - & 1.37 \\
    \midrule
    \textbf{ETNN (Configurations)} & & & & & & \\
    \textbf{[ \textcolor{rank1_blue}{Lifters} | \textcolor{rank0_red}{Adjacencies} | \textcolor{purple}{Incidences} | \textcolor{rank2_green}{HO Features} |  \textcolor{brown}{Virtual Cell} ]} & & & & & & \\
 1: [\textcolor{rank1_blue}{A+B} | \textcolor{rank0_red}{max} | \textcolor{purple}{0} | \textcolor{rank2_green}{1} |  \textcolor{brown}{1}]  &   15 &   13 &    .786 &   14 &   13 &   1.75 \\
 2: [\textcolor{rank1_blue}{A+B+F} | \textcolor{rank0_red}{max} | \textcolor{purple}{0} | \textcolor{rank2_green}{1} |  \textcolor{brown}{1}]            &   14 &   12 &    .714 &   12 &   12 &  1.733 \\
 3: [\textcolor{rank1_blue}{A+B+R} | \textcolor{rank0_red}{max} | \textcolor{purple}{0} | \textcolor{rank2_green}{1} |  \textcolor{brown}{1}]            &   15 &   15 &    .706 &   14 &   14 &  1.763 \\
 4: [\textcolor{rank1_blue}{A+B+F +R} | \textcolor{rank0_red}{max} | \textcolor{purple}{0} | \textcolor{rank2_green}{1} |  \textcolor{brown}{1}]            &   15 &   15 &    .728 &   15 &   15 &  1.749 \\
 5: [\textcolor{rank1_blue}{A+B} | \textcolor{rank0_red}{max} | \textcolor{purple}{0} | \textcolor{rank2_green}{0} |  \textcolor{brown}{1}]            &   13 &   12 &    .481 &   12 &   12 &  \textbf{1.589} \\
 6: [\textcolor{rank1_blue}{A+B} | \textcolor{rank0_red}{max} | \textcolor{purple}{all} | \textcolor{rank2_green}{1} |  \textcolor{brown}{1}]            &   \textbf{11} &   12 &    .642 &   12 &   \textbf{11} &  1.648 \\
 7: [\textcolor{rank1_blue}{A+B+F} | \textcolor{rank0_red}{max} | \textcolor{purple}{all} | \textcolor{rank2_green}{1} |  \textcolor{brown}{1}]            &   12 &   12 &    .666 &   12 &   12 &   1.66 \\
 8: [\textcolor{rank1_blue}{A+B+R} | \textcolor{rank0_red}{max} | \textcolor{purple}{all} | \textcolor{rank2_green}{1} |  \textcolor{brown}{1}]            &   11 &   13 &    .649 &   \textbf{11} &   13 &  1.668 \\
9: [\textcolor{rank1_blue}{A+B+F+R} | \textcolor{rank0_red}{max} | \textcolor{purple}{all} | \textcolor{rank2_green}{1} |  \textcolor{brown}{1}]             &   12 &   12 &    .636 &   13 &   11 &  1.742 \\
% 10          &   30 &   31 &  10.697 &   31 &   31 &  2.546 \\
11: [\textcolor{rank1_blue}{A+B} | \textcolor{rank0_red}{max} | \textcolor{purple}{0} | \textcolor{rank2_green}{1} |  \textcolor{brown}{0}]            &   26 &   27 &  10.229 &   27 &   27 &  2.275 \\
12: [\textcolor{rank1_blue}{A} | \textcolor{rank0_red}{max} | \textcolor{purple}{0} | \textcolor{rank2_green}{0} |  \textcolor{brown}{1}]  (ETNN-graph)         &   15 &   15 &    .415 &   14 &   16 &  1.658 \\
13: [\textcolor{rank1_blue}{A} | \textcolor{rank0_red}{max} | \textcolor{purple}{0} | \textcolor{rank2_green}{0} |  \textcolor{brown}{1}] (ETNN-graph-W)          &   14 &   14 &    \textbf{.412} &   14 &   13 &  1.624 \\
14: [\textcolor{rank1_blue}{A+B+F+R} | \textcolor{rank0_red}{max} | \textcolor{purple}{all} | \textcolor{rank2_green}{1} |  \textcolor{brown}{0}]          &   23 &   24 &   9.052 &   24 &   24 &  2.278 \\
15: [\textcolor{rank1_blue}{A+B+R} | \textcolor{rank0_red}{max} | \textcolor{purple}{all} | \textcolor{rank2_green}{1} |  \textcolor{brown}{1}] (triangles only, EMPSN-like)          &   12 &   12 &    .643 &   11 &   12 &  1.703 \\
16: [\textcolor{rank1_blue}{A+B+F+R} | \textcolor{rank0_red}{max} | \textcolor{purple}{all} | \textcolor{rank2_green}{1} |  \textcolor{brown}{1}] (no geometric features, CCMPN-like)            &   32 &   34 &  13.487 &   35 &   32 &  3.268 \\
17: [\textcolor{rank1_blue}{A+B+F+R} | \textcolor{rank0_red}{max} | \textcolor{purple}{all} | \textcolor{rank2_green}{1} |  \textcolor{brown}{0}] (no geometric features, CCMPN-like)              &   32 &   34 &   13.17 &   34 &   34 &  3.202 \\
18: [\textcolor{rank1_blue}{A+B} | \textcolor{rank0_red}{max} | \textcolor{purple}{1} | \textcolor{rank2_green}{1} |  \textcolor{brown}{1}] &   12 &   12 &    .703 &   11 &   13 &   1.65 \\
19: [\textcolor{rank1_blue}{A+B+F} | \textcolor{rank0_red}{max} | \textcolor{purple}{1} | \textcolor{rank2_green}{1} |  \textcolor{brown}{1}]          &   13 &   14 &    .703 &   12 &   13 &  1.644 \\
20: [\textcolor{rank1_blue}{A+B+R} | \textcolor{rank0_red}{max} | \textcolor{purple}{1} | \textcolor{rank2_green}{1} |  \textcolor{brown}{1}]          &   12 &   13 &    .604 &   12 &   11 &  1.714 \\
21: [\textcolor{rank1_blue}{A+B+F+R} | \textcolor{rank0_red}{max} | \textcolor{purple}{1} | \textcolor{rank2_green}{1} |  \textcolor{brown}{1}]          &   12 &   \textbf{12} &    .666 &   13 &   11 &  1.694 \\
23: [\textcolor{rank1_blue}{A+B+F+R} | \textcolor{rank0_red}{max} | \textcolor{purple}{1} | \textcolor{rank2_green}{1} |  \textcolor{brown}{0}]            &   24 &   24 &   8.583 &   24 &   24 &  2.183 \\
25: [\textcolor{rank1_blue}{A+B+F+R} | \textcolor{rank0_red}{max} | \textcolor{purple}{1} | \textcolor{rank2_green}{1} |  \textcolor{brown}{1}] (no geometric features, CCMPN-like)             &   32 &   35 &  14.318 &   35 &   34 &  3.361 \\
26: [\textcolor{rank1_blue}{A+B+F+R} | \textcolor{rank0_red}{max} | \textcolor{purple}{1} | \textcolor{rank2_green}{1} |  \textcolor{brown}{0}] &   32 &   35 &  12.975 &   34 &   34 &  3.183 \\
27: [\textcolor{rank1_blue}{A+B} | \textcolor{rank0_red}{max, $\uparrow$,$\downarrow$} | \textcolor{purple}{0} | \textcolor{rank2_green}{1} |  \textcolor{brown}{1}]      &  427 &  432 &  102.54 &  434 &  431 & 17.279 \\
28: [\textcolor{rank1_blue}{A+B+R} | \textcolor{rank0_red}{max, $\uparrow$,$\downarrow$} | \textcolor{purple}{0} | \textcolor{rank2_green}{1} |  \textcolor{brown}{1}] (equivariant CWN-like)         &  367 &  371 & 100.441 &  376 &  373 & 16.298 \\
29: [\textcolor{rank1_blue}{A+B+R} | \textcolor{rank0_red}{max, $\uparrow$,$\downarrow$} | \textcolor{purple}{0} | \textcolor{rank2_green}{1} |  \textcolor{brown}{1}]           &  302 &  302 &   92.66 &  302 &  302 & 13.019 \\
30: [\textcolor{rank1_blue}{A+B} | \textcolor{rank0_red}{max, $\uparrow$,$\downarrow$} | \textcolor{purple}{0} | \textcolor{rank2_green}{0} |  \textcolor{brown}{1}]           &  236 &  238 &  91.788 &  237 &  236 & 12.175 \\
31: [\textcolor{rank1_blue}{A+B} | \textcolor{rank0_red}{max, $\uparrow$,$\downarrow$} | \textcolor{purple}{0} | \textcolor{rank2_green}{0} |  \textcolor{brown}{1}]            &  477 &  491 & 109.063 &  485 &  484 & 25.588 \\
32: [\textcolor{rank1_blue}{A+B} | \textcolor{rank0_red}{max, $\uparrow$,$\downarrow$} | \textcolor{purple}{1} | \textcolor{rank2_green}{0} |  \textcolor{brown}{1}]          &   29 &   30 &  10.009 &   29 &   30 &  2.369 \\
33: [\textcolor{rank1_blue}{A+B+F} | \textcolor{rank0_red}{max, $\uparrow$,$\downarrow$} | \textcolor{purple}{1} | \textcolor{rank2_green}{0} |  \textcolor{brown}{1}]          &   28 &   29 &  10.163 &   28 &   30 &  2.227 \\
34: [\textcolor{rank1_blue}{A+B+R} | \textcolor{rank0_red}{max, $\uparrow$,$\downarrow$} | \textcolor{purple}{1} | \textcolor{rank2_green}{0} |  \textcolor{brown}{1}]          &   23 &   24 &   9.128 &   25 &   23 &  2.165 \\
35: [\textcolor{rank1_blue}{A+B+F+R} | \textcolor{rank0_red}{max, $\uparrow$,$\downarrow$} | \textcolor{purple}{1} | \textcolor{rank2_green}{0} |  \textcolor{brown}{1}]          &   25 &   25 &   9.023 &   25 &   24 &  2.178 \\
36: [\textcolor{rank1_blue}{A+B+R} | \textcolor{rank0_red}{max, $\uparrow$,$\downarrow$} | \textcolor{purple}{1} | \textcolor{rank2_green}{0} |  \textcolor{brown}{0}]  (triangles only, heterogeneous EMPSN-like)        &   25 &   26 &   9.356 &   25 &   25 &  2.215 \\
37: [\textcolor{rank1_blue}{A+B+R} | \textcolor{rank0_red}{max, $\uparrow$,$\downarrow$} | \textcolor{purple}{1} | \textcolor{rank2_green}{0} |  \textcolor{brown}{0}] (no geometric features, CWN-like)          &   27 &   28 &    9.79 &   27 &   27 &  2.222 \\
38: [\textcolor{rank1_blue}{A+B+R} | \textcolor{rank0_red}{max, $\uparrow$,$\downarrow$} | \textcolor{purple}{1} | \textcolor{rank2_green}{1} |  \textcolor{brown}{0}] (no geometric features, CCMPN-like)         &   34 &   34 &  14.249 &   34 &   34 &  3.154 \\
39: [\textcolor{rank1_blue}{A+B+F+R} | \textcolor{rank0_red}{max, $\uparrow$,$\downarrow$} | \textcolor{purple}{1} | \textcolor{rank2_green}{0} |  \textcolor{brown}{0}] (no geometric features, CCMPN-like)      &   32 &   33 &  14.351 &   33 &   32 &  3.148 \\
\midrule
    \textbf{Best ETNN Configuration} & & & & & & \\
    ETNN\textbf{-*} & 11 & 12 & .412 & 11 & 11& 1.589 \\
    \hdashline
    Improvement over EGNN & \textcolor{OliveGreen}{\textbf{-8\%}} & {\textbf{0\%}} & \textcolor{red}{\textbf{288\%}} & \textcolor{OliveGreen}{\textbf{-8\%}} & {\textbf{0\% }}& \textcolor{red}{\textbf{2.5\%}} \\
    \bottomrule
  \end{tabular}}
\end{table}
As the reader can notice from Tables ~\ref{tab:new_qm9_results_1}-~\ref{tab:new_qm9_results_2}, several higher-order ETNN configurations reach near-SotA results and significant improvements over the reference EGNNs baselines, i.e., the original EGNN \citep{satorras2021egnn} in the upper part of the tables, and its implementations ETNN-graph and ETNN-graph-W obtained through our codebase. ETNN-graph is the same exact architecture as the original EGNN but trained with gradient clipping and geometric invariants normalization. As mentioned at the beginning of this section, we made sure that ETNN-graph without gradient clipping and normalization almost exactly reproduces the results of the original EGNN.  We report EGNN-graph because it is sometimes able to reach better results than the original EGNN on some targets. Moreover, some configurations that we test are also instances of other architectures like EMPSN \citep{eijkelboom2023empsn} or CWN \citep{bodnarcwnet}, further highlighting the versatility of our framework. \new{Notably, ETNN consistent outperforms TopNet\citep{verma2024topnet} too.} Overall, these results validate our TDL approach and show that the flexibility of CCs is necessary.
\vspace{-.3cm}
\section{Architecture Description}\label{app:architecture}
\vspace{-.2cm}

In this section, we describe the implementation of E(n) Equivariant Topological Neural Networks (ETNNs). The model architecture is kept as similar to EGNN as possible so as to not introduce alternative explanations for differences in model performance. The main components of ETNNs are as follows:

\textbf{Model Initialization}:
        \begin{itemize}
            \item \textit{Invariant Normalizer}: Normalizes invariant features dynamically using batch normalization with no learnable parameters. Each rank is normalized separately.
            \item \textit{Feature Embedding}: Embeds input features into a hidden representation. Input features of each rank are embedded with a separate featurizer.
            \item \textit{Equivariant Message Passing (EMP) Layers}: Repeatedly updates the hidden representations of each cell using messages from neighboring cells. For details, see below.
            \item \textit{Pre-Pool Layer}: Updates the hidden representation of each cell using distinct MLPs for each rank.
            \item \textit{Post-Pool Layer}: MLP applied after the global pooling operation to produce the final output.
        \end{itemize}

 \textbf{Forward Pass}:
    \begin{enumerate}
        \item \textit{Input Handling}:
        \begin{itemize}
            \item Retrieves pre-computed non-geometric cell features, adjacency relationships, and positional information.
        \end{itemize}
        \item \textit{Initial Feature Computation}:
        \begin{itemize}
            \item Computes node and membership features for each rank. Cells with higher rank may either use their own non-geometric features, the mean feature vector of the nodes the cell is composed of, a membership vector denoting the lifter that generated the cell, or any combination of these.
            \item Embeds the chosen \textit{non-geometric} features into a common hidden representation.
        \end{itemize}
        \item \textit{Geometric Invariant Computation}:
        \begin{itemize}
            \item Computes E(n) invariant geometric features using the absolute positions of nodes and the specified \texttt{compute\_invariants} function.
            \item Optionally normalizes the computed invariant features.
        \end{itemize}
        \item \textit{Message Passing}:
        \begin{itemize}
            \item Applies \textit{EMP} layers to update cells' hidden represntations via message passing. For details on message passing, see below.
        \end{itemize}
        \item \textit{Readout and Pooling}:
        \begin{itemize}
            \item Applies pre-pooling MLPs to the hidden representations of cells. Each rank is processed by a different MLP.
            \item If the task is graph-level prediction, hidden representations of cells of same rank are added together to arrive at \textit{rank representations}. These representations are concatenated and a post-pooling MLP is applied to generate the final output.
        \end{itemize}
    \end{enumerate}

 \textbf{Equivariant Message Passing (EMP) Layers}:
 The computation of the EMP layer takes place in the following steps:
 \begin{enumerate}
     \item \textit{Computation of individual messages}:
     \begin{itemize}
         \item For each cell, a message is computed for each of its parents. This message is computed by an \textit{adjacency-specific MLP} that takes as input the hidden representations of the sending and receiving cells, and the E(n) invariant geometric features derived from the absolute positions of the nodes they are composed of. A separate MLP is defined for each rank pair.
     \end{itemize}
     \item \textit{Adjacency-wise aggregation of computed messages}:
     \begin{itemize}
         \item For any given cell, the set of messages it received from its neighbors are grouped according to the rank of the sender. The messages within each group are aggregated using a weighted sum of the messages, where the weights are the nonnegative, scalar outputs of an adjacency-specific MLP that takes as input a message.
     \end{itemize}
     \item \textit{Cell representation update}:
     \begin{itemize}
         \item Finally, an update is computed by feeding the concatenation of the adjacency-wise aggregated messages to a \textit{rank-specific update MLP}. This update is added to the cell's hidden representation.
     \end{itemize}
     \item \textit{(Optional) position update}:
     \begin{itemize}
         \item Optionally, node positions are updated as a weighted sum of differences between the position of the node and its neighboring nodes. The weight is a learnable function of the message that was passed to the node from its neighbor.
     \end{itemize}
 \end{enumerate}
\end{appendix}

\end{document}